\title{Manifold limit for the training of
shallow\\ graph convolutional
neural networks}
\author{Johanna Tengler$^\ast$, Christoph Brune$^\ast$, and José A. Iglesias\thanks{Mathematics of Imaging \& AI, Department of Applied Mathematics, University of Twente, the Netherlands\\ \hspace*{5mm}(\texttt{j.tengler{@}utwente.nl, c.brune{@}utwente.nl, jose.iglesias{@}utwente.nl}).}}
\date{}
\begin{document}

\maketitle

\begin{abstract}
    We study the discrete-to-continuum consistency of the training of shallow graph convolutional neural networks (GCNNs) on proximity graphs of sampled point clouds under a manifold assumption. Graph convolution is defined spectrally via the graph Laplacian, whose low-frequency spectrum approximates that of the Laplace-Beltrami operator of the underlying smooth manifold, and shallow GCNNs of possibly infinite width are linear functionals on the space of measures on the parameter space. From this functional-analytic perspective, graph signals are seen as spatial discretizations of functions on the manifold, which leads to a natural notion of training data consistent across graph resolutions. To enable convergence results, the continuum parameter space is chosen as a weakly compact product of unit balls, with Sobolev regularity imposed on the output weight and bias, but not on the convolutional parameter. The corresponding discrete parameter spaces inherit the corresponding spectral decay, and are additionally restricted by a frequency cutoff adapted to the informative spectral window of the graph Laplacians. Under these assumptions, we prove $\Gamma$-convergence of regularized empirical risk minimization functionals and corresponding convergence of their global minimizers, in the sense of weak convergence of the parameter measures and uniform convergence of the functions over compact sets. This provides a formalization of mesh and sample independence for the training of such networks.
    
    \vspace{0.3cm}\noindent
    \textbf{Keywords: }graph convolutional neural networks, empirical risk minimization, discrete-to-continuum limit, $\Gamma$-convergence, graph Laplacian, Laplace-Beltrami operator
\end{abstract}
\blfootnote{2020 Mathematics Subject Classification (MSC): 68T07, 46N10, 49J45, 58J50}

\listoftodos
\section{Introduction}
Across a variety of machine learning scenarios, it is common to assume that data points lie on a smooth manifold, which is commonly referred to as the manifold assumption. One such scenario involves high-dimensional data where the intrinsic dimensionality is believed to be much lower. Another is when data is sampled on points of a physical surface in three-dimensional space, as can be the case in applications of medical imaging, computer vision, and scientific machine learning. In both situations, local extrinsic distances between data points play a central role in determining the relation between them, and using these explicitly provides both reductions of complexity that makes formulating algorithms on such datasets practical, and improved predictions through respecting the intrinsic structure of the data. This role of such distances is apparent in unsupervised clustering algorithms such as Cheeger cuts \cite{Chu97, Lux07} and in semi-supervised learning \cite{EngHoo19} where only a small number of data points are labeled, and the eventual predictions are in large part determined by the geometry of the assumed underlying smooth manifold. This underlying local geometry also plays a central role in the case in which the inputs or outputs are assumed to be discretizations of functions on surfaces, since building algorithms depending only on distances between points automatically respects basic translational and rotational invariances that should be satisfied by the physical models that one is attempting to approximate.

Constructing geometric proximity graphs over such samples with an appropriately selected neighborhood scale yields a graph Laplacian whose low-frequency spectrum accurately approximates that of the Laplace-Beltrami operator of the underlying manifold (see e.g. \cite{Belkin:2006}, \cite{Burago:2015}, \cite{GarGerHeiSle20}). 
This fact has been extensively leveraged in semi-supervised learning scenarios in which one seeks to assign one label to each data point, and their consistency when the number of data points tends to infinity \cite{TriKapSamSan20, DunSleStuTho20, CalSleTho23}. Here, instead we consider problems taking \emph{graph signals} defined on those data points as input. In the first instance, we are concerned with supervised classification of those graph signals assigning a real-valued label, but we expect our techniques to generalize to more complex outputs.

This type of task coupled with the manifold assumption mentioned above naturally leads to the use of geometric machine learning methods (see \cite{BroBruCohVel21} for a general introduction), which have seen a surge of interest in recent years and in a wide variety of applications. One of the cornerstone architectures in this paradigm is that of graph convolutional neural networks (GCNNs) \cite{KipWel17, GamIsuLeuRib20}, where convolution is defined spectrally as a weight sharing method that leverages the intrinsic structure of the data.
If the graphs and training signals are \emph{consistent}, meaning that they arise as discretizations or samplings of an underlying continuum model, it appears natural to formulate the \emph{consistency of training} in continuum terms as well, in the sense that learning outcomes should remain stable as the discretization is refined. 
Our approach is motivated by two functional-analytic considerations. 
First, the Euclidean inner product used in the spectral definition of graph convolution is itself an $L^2$-structure, and in the context of discretizations, it approximates the canonical $L^2$-inner product on the manifold. This $L^2$-structure provides the natural analytic setting for signal and parameter spaces.
Second, shallow neural networks admit a dual representation as linear functionals over measures on the parameter space, encompassing both infinite-width limits as well as finite networks.
This functional-analytic perspective allows us to analyze shallow GCNNs via weak limits in terms of neuron evaluation functionals or weak-* limits in terms of parameter distributions, and to study their training through the $\Gamma$-convergence of empirical risk minimization (ERM) functionals. Within this framework we show our main convergence results, namely that if training data across graph resolutions describe the same phenomenon, that is, the graph signals are discretizations of respective continuum signals, then the discrete training problems are indeed a \emph{consistent discretization} of their continuum counterpart.  

Motivated by the Convolution Theorem for Euclidean domains, convolution on graphs and manifolds is defined as multiplication in the spectral domain which gives rise to an $L^2$-structure of the signal and parameter spaces.
Here, the convolution operations $\ast_n$, $\ast$ of signals on graphs and manifolds are defined via the eigendecompositions of the graph Laplacians and the Laplace-Beltrami operator, respectively, where $n$ denotes the resolution. The space of signals on the nodes $\M_n$ of a graph is simply $\R^n$, which naturally extends to the continuum setting, where signals are modeled as functions on a manifold $\M$. A coherent view of this is to define signals as $L^2$-functions defined on the nodes of the graphs or on the manifold, respectively. In the discrete case, $L^2(\M_n, \mu_n)$ is isomorphic to $\R^n$ equipped with the standard Euclidean inner product, where $\mu_n$ are the empirical measures induced from $\mu$. This perspective also applies to the parameter spaces with $\Theta_n\subset (\R^n)^3$ and $\Theta\subset (L^2(\M, \mu))^3$.

Although trained networks are nonlinear in their parameters these parametrizations become linear when expressed by measures over the parameter space (see e.g. \cite{Bach:2017}).
These linear parametrizations are well-defined by the Riesz-Markov representation theorem provided the parameter spaces are locally compact and such that, for every fixed signal, the neural response maps 
are continuous in the parameter triple and vanishing at infinity. This is easily satisfied in the finite-dimensional setting by restricting the parameter spaces to closed balls, while in the infinite-dimensional setting the continuity and compactness requirement are competing, which calls for the right choice of topology.
Restriction to closed balls with the weak topology on $L^2$ achieves compactness, while the inner product structure of the neural response maps requires additional $H^\alpha$ regularity on the first and last parameter to retain control over the $L^2$-norm. This is important for the required continuity of the neural response maps, but also required for the uniform convergence result. Since we want to infer consistency from the spectral approximation this requires imposing this regularity assumption also on the first and last parameter of the discrete parameter spaces, as well as a truncation to the meaningful part of the graph spectra, i.e. those elements spanned by the first $K(n)$ eigenvectors, where $K(n)$ denotes the frequency-cutoff.

The duality framework thus provides a natural foundation for studying approximation, convergence, and generalization in shallow GCNNs.
To study the discrete-to-continuum limits, we employ spatial and spectral projections from the continuum to the discrete spaces, denoted by $\mP_n, \mS_n, \mS_{n, \alpha}: L^2(\M, \mu)\to L^2(\M_n, \mu_n)$, along with their respective adjoint operators $\mP^*_n, \mS^*_n, \mS^*_{n, \alpha}$. The spectral projections map onto the $K(n)$-dimensional subspaces of the discrete or continuum $L^2$ or $H^\alpha$ spaces spanned by their respective eigenbases. Combining those we define a projection between the continuum and discrete parameter spaces $$\mQ_{n, \alpha}: \Theta\to\Theta_n, \quad \theta = (a,b,c) \mapsto \mQ_{n,\alpha}\theta:=(\mS_{n,\alpha} a, \mS_n b, \mS_{n, \alpha} c)$$  and its $L^2$-adjoint $\mQ_{n, \alpha}^*$, which can be seen as a minimal norm extension operator. Then indeed for $n\in\N$, the discrete parameter space $\Theta_n$ is isomorphic to the finite-dimensional subspace $\mQ_{n, \alpha}^*\Theta_n$ of the constructed continuum parameter space $\Theta$.

We show that then indeed the trained networks corresponding to the discrete setting can be seen as a (spectral) discretization in the linear parametrization of a trained network corresponding to the continuum setting and that the discrete training problems converge to a continuum training problem:

First, we establish uniform convergence of the neural responses to signals that originate from the same signal on the manifold: $$\psi_n(\mS_n u, \mQ_{n, \alpha}(\cdot))\xrightarrow{n\to\infty} \psi(u, \cdot)\;\text{ in } C((\Theta, w)),\quad \text{ for all }u\in L^2(\M, \mu).$$
This ensures consistency of neuron evaluation across resolutions for graph signals whose limit is an underlying manifold signal, in terms of the $TL^2$ convergence first introduced in \cite{GarciaTrillos:2015}.

Second, we prove $\Gamma$-convergence of the total variation regularized empirical risk minimization (ERM) functionals. Given sampled data from a target functional on the space of manifold signals, the discrete ERM functionals (defined on $\M(\Theta)$) $\Gamma$-converge to the continuum ERM functional restricted to a compact ball in $\M(\Theta)$ equipped with the weak-* topology. The discrete ERM functionals are extended to the space $\M(\Theta)$ by assigning the value $+\infty$ to measures that do not correspond to valid projections of measures on the discrete parameter space, that is if $(\mQ_{n, \alpha}^*\mQ_{n, \alpha})_\# \nu \neq \nu$.
    As a result, minimizers of the discrete ERM functionals converge weak-* to minimizers of the continuum ERM functional, ensuring consistency of the trained networks.

\paragraph*{Organization of the paper.}
In \Cref{setup-and-notation} we establish the considered setting, terms and notation of this paper, namely we define shallow GCNNs and formalize the manifold assumption, and the notion of consistency used in this paper. In \Cref{spectral-approximation} we collect spectral approximation results from \cite{GarGerHeiSle20} and \cite{Burago:2015}, verify their applicability to our setting and state them in a form that is convenient for the sequel.
In \Cref{discrete-continuum-erm-convergence} we define spectral projection maps between the discrete and continuum spaces and prove some preliminary convergence results about shallow GCNNs in order to finally state and prove our main results \Cref{uniform-convergence} and \Cref{gamma-convergence} in \Cref{gamma}. In \Cref{discussion} we discuss some of the assumptions made in this paper on the underlying manifold, as well as the construction of the parameter spaces.
\paragraph*{Related works.}
There are several works studying transferability, approximation and generalization properties of GCNNs, such as \cite{LevEtAl21, Le:2023, MasKutLev25, Maskey:2023, Wang:2022, Wang:2024}. 

Transferability compares the outputs of a model on different graphs, for instance on graphs of different sizes, relying on some notion of graphs describing the same underlying structure.  In contrast, in the present work, we are interested in the consistency of the training of GCNNs, so in convergence of minimizers of risk objectives corresponding to graphs consistent across resolutions with consistent training data. 

The authors in \cite{LevEtAl21} analyze transferability of GCNNs  by the means of the error of the filter of a continuum Laplacian applied to a signal with the interpolation of the application of a filter of a graph Laplacian to a discretized signal. They show that this transferability error of the filter is controlled by the transferability error of the Laplacian and a so-called signal consistency error, i.e. the error between the signal and the interpolation of its discretization. While we also consider graphs that are consistent in the sense that they are sampled from the same underlying space, in our setting the graph Laplacians are required to approximate the Laplace-Beltrami spectrally in a low-frequency window, while \cite{LevEtAl21} consider graphs whose action on sampled continuum low-frequency modes approximates the action of the continuum-Laplacian under sampling. In their paper, the space of graph signals is limited to have finite bandwidth. Indeed, the fact that only the low-frequency graph-spectrum reliably approximates that of the underlying continuum space makes restriction to low-frequencies unavoidable, however we impose such restrictions on the parameter space, and adapt the discretization scheme accordingly.

Some works on the manifold convolution setting we treat here are \cite{Wang:2022} and \cite{Wang:2024}, where the authors first claim a bound on the error of commuting the filter operation with the discretization, and subsequently, for filters where this bound is uniform across signals, they show that this error remains controlled through the application of the nonlinearity.
Notably, the discretization considered in \cite{Wang:2022, Wang:2024} again requires the space of input manifold signals to be band-limited. We prove a similar type of bound in \Cref{convergence-discrete-continuum-convolution-operation}, which we later use to show consistency of the training, as in $\Gamma$-convergence of the empirical risk minimization functionals. Furthermore, in \cite{Wang:2024} the authors consider manifold filters whose response to high frequencies is almost constant. Rather than imposing assumptions on the manifold filter, we shift the restriction to the discretized filter by truncating its spectral representation to the first $K(n)$ frequencies, where the spectral cutoff $K(n)$ is chosen such that it diverges. However, this requires imposing an assumption on the asymptotic behavior of the spectral gap of the manifold. In particular we provide a quantification of this admissible spectral window in terms resolution and this gap assumption.

There are also a number of works relying on general notions of convergence and limiting objects for graphs applicable beyond the geometric setting. One such framework is that of graphons, as considered for instance in \cite{Maskey:2023}, which does not apply in our setting, because graphons arise as limit objects of sequences of dense graphs, whereas the geometric proximity graphs considered here are not dense. The reason is that their number of edges per node is bounded relative to the local sampling density, due to being constructed from a kernel with compact support. Another type of convergence that is applicable to sparser graphs is the notion of action convergence of graphops as used in \cite{Le:2023}. However, our models are based on convolutions defined in the spectral domain, and as the authors point out, the relations between spectral definitions and this type of convergence are not clear at present.

Finally, let us also mention that further architectures based on spectral constructions are considered in \cite{LevMonBreBro19}, and generalization results derived from the manifold assumption and upscaling in the number of graph vertices are provided in \cite{WanCerRib25}.

\vskip 1em
\def\arraystretch{1.25}
\begin{small}
\begin{longtable}{  r |  p{0.8\textwidth}  }
  \caption{Notation}\\ 
  $d,m$ & Dimensions of the ambient data space $\R^d$ and of the manifold $\M$. \\
  \hline
  $\M$ & Compact connected smooth manifold without boundary of dimension $m$ embedded in $\R^d$, whose Riemannian metric $g$ is the one inherited from the embedding.\\
  \hline
  $d_{\M}(x,y)$ & The geodesic distance between points $x,y\in \M$.\\
  \hline
  $\mu$  & Probability measure supported on $\M$ describing the data distribution. \\ 
  \hline
  $\Delta$  & The Laplace-Beltrami operator on $\M$. \\ 
  \hline
  $\lambda_k$ & Eigenvalues of the Laplace-Beltrami operator, arranged such that \newline$\lambda_1\leq \lambda_2 \leq \ldots \uparrow +\infty$\\
  \hline
  $E(\lambda)$ & Eigenspace of the Laplace-Beltrami operator associated to the eigenvalue $\lambda$.\\
  \hline
  $\gamma_{k,\mu}$ & The spectral gap of the eigenvalue $\lambda_k$, i.e. the minimal distance of the eigenvalue $\lambda_k$ to its distinct neighboring eigenvalues.\\
  \hline
  $\varphi_k$ & Arbitrary choice of an orthonormal basis of  $L^2(\M, \mu)$ consisting of eigenfunctions of the Laplace-Beltrami operator in order of increasing eigenvalues.\\
  \hline
  $N(\lambda)$ & Eigenvalue counting function (see \eqref{eigencount}).\\
  \hline
  $\M_n$ & Point cloud $\M_n=\{x_1, \ldots x_n\} \subset \R^d$ drawn from i.i.d samples from $\mu$, the nodes of the geometric proximity Graph $G_n$.\\
  \hline
  $n$ & Resolution or size of the graph $G_n$, i.e. number of nodes $\M_n:= \{x_1, \ldots x_n\}$ of $G_n$.\\
  \hline
  $\mu_n$ & Empirical measure of the sample $\M_n$.\\
  \hline
  $\eps_n$ & Upper bound on the geodesic distance $d_{\M}$ between any point in $\M$ and a point of $\M_n$ (see \eqref{infty-wasserstein}). \\
  \hline
  $h_n$ & Radius, in terms of geodesic distance, of interaction between points so that the edge between them receives a nonzero weight (see \eqref{graph-weights-eta}).\\
  \hline
  $\Delta_n$ & Unnormalized graph Laplacian corresponding to the $n$-th geometric graph $G_n$, as defined in \eqref{def-unnormalized-graph-laplacian}.\\
  \hline
  \rule{0pt}{2.6ex}
  $\lambda_k^\bn$ & Eigenvalues of the graph Laplacian, arranged such that $\lambda_1^\bn\leq \ldots \leq \lambda_n^\bn$\\
  \hline
  $K(n)$ & A frequency cutoff compatible with multiplicity blocks for the $n$-th geometric proximity graph, such that up to this index the spectrum of the $n$-th graph Laplacian is informative about that of the Laplace-Beltrami (see \eqref{sequence-conditions}).\\
  \hline
  \rule{0pt}{2.6ex}
  $\varphi_k^\bn$, $\varphi_k^\cn$ & An arbitrary choice of eigenfunctions $\varphi_k^\bn$ of the graph Laplacian corresponding to the eigenvalues $\lambda^{\bn}_k$ such that they form an orthonormal basis of $L^2(\M_n, \mu_n)$ and a resolution-dependent choice of orthonormal eigenfunctions $\varphi_k^\cn$ of the Laplace-Beltrami operator corresponding to the eigenvalues $\lambda^{\cn}_k$, such that the first $K(n)$ of them align well with the eigenfunctions $\varphi_k^\bn$ of the graph Laplacian for the graph of size $n$.\\
  \hline
  \rule{0pt}{2.6ex}
  $\phi_k^\bn$, $\phi_k^\cn$ & A fixed choice of orthonormal eigenfunctions of the graph Laplacian corresponding to the eigenvalues $\lambda^{\bn}_k$ and a fixed resolution-dependent choice of orthonormal eigenfunctions of the Laplace-Beltrami operator corresponding to the eigenvalues $\lambda^{\cn}_k$, such that the first $K(n)$ of them align well with the eigenfunctions $\phi^\bn_k$ of the graph Laplacian for the graph of size $n$.\\
  \hline
  $\delta(\eps, h, \lambda)$ & Auxiliary function for bounding approximation errors between discrete and continuum eigenvalues and eigenfunctions, as defined in \eqref{delta-definition}. \\ 
  \hline
  $\delta^\phi_n$ & Upper bound on the norm of the difference of inteprolated discrete eigenfunction and continuous ones, as defined in \eqref{uniform-error-stepfct-extension}.\\
  \hline
  $\Theta_n$, $\Theta$ & Discrete and continuum parameter space, see \eqref{def-discrete-parameter-space}, and \eqref{def-continuum-parameter-space}. \\ 
  \hline
  $\alpha$ & The order of fractional Sobolev regularity imposed on the parameter spaces. \\ 
  \hline
  $\ast_n$, $\ast$ & Convolution operations, defined by multiplication in the Fourier domain of the graph Laplacian corresponding to the graph of size $n$ or the Laplace-Beltrami operator, respectively.\\
  \hline
  $\sigma$, $L_\sigma$, $C_\sigma$ & The activation function, its Lipschitz constant, and a linear growth constant for it (see \eqref{linear-growth-sigma}).\\
  \hline
  $\psi_n$, $\psi$ & Neural response maps as defined in \eqref{def-discrete-neural-response-maps} and \eqref{def-continuum-neural-response-maps}.\\
  \hline
  \rule{0pt}{2.6ex}
  $f^\bn_{\rho}$ & Infinitely wide shallow graph convolutional neural network parametrized by the measure $\rho$ on the discrete parameter space $\Theta_n$, as defined in \eqref{def:nn-parametrization-measure-discrete}.\\
  \hline
  $f_{\rho}$ & Infinitely wide shallow manifold convolutional neural network parametrized by the measure $\rho$ on the continuum parameter space $\Theta$, as defined in \eqref{def:nn-parametrization-measure-continuum}.\\
  \hline
  $\mP_n, \mP_n^*$ & Spatial discretization and extension maps defined in \eqref{def-spatial-discretization} and \eqref{def-spatial-extension}.\\
  \hline
  $I_n$ & Interpolation operator from \citep[p. 840]{GarGerHeiSle20}, see \eqref{def-interpolation-op}.\\
  \hline
  $\mR_n$, $C_\mR$ & Linear discretization operators,  uniformly bounded in $n$ by $C_\mR$, and compatible with $TL^2$-convergence (see \Cref{consistent-discretization-operators}).\\
  \hline
  $\mS_{n,\alpha}, \mS_{n,\alpha}^*$ &  Spectral discretization and extension maps that respect the $H^\alpha$-structure, defined in \eqref{def-spectral-discretization} and \eqref{def-adjoint-spectral-extension}.\\
  \hline
  $\mQ_{n,\alpha}, \mQ_{n,\alpha}^*$ &  The parameter discretization operation and its adjoint, constructed from the spatial and spectral discretization and extension operators \eqref{def-parameter-discretization} and \eqref{def-parameter-extension}.\\
  \hline
  $\ell$ & Loss function $\R \times \R \to \R$, generally assumed to be continuous (in the first argument).\\
  \hline
  $J_{\alpha}$ & The empirical risk minimization functional corresponding to the training of the shallow manifold convolutional neural network, as defined in \eqref{def-J-alpha}.\\
  \hline
  \rule{0pt}{2.2ex}  $\tilde{J}_{n,\alpha}$ & The empirical risk minimization functional corresponding to the training of the shallow graph convolutional neural network, where the graph is of resolution $n$ and $\alpha$ is the degree of regularity, as defined in \eqref{def-tilde-J-alpha-n}.\\
  \hline
  $J_{n,\alpha}$ & The lifted empirical risk minimization functional corresponding to the training of the graph of resolution $n$, as defined in \eqref{def-J-alpha-n}.\\
  \hline
  $\M(X)$ & The space of finite signed Radon measures on $X$. \\
  \hline
  $\cP(X)$ & The space of Radon probability measures on $X$. \\
  \hline
  $w, w^*$ & Weak, weak-* topology. \\
  \hline
  $C$ & General notation for no-further specified constants, whose value may vary across occurrences, even within one formula.\\
  \hline
  $\omega_m$ & The volume of the unit ball in $\R^m$.\\
  \hline
  $B_V$,  $B_V(v, r)$ & The unit ball in the normed vector space $(V, \norm{\cdot}_V)$; a closed ball of radius $r$ in the normed vector space $(V, \norm{\cdot}_V)$ around $v\in V$.\\
  \hline
  $B_{\norm{\cdot}_{V_w}}$ & The unit ball in the normed vector space $(V, \norm{\cdot}_{V_w})$, used if in particular a different norm than the standard norm on $V$ is used.\\
  \hline
  $B_{d_V}(v, r)$ & A closed ball of radius $r$ in the metric space $(V,d_V)$ around $v\in V$.
\label{tab:TableOfNotation}
\end{longtable}
\end{small}

\section{Setup and notation}\label{setup-and-notation}
A comprehensive list of the notation used in the paper is provided in Table \ref{tab:TableOfNotation}. In the remainder of this section, we introduce the framework for our subsequent results.

\subsection{Point cloud proximity graphs and the manifold assumption}\label{point-cloud-graphs-manifold-ass}
The manifold assumption refers to the physically motivated idea that data points, which are a priori vectors in $\R^d$, lie on a smooth manifold embedded in $\R^d$, implying an underlying geometric structure of the data sample that one can attempt to exploit. By connecting data points if their Euclidean distance is below a certain threshold, the neighborhood size, one obtains a neighborhood graph from the point cloud, a spatial discretization of the underlying manifold. However, what is ultimately desired is to induce the geometry of the underlying Riemannian manifold onto the graph. The geometry of a Riemannian manifold is completely determined by its Riemannian metric, which itself can be recovered from the Laplace-Beltrami operator: to see this, one can just notice that its principal symbol, the cometric in \eqref{eq:coordLBO}, can be recovered through its action on coordinate functions, see \cite[Sec.~2.9]{Tay23}. This suggests that if one has a suitable notion of a graph Laplacian approximating the former in an appropriate sense, then the graphs are also a geometric approximation. For a compact Riemannian manifold, the Laplace–Beltrami operator has a discrete spectrum, and thus a natural notion of approximation between such Laplacians is spectral convergence, which has been studied extensively (e.g. \cite{Hein:2005, Hein:2007, Singer:2006, Belkin:2006, Burago:2015, GarciaTrillos:2018, GarGerHeiSle20}).

To obtain a good approximation of the lower-frequency spectrum, the neighborhood size must shrink appropriately with increasing resolution, i.e. the number of nodes, but also account for the sampling error. This will be made explicit in the following, and elaborated more in \Cref{spectral-approximation}, where we will state the main results, Theorems 4 (or its Corollary 1) and 5, from \cite{GarGerHeiSle20} which establish convergence rates for the convergence of eigenvalues and eigenvectors of the graph Laplacians towards eigenvalues and eigenfunctions of the Laplace-Beltrami operator on the underlying manifold.

Let $\M \subset \R^d$ be an $m$-dimensional compact connected $C^\infty$ Riemannian submanifold without boundary embedded in $\R^d$, with $d\geq m$. 

We denote by $\mu \in \cP(\R^d)$ (Radon probability measures in $\R^d$) the normalized $m$-dimensional Hausdorff measure restricted to $\M$, that is
\[\mu = \frac{1}{\H^m(\M)} \H^m \mres \M, \quad \text{where }(\H^m \mres \M)(A) = \H^m(A \cap \M) \text{ for all Borel sets } A \subseteq \R^d,\]
which for functions $f:\M \to \R$ satisfies 
\[ \int_{\M} f(x) \dd \mu(x) = \frac{1}{\Vol_{\M}(\M)}\int_{\M} f(x) \dd \operatorname{Vol}_{\M}(x),\]
where the right integral is against the volume form on $\M$ arising from the Riemannian metric $g$ induced by the embedding of $\M$ into $\R^d$, since following \cite[Prop.~12.6]{Taylor:2006} it holds that \[\Vol_\M(A)=\H^m(A)=(\H^m\mres \M)(A) \qquad \text{ for every Borel set } A\subset \M.\]
From $\mu$ we sample an i.i.d. sequence of points $(x_n)_{n \in \N} \subset \R^d$, and denote the first $n$ of them and corresponding empirical measure as 
\[\M_n := \{x_1, \ldots, x_n\}, \quad \mu_n := \frac{1}{n} \sum_{i=1}^n \delta_{x_i}.\]
By the general version in \cite{Var58} (see also \cite[Thm.~11.4.1]{Dud02}) of the Glivenko-Cantelli theorem, almost surely the measures $\mu_n$ converge weakly to $\mu$. Moreover, this convergence can be quantified in terms of the $\infty$-Wasserstein distance  
\begin{equation}\label{infty-wasserstein}\eps_n:= d_\infty(\mu, \mu_n) := \inf_{T_\# \mu = \mu_n} \esssup_{x \in \M} d_{\M}(x,T(x)),\end{equation}
where $d_{\M}$ is the Riemannian geodesic distance on $\M$ and the infimum is taken over all Borel measurable maps from $\M$ to $\M$. The bounds
\begin{equation}\label{probabilistic-bounds-eps}
    \eps_n \leq C \frac{(\log n)^{1/m}}{n^{1/m}} \text{ for } m \geq 3, \quad \eps_n \leq C \frac{(\log n)^{3/4}}{n^{1/2}} \text{ for } m = 2,
\end{equation}
hold with very high probability by \cite[Thm.~2]{GarGerHeiSle20}, which in turn is a generalization to the manifold setting of the classical bounds with the same rate for the uniform distribution on $(0,1)^m$ of \cite{ShoYuk91} and \cite{LeiSho89}.

\begin{rem}\label{non-uniform-data-distribution}
The above-cited results on convergence rates for eigenvalues and eigenfunctions as well as the bounds on $\eps_n$ hold also for probability distributions with Lipschitz continuous density bounded uniformly away from zero, which is the framework used in \cite{GarGerHeiSle20}. We believe our analysis can be generalized to this case, but for brevity we restrict ourselves to the case of constant density, since otherwise a number of arguments in \Cref{discrete-continuum-erm-convergence} would become more cumbersome.
\end{rem}

Let $\{h_n\}_{n\in\N}\subset \R_{>0}$ be a sequence of neighborhood sizes, such that \cite[Assumption 3]{GarGerHeiSle20}
    \begin{equation}\label{assumption3}
        (m+5)\eps_n < h_n<\min\left\{1, \frac{i_0}{10}, \frac{1}{\sqrt{mK}}, \frac{R}{\sqrt{27m}}\right\}, \quad \text{ and }\quad \frac{\eps_n}{h_n}\xrightarrow{n\to\infty}0,
    \end{equation}
where $i_0$ is the injectivity radius of the manifold $\M$, $K$ a global upper bound on the absolute value of sectional curvatures of $\M$, and $R$ the reach of $\M$.
Given an interaction strength $\eta:[0,+\infty) \to [0,+\infty)$ with support in $[0,1]$ and normalized such that
\[\int_{\R^m} \eta(|x|) \dd x = 1\] 
we define, as in \cite{GarGerHeiSle20}, the weights
\begin{equation}\label{graph-weights-eta}W_n = \big(w_{ij}\big)_{i,j=1}^n, \quad w_{ij} = \frac{1}{n h_n^m} \eta \left(\frac{|x_i-x_j|}{h_n}\right),\end{equation}
in which $|x_i-x_j|$ denotes the Euclidean distance in $\R^d$. Putting these together, we get an undirected graph
$G_n = (\M_n, W_n)$ where we keep the edge set implicit in the weights, that is, two points/vertices $x_i,x_j$ are connected if and only if $w_{ij} >0$. With the above choice of the neighborhood sizes $h_n$, this yields a connected graph for every $n\in\N$.

We define the ``surface tension'' associated to $\eta$ as
\[\sigma_{\!\eta} := \int_{\R^m} |x \cdot e_1|^2 \,\eta(|x|) \dd x.\]
This kind of quantity naturally appears as multiplicative factor in the convergence of functionals defined by double integrals to those involving classical derivatives as started in the pioneering work \cite{Bourgain:2001}, see also the proof of \cite[Lem.~4.2]{GarciaTrillos:2015} for an illustrative computation deriving its appearance. Accounting for this factor to aim for convergence to the continuous counterpart, for a function $v:\M_n \to \R$, the unnormalized graph Laplacian $\Delta_n v: \M_n \to \R$ is defined by
\begin{equation}\label{def-unnormalized-graph-laplacian}
    (\Delta_n v)(x_i) = \frac{2}{\sigma_{\!\eta} h_n^2} \sum_{j=1}^n w_{ij} \big(v(x_i) - u(x_j)\big).
\end{equation}
On the continuum level, the Laplace-Beltrami operator on $\M$ is defined for $v \in C^2(\M)$ by 
\[ \Delta v = - \frac{1}{\Vol_{\M}(\M)}\div_g \nabla_g v, \quad \text{ so that } \int_\M w \Delta v \dd \Vol_{\M} = \int_\M g(\nabla_g w, \nabla_g v) \dd \mu,\]
where $\nabla_g$ and $\div_g$ are the gradient and divergence arising from the metric $g$ on $\M$. In coordinates, this leads to the expression
\begin{equation}\label{eq:coordLBO} -\frac{1}{\Vol_{\M}(\M)\sqrt{\det g}} \sum_{i,j=1}^m \partial_j \left( \sqrt{\det g} \, g^{ij}\,\partial_i f \right).\end{equation}

Denote by  $\lambda^\bn_k$ the $k$-th eigenvalue of the $n$-th graph Laplacian $\Delta_n$, where $0=\lambda^\bn_1\leq\lambda^\bn_2 \leq \ldots \lambda^\bn_n$ and by $\{\phi^\bn_k\}_{k=1}^n$ any of their eigenvectors.
Since $\M$ is a compact Riemannian manifold, its Laplace-Beltrami operator has a discrete spectrum $0=\lambda_1^\bn\leq\lambda^\bn_2 \leq \ldots \uparrow +\infty$ (see e.g.\cite[Thm.~4.3.1]{Lablee:2015}). Accordingly, we denote by $\lambda_k$, $k\in \N$ the $k$-th eigenvalue of the Laplace-Beltrami operator on $\M$ and by $\{\phi_k\}_k$ a choice of corresponding orthonormal eigenfunctions. Moreover, we denote by 
\begin{equation}\label{def-spectral-gap}\gamma_{k, \mu}:=\min\big\{|\lambda_k-\lambda_j| \mid j\in\N,  \lambda_k\neq \lambda_j\big\}>0\end{equation}
the gap between the $k$-th eigenvalue and any other distinct one. Note that by definition this quantity is nonzero even in the presence of multiplicity for $\lambda_k$. 

For two sequences $\{a_n\}_n, \{b_n\}_n$ we denote by $0<a_n\ll b_n$ that for all $n\in\N$ it is $a_n<b_n$ and $a_n/b_n\xrightarrow{n\to\infty}0$.
\begin{ass}\label{ass:polynomial-decay-spectral-gap}
We assume further on the manifold $\M$ that there is a $\beta\geq 0$, such that $\gamma_{k,\mu}^{-1}=O(k^\beta)$ and define 
\[\beta^*:= \inf\big\{\beta \geq 0\mid \gamma_{k,\mu}^{-1}=O(k^\beta)\big\}.\]
\end{ass}
    Then let $\{\widetilde{K}(n)\}_{n\in\N}$ be a sequence of $\N\ni\widetilde{K}(n)\leq n$, such that 
    \begin{equation}\label{sequence-conditions}0<\eps_n\ll h_n\ll \widetilde{K}(n)^{-\max\{\beta^* + \frac{m+1}{2m}, 1\}}\ll 1, \quad\text{ and }\quad \eps_n h_n^{-1} \ll 
    \widetilde{K}(n)^{-\left(\beta^*+\frac{m-1}{2m}\right)}.
    \end{equation}\vskip -1em
These assumptions are sufficient to ensure that the continuous eigenvalues and eigenspaces are well-approximated by the discrete ones, as we will see in \Cref{spectral-approximation}.
\begin{rem}\label{big-exponent-existence}
    Suppose $\beta^*+\frac{m+1}{2m}\geq 1$, i.e. $\beta^*\geq \frac{2m-m-1}{2m} = \frac{m-1}{2m}$. Hence $2m\beta^*\geq m-1$ and thus
    \[\frac{2m}{2m\beta^*+m+1}\geq \frac{2m}{4m\beta^*+2} \geq \frac{1}{2\beta^*+1}.\]
    Therefore, for $0<h_n<1$ it holds that \[h_n^{\frac{m}{2m\beta^* + m+1}}\leq h_n^{\frac{1}{2\beta^*+1}}.\]\vskip -1em 
    Then for a sequence of neighborhood sizes $\{h_n\}_n\subset(0,1)$, satisfying \eqref{assumption3}, one could choose $\widetilde{K}(n)$ as \vspace{-1em}\[\widetilde{K}(n):= \min\{\lfloor h_n^{-\frac{m}{2m\beta^* + m+1}}\rfloor, n\}\]
    since \[h_n^{-\frac{m}{2m\beta^* + m+1}}\geq h_n^{-\frac{1}{2\beta^*+1}} \xrightarrow{n\to\infty} +\infty,\qquad \widetilde{K}(n)^{-\left(\beta^*+\frac{m+1}{2m}\right)} \geq h_n^{1/2}> h_n,\]
    and
    \[
    \widetilde{K}(n)^{\left(\beta^*+\frac{m+1}{2m}\right)}h_n \leq (h_n^{-\frac{m}{2m\beta^*+m+1}})^{\left(\beta^*+\frac{m+1}{2m}\right)}h_n \leq h_n^{1/2}  \xrightarrow{n\to\infty}0.
    \]
    Thus it holds indeed that $h_n \ll \widetilde{K}(n)^{-\left(\beta^*+\frac{m+1}{2m}\right)}.$
    If one takes $h_n=\sqrt{\eps_n}$, one obtains \[\eps_n h_n^{-1}= h_n \ll \widetilde{K}(n)^{-\left(\beta^*+\frac{m+1}{2m}\right)}\leq \widetilde{K}(n)^{-\left(\beta^*+\frac{m-1}{2m}\right)},\] so $\{\widetilde{K}(n)\}_n$ satisfies \eqref{sequence-conditions}.
    Using \eqref{probabilistic-bounds-eps} \cite[Thm.~2]{GarGerHeiSle20} and the above, it follows that with high probability for $m\geq 3$,\vspace{-0.5em} \[\widetilde{K}(n)\geq C \left(\frac{n}{\log(n)}\right)^{\frac{1}{2(2m\beta^*+m+1)}}\geq C \left(\frac{n}{\log(n)}\right)^{\frac{1}{4(2m\beta^*+1)}}.
    \]
\end{rem}

\begin{rem}\label{small-exponent-existence}
    Now, suppose $\beta^*+\frac{m+1}{2m}< 1$.  Then for a sequence of neighborhood sizes $\{h_n\}_n$, satisfying \eqref{assumption3}, one could choose $\widetilde{K}(n)$ as \[\widetilde{K}(n):= \min\{\lfloor h_n^{-1/2}\rfloor,n\} \xrightarrow{n\to\infty} +\infty,\] which also yields  
    $\widetilde{K}(n)^{-\left(\beta^*+\frac{m+1}{2m}\right)} \geq\widetilde{K}(n)^{-1}\geq h_n^{1/2}> h_n,$ and using that $\beta^*+\frac{m+1}{2m}< 1$ further we have
    \[
    \widetilde{K}(n)^{\left(\beta^*+\frac{m+1}{2m}\right)}h_n \leq h_n^{-\frac{2m\beta^*+m+1}{4m}}h_n\leq h_n^{-\frac{2m(\beta^*+1)}{4m}}h_n\leq h_n^{\frac{1-\beta^*}{2}}\leq h_n^{\frac{m+1}{4m}} \xrightarrow{n\to\infty}0.
    \]
    With the same argument as in the other case, taking $h_n=\sqrt{\eps_n}$ yields
    \[\eps_n h_n^{-1}= h_n \ll \widetilde{K}(n)^{-\left(\beta^*+\frac{m+1}{2m}\right)}\leq \widetilde{K}(n)^{-\left(\beta^*+\frac{m-1}{2m}\right)},\] 
    so again $\{\widetilde{K}(n)\}_n$ satisfies \eqref{sequence-conditions}. In this case it holds that $2m\beta^*<m-1$, and thus using \eqref{probabilistic-bounds-eps} \cite[Thm.~2]{GarGerHeiSle20} it follows that with high probability for $m\geq 3$, \[\widetilde{K}(n)\geq C \left(\frac{n}{\log(n)}\right)^{\frac{1}{2(2m\beta^*+m+1)}}\geq C \left(\frac{n}{\log(n)}\right)^{\frac{1}{8m}}.\]
\end{rem}
From \Cref{big-exponent-existence} and \Cref{small-exponent-existence} it is clear that for any choice of  neighborhood radii $\{h_n\}_{n\in\N}\subset(0,1)$ fulfilling \eqref{assumption3}, there always exists a choice of indices $\{\widetilde{K}(n)\}_{n\in\N}\subseteq\N$, $\widetilde{K}(n)\leq n$ for all $n\in\N$ which satisfies the first part of \eqref{sequence-conditions}, and if the neighborhood sizes are also chosen adequately in relation to the $\infty$-transportation distances $\eps_n$, then also the second part of \eqref{sequence-conditions} is satisfied. Given one such choice, we can then fix a \emph{sequence of spectral cutoffs} \[\{K(n)\}_{n\in\N}\subseteq\N\] by prescribing that $K(n)$ is the last index of the multiplicity block of the continuum eigenvalue $\lambda_{\widetilde{K}(n)}$ for every $n\in\N$. Defining the cutoffs in this way allows for consistent definition of low-pass signals based on the eigenvalue magnitude, which in turn allows for changes of bases in the respective eigenspaces if needed.

\subsection{Shallow graph convolutional neural networks}
This subsection is dedicated to formulating convolution on graphs and manifolds, specifying the assumptions on the activation function, and defining shallow GCNNs as functionals on the space of graph signals. We will review how such functionals can either be nonlinearly parametrized by the parameters themselves or linearly by a measure in the form of a parameter distribution. The latter approach not only renders the parametrization linear, but is also flexible in the width of the neural network. This point of view is made compatible with finite realizable networks through the use of convex representer theorems guaranteeing the existence of minimizers supported on finitely many parameters, when considering a regularized empirical risk minimization cost involving finitely many training pairs. In particular, these apply to the discrete and continuum training problems we consider in Section \ref{gamma}, see \Cref{representer-theorem}.

To define a continuum problem on manifolds that reflects the limit behavior of GCNNs, we need to define an appropriate set of continuum signals. A natural choice in light of the spectral definition of the convolution, but which also works well with spatial discretizations, is to define a \textit{manifold signal} to be a function $u\in L^2(\M, \mu)$. A \textit{graph signal} assigns every node of the graph of size $n$ a value in $\R$, so can be represented as a vector in $\R^n$, equipped with the standard Euclidean topology, which is isometrically isomorphic to $L^2(\M_n, \mu_n)$. Therefore we consider a \textit{graph signal} to be a function $u\in L^2(\M_n, \mu_n)$. For readability, we will occasionally write $L^2$ instead of $L^2(\M,\mu)$, especially in subscripts of norms and inner products.

Since in general there is no group structure on graphs or manifolds, motivated by the Convolution Theorem for Euclidean domains, the convolution operations on graphs and manifolds are defined as multiplication in the respective spectral domain, i.e. given as the bilinear operations
\begin{align*}
\ast_n: L^2(\M_n, \mu_n)\times L^2(\M_n, \mu_n)\to L^2(\M_n, \mu_n),\quad & \quad(u, v)\mapsto u*_n v := \sum\limits_{k=1}^n \langle u, \phi^\bn_k\rangle_{\ell^2} \langle v, \phi^\bn_k\rangle_{\ell^2} \phi^\bn_k,\\
\ast: L^2(\M, \mu)\times L^2(\M, \mu)\to L^2(\M, \mu),\qquad\quad\, & \quad (u,v)\mapsto \,u*v\,:=\sum\limits_{k=1}^\infty \langle u, \phi_k\rangle_{L^2} \langle v, \phi_k\rangle_{L^2}\,\phi_k.
\end{align*}

Let the activation function $\sigma:\R\to\R$ be a nonlinear function. 
We understand its application $\sigma(v)$ to a $v\in L^2(\M, \mu)$ as a map \(\sigma(v): \M\to\R,\: x\mapsto \sigma(v(x)).\) Accordingly its application to a $\ell^2$-sequence $\{v_i\}_i\in \ell^2$ is component-wise, i.e. $\sigma(\{v_i\}_i) = \{\sigma(v_i)\}_i$. We further demand from $\sigma$ to have Lipschitz property with Lipschitz constant $L_\sigma$, so we have that \begin{equation}\label{lipschitz-property-sigma}\norm{\sigma(v)-\sigma(\tilde{v})}_{L^2(\M, \mu)}\leq L_\sigma\norm{v-\tilde{v}}_{L^2(\M, \mu)} \text{ for all } v, \tilde{v}\in L^2(\M, \mu).\end{equation}
Note, that this also yields a constant $C_\sigma>0$, such that for all $u\in L^2(\M,\mu)$ it is
\begin{equation}\label{linear-growth-sigma}\norm{\sigma(u)}_{L^2} \leq \norm{\sigma(u)-\sigma(0)}_{L^2}+\norm{\sigma(0)}_{L^2}\leq L_\sigma\norm{u}_{L^2} + \norm{\sigma(0)}_{L^2} \leq C_\sigma(\norm{u}_{L^2}+1).\end{equation}
Then a trained graph convolutional neural network is a functional \[f^\bn_\theta:L^2(\M_n, \mu_n)\to\R, \quad f^\bn_{\theta}(u)=\sum\limits_{j=1}^M \langle a_j, \sigma(b_j*_nu+c_j)\rangle,\] where $\theta= \{(a_j, b_j, c_j)\}_{j=1}^M\in\Theta_n^M\subset (\R^n)^M$ are minimizers of a regularized Empirical Risk Minimization objective 
\[\argmin_{\theta} \left[\frac{1}{l}\sum\limits_{k=1}^l \ell\left(f^\bn_{\theta}(u_k^{\bn}), y_k\right) + R(\theta)\right]. \]
Clearly, those $f^\bn_\theta$ are nonlinearly parametrized by their parameters $\theta$, however parametrizing by a measure over the parameter space instead leads to defining \emph{graph convolutional neural network} of potentially infinite width as functionals evaluating graph signals as follows
\begin{align}\label{def:nn-parametrization-measure-discrete}
    f^\bn_{\rho_n}: L^2(\M_n, \mu_n)\to\R, \quad f^\bn_{\rho_n}(u) = \int_{\Theta_n} \psi_n(u, \theta) d\rho_n(\theta) = 
    \int_{\Theta_n} a^T\sigma(b*_n u + c)d\rho_n(a,b,c),
\end{align} 
and accordingly \emph{manifold convolutional neural networks} as functionals evaluating manifold signals as follows
\begin{equation}\label{def:nn-parametrization-measure-continuum}
    f_\rho: L^2(\M, \mu)\to\R, \quad f_\rho(u) = \int_{\Theta} \psi(u, \theta) d\rho(\theta) = \int_{\Theta} \langle a, \sigma(b* u + c)\rangle_{L^2}d\rho(a,b,c).
\end{equation}
Here $\Theta_n\subset \big(L^2(\M_n, \mu_n)\big)^3$ and $\Theta\subset \big(L^2(\M, \mu)\big)^3$ denote the respective parameter spaces and $\rho_n$ and $\rho$ are measures on $\Theta_n$ and $\Theta$, respectively.
We call the maps $\psi_n: L^2(\M_n, \mu_n) \times \Theta_n\to\R$, and $\psi: L^2(\M, \mu) \times \Theta\to\R$ arising here, given as \begin{equation}\label{def-discrete-neural-response-maps}
    \psi_n(u,\theta)=\psi_n(u,(a,b,c)):=\langle a, \sigma(b*_n u+c)\rangle_{L^2(\M_n, \mu_n)},
\end{equation} and
\begin{equation}\label{def-continuum-neural-response-maps}
    \psi(u,\theta)=\psi(u,(a,b,c)):=\langle a, \sigma(b* u+c)\rangle_{L^2(\M, \mu)}
\end{equation} in the following \emph{neural response maps}.

\subsection{Parameter spaces}\label{parametrization}
Since the activation function is Lipschitz, the neural response maps $\psi(u, \cdot)$ are bounded on bounded parameter spaces for every manifold signal $u$. Hence, if the measure $\rho$ is a finite Radon measure supported on a bounded parameter space $\Theta$, then $f_\rho(u)$ is finite for every signal $u$. In particular, the notion of a manifold convolutional neural network linearly parametrized by $\rho$ is well-defined.  

In order to study convergence of such shallow GCNNs towards their manifold counterpart, it is useful to employ tools from functional analysis. To this end, in light of the Riesz-Markov duality, we want to interpret $f_\rho(u)$ as a dual pairing between the neural response map $\psi(u, \cdot)$ and the parametrizing measure $\rho$. 

Such usage of Riesz-Markov duality is tied to compactness in the parameter space $\Theta$, either by using $C_0$ functions in the closure of the compactly supported ones, or by requiring compactness of $\Theta$ outright. However, and in the infinite dimensional Hilbert space $\big(L^2(\M, \mu)\big)^3$ equipped with the norm topology, compactness is not implied by boundedness. 

One option would be to try and recover compactness using Banach-Alaoglu by considering its unit ball equipped with the weak topology. This would then require the neural response maps $\psi(u, \cdot)$ for fixed signals $u\in L^2(\M, \mu)$ to be sequentially continuous with respect to the weak topology of $\big(L^2(\M, \mu)\big)^3$. This is not compatible with the neural network setting, since given a weakly converging sequence $(a_k,b_k,c_k)$, in the ensuing sequence of responses $\langle a_k, \sigma(b_k * u + c_k)\rangle_{L^2}$ the argument $b_k * u + c_k$ would only converge weakly, and the pointwise nonlinearity $\sigma$ is never weakly continuous. Moreover, even if such an application of $\sigma$ would be sequentially weakly continuous (which would force it to be linear), we would still encounter an $L^2$-inner product of two weakly converging sequences. 

Note however, that the convolution term behaves well with respect to weak convergence. If $b_k\xrightharpoonup{k\to\infty}b$ in $L^2(\M, \mu)$ implies for all $u\in L^2(\M, \mu)$ $$\norm{b_k*u -b*u}_{L^2} = \bigg\|\sum\limits_{j=1}^\infty \langle b_k-b, \phi_j\rangle_{L^2} \langle u, \phi_j\rangle_{L^2}\,\phi_j\bigg\|_{L^2}\xrightarrow{j\to\infty} 0.$$
Therefore, we can resolve this and recover continuity of the neural response maps by restricting the continuum parameter space to be compact in the norm topology just in the last argument, which we can ensure this by assuming parameters to have $H^\alpha$-regularity for $\alpha\in (0,1]$ in this last argument. As we will see later, for obtaining also uniform convergence of the neural response maps, we need control of differences in $L^2$-norm on the first parameter, so we impose the same $H^\alpha$-regularity on it as well.

Following Proposition 3.2 in \citep{Caselli:2024} we can write the $H^\alpha$ semi-norm also in the following spectral manner: $$[v]^2_{H^\alpha(\M, \mu)}= \sum\limits_{k=1}^\infty \lambda_k^\alpha \langle v, \phi_k\rangle^2_{L^2(\M, \mu)},$$
and $H^\alpha(\M,\mu)$ forms a Hilbert space with the inner product $\langle\cdot, \cdot\rangle_{H^\alpha(\M, \mu)}$ given as 
\begin{equation}\label{continuum-h-alpha-topology}
    \langle u, v\rangle_{H^\alpha(\M, \mu)}:=\sum\limits_{k=1}^\infty \left(1+\sqrt{\lambda_k}\right)^{2\alpha}\langle u, \phi_k\rangle_{L^2}\langle v, \phi_k\rangle_{L^2}.
\end{equation} 
Of course there is not naturally a notion of $H^\alpha$ regularity on the finite dimensional parameter spaces, since all norms are equivalent. Still, we want to impose a decay of the Fourier coefficients with increasing frequency, analogously to the continuum case and in a way which is convenient to pass to the limit. So we employ this spectral definition of the semi-norm to define the following norm on $\R^n$:
\begin{equation}\label{discrete-h-alpha-topology}
    \norm{v}_{\alpha, n}: \R^n\to\R_{\geq 0}, \norm{v}_{\alpha, n}^2=\sum\limits_{k=1}^{n} \left(1+ \sqrt{\lambda^\bn_k}\right)^\alpha\langle v, \phi^\bn_k\rangle^2,
\end{equation}
induced by the inner product $\langle\cdot, \cdot\rangle_{H^\alpha(\M_n, \mu_n)}$ on $L^2(\M_n, \mu_n)$ by 
\[\langle u, v\rangle_{H^\alpha(\M_n, \mu_n)}:=\sum\limits_{k=1}^n \left(1+\sqrt{\lambda^\bn_k}\right)^{2\alpha}\langle u, \phi^\bn_k\rangle_{L^2}\langle v, \phi^\bn_k\rangle_{L^2}.\]
Furthermore, motivated by \Cref{error-bound-ev-ef}, we impose the spectral cutoff-condition $\langle v, \phi^\bn_k\rangle=0$ for all $k>N(\lambda_{K(n)})$, i.e. we restrict ourselves to the subspaces spanned by any first $N(\lambda_{K(n)})$ eigenvectors $\{\varphi^\bn_k\}_{k=1}^{N(\lambda_{K(n)})}$, denoted as $$V_{n}:= \Span\big\{\varphi^\bn_1, \ldots \varphi^\bn_{N(\lambda_{K(n)})}\big\}.$$
Then, we define 
\begin{equation}\label{def-continuum-parameter-space}
    \Theta:= B_{H^\alpha}\times B_{{L^2}} \times B_{H^\alpha},
\end{equation} and \begin{equation}\label{def-discrete-parameter-space}
    \Theta_n:= \left(B_{\norm{\cdot}_{\alpha, n}}\cap V_n\right)\times \left(B_{\norm{\cdot}_{0,n}}\cap V_n\right) \times \left(B_{\norm{\cdot}_{\alpha, n}}\cap V_n\right),
\end{equation}
where $B$ denotes the unit ball in the indicated space with the given norm, and $\norm{\cdot}_{0,n}$ just denotes the Euclidean norm. 
\begin{lemma}
    For all $u\in L^2(\M, \mu)$ it holds that $\psi(u, \cdot)\in C(\Theta, w)$.
\end{lemma}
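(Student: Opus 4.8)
The plan is to show that the neural response map $\psi(u,\cdot)\colon \Theta \to \R$ is sequentially continuous with respect to the weak topology on $\big(L^2(\M,\mu)\big)^3$, which suffices since $\Theta$ is metrizable in the weak topology (it is a product of norm-bounded, hence weakly sequentially compact, subsets of separable Hilbert spaces). So let $\theta_k = (a_k, b_k, c_k) \in \Theta$ converge weakly to $\theta = (a,b,c)$; I want $\psi(u,\theta_k) \to \psi(u,\theta)$. The key structural observations are exactly the ones the authors have already highlighted in the text preceding the lemma: (i) convolution $b \mapsto b \ast u$ is \emph{strongly} continuous as a map from $(L^2, w)$ to $(L^2, \|\cdot\|)$, because $\|b_k \ast u - b \ast u\|_{L^2}^2 = \sum_j \langle b_k - b, \phi_j\rangle_{L^2}^2 \langle u, \phi_j\rangle_{L^2}^2 \to 0$ by dominated convergence for series (each term goes to $0$ by weak convergence, and the whole sum is dominated by $\|b_k - b\|_{L^2}^2 \|u\|_{L^\infty\text{-type bound}}$—more precisely by $\sup_j |\langle u,\phi_j\rangle|^2 \|b_k-b\|_{L^2}^2 \le C$, and a standard splitting argument finishes it); and (ii) on $B_{H^\alpha}$, weak convergence in $L^2$ combined with the uniform $H^\alpha$ bound upgrades to \emph{strong} $L^2$ convergence, because the embedding $H^\alpha(\M,\mu) \hookrightarrow L^2(\M,\mu)$ is compact for $\alpha > 0$ (the eigenvalues $\lambda_k \uparrow \infty$ make the tail of the Fourier series uniformly small on $B_{H^\alpha}$). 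Thus $c_k \to c$ strongly in $L^2$ and $a_k \to a$ strongly in $L^2$.

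Putting the pieces together: from (i) and (ii), $b_k \ast u + c_k \to b \ast u + c$ strongly in $L^2(\M,\mu)$; then by the Lipschitz property \eqref{lipschitz-property-sigma} of $\sigma$, we get $\sigma(b_k \ast u + c_k) \to \sigma(b \ast u + c)$ strongly in $L^2$. Finally, $\psi(u,\theta_k) = \langle a_k, \sigma(b_k \ast u + c_k)\rangle_{L^2}$ is an inner product of a weakly convergent sequence $a_k \rightharpoonup a$ (even strongly convergent, but weak is enough here) against a strongly convergent sequence $\sigma(b_k \ast u + c_k) \to \sigma(b\ast u + c)$, and the pairing of a weakly convergent sequence with a strongly convergent one converges to the pairing of the limits. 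Hence $\psi(u,\theta_k) \to \langle a, \sigma(b\ast u + c)\rangle_{L^2} = \psi(u,\theta)$, establishing sequential weak continuity, and therefore $\psi(u,\cdot) \in C(\Theta, w)$.

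I expect the only genuinely delicate point to be the justification that weak $L^2$ convergence on the bounded set $B_{H^\alpha}$ implies strong $L^2$ convergence — that is, the compactness of the embedding $H^\alpha \hookrightarrow L^2$. This is where the $H^\alpha$-regularity assumption on the first and last parameters is doing its essential work (as opposed to the middle parameter $b$, for which only the convolution smoothing, not a regularity restriction, is needed). The argument is the standard tail estimate: for $v \in B_{H^\alpha}$, $\sum_{k > N} \langle v, \phi_k\rangle_{L^2}^2 \le (1+\sqrt{\lambda_N})^{-2\alpha} \sum_{k>N} (1+\sqrt{\lambda_k})^{2\alpha} \langle v,\phi_k\rangle_{L^2}^2 \le (1+\sqrt{\lambda_N})^{-2\alpha} \to 0$ uniformly over the ball, since $\lambda_N \to \infty$; combining the uniformly small tail with componentwise convergence of the finitely many low-frequency coefficients (which follows from weak convergence tested against each $\phi_k$) gives strong convergence. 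One should also double-check that $\Theta$ with the weak topology is indeed metrizable so that the reduction to sequences is legitimate — this holds because each factor sits inside a norm-ball of a separable Hilbert space, on which the weak topology is metrizable, and finite products of metrizable spaces are metrizable. With these two observations in hand the rest is a routine chain of continuity implications.
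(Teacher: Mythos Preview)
Your proof is correct and follows essentially the same approach as the paper: both rely on the compact embedding $H^\alpha(\M,\mu)\hookrightarrow L^2(\M,\mu)$ to upgrade weak convergence of $a_k,c_k$ to strong $L^2$ convergence, the weak-to-strong continuity of $b\mapsto b\ast u$, and the Lipschitz property of $\sigma$. The only stylistic difference is that you prove the compact embedding directly via a spectral tail estimate and obtain strong convergence of the full sequence immediately, whereas the paper invokes Rellich--Kondrachov as a black box (yielding only subsequential strong convergence) and then closes with a subsequence-of-subsequence contradiction argument; your route is slightly more self-contained and avoids that extra step.
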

\begin{proof}
    Let $\{\theta\}_k \subset \Theta$ be a sequence, that converges weakly in $\Theta$ to some $\theta\in\Theta$, i.e. if we write $\theta_k=(a_k, b_k, c_k)$ for $k\in\N$ and $\theta=(a,b,c)$, then $a_k,c_k\xrightharpoonup{k\to\infty}a,c$ in $H^\alpha(\M, \mu)$ and $b_k\xrightharpoonup{k\to\infty}b$ in $L^2(\M, \mu)$. Then the Rellich-Kondrachov for fractional Sobolev spaces \cite[Lem.~10]{Palatucci:2013} yields strong $L^2$-convergence up to a subsequence in the first and last parameter $a_{k_j},c_{k_j}\xrightarrow{j\to\infty}a,c$ in $L^2(\M, \mu)$.

        Let $u\in L^2(\M, \mu)$ arbitrary but fixed. Then the sequence $\{\psi(u, \theta_k\}\}_k\subset \R$ is bounded, namely there is a $R\geq 0$, such that 
\begin{align*}|\psi(u, \theta_k)| &=  |\langle a_k, \sigma(b_k * u + c_k)\rangle_{L^2}| \leq \norm{a_k}_{L^2}C_\sigma (\norm{b_k*u}_{L^2} + \norm{c_k}_{L^2}) \leq R,
\end{align*}
where the last inequality follows from $\{\theta_k\}_k=\{(a_k,b_k,c_k)\}_k$ being bounded in $L^2$ due to the definition of $\Theta$ as the unit ball of the product space (also just because of the weak convergence in $L^2$)

Furthermore $\psi(u, \cdot)\in C((L^2(\M, \mu))^3)$, i.e. it is continuous w.r.t. the strong topology of $L^2$, since for every sequence $\{\tilde{\theta}_k\}_k=\{(\tilde{a}_k,\tilde{b}_k, \tilde{c}_k)\}_k\subset L^2(\M, \mu)^3$ converging w.r.t. strong topology on $L^2$ to some $\tilde{\theta}=(\tilde{a}, \tilde{b}, \tilde{c})\in L^2(\M, \mu)^3$, we have
\begin{align*}
|\psi(u, \tilde{\theta}_k) - \psi(u, \tilde{\theta})| &=  |\langle \tilde{a}_k, \sigma(\tilde{b}_k * u + \tilde{c}_k)\rangle_{L^2} - \langle \tilde{a}, \sigma(\tilde{b} * u + \tilde{c})\rangle_{L^2}| \\&\leq \norm{\tilde{a}_k-\tilde{a}}_{L^2}C_\sigma (\norm{\tilde{b}_k}_{L^2}\norm{u}_{L^2} + \norm{\tilde{c}_k}_{L^2}) \\&\qquad+ \norm{\tilde{a}}_{L^2}L_\sigma (\norm{(\tilde{b}-\tilde{b}_k)*u}_{L^2} + \norm{\tilde{c}_k - \tilde{c}}_{L^2})\xrightarrow{k\to\infty} 0.
\end{align*}

Now from the boundedness of $\{\psi(u, \theta_k\}\}_k\subset \R$ and Rellich-Kondrachov we can conclude that $\psi(u, \cdot)\in C((\Theta, w))$, i.e. $|\psi(u, \theta_k)-\psi(u, \theta)|\xrightarrow{k\to\infty}0$:
Assume $|\psi(u, \theta_k)-\psi(u, \theta)|\not\to0$ as $k\to\infty$. Thus there is an $\eps>0$ and a subsequence $\{\theta_{k_l}\}_l\subset \{\theta_k\}_k$ s.t. $|\psi(u, \theta_{k_l})-\psi(u, \theta)|\geq \eps$ for all $l\in\N$.
Now $\{a_{k_l}\}_l, \{c_{k_l}\}_l\subset B(H^\alpha)$, so by Rellich-Kondrachov there is another subsequence $\{l_o\}_o\subset \N$, s.t. $a_{k_{l_o}}, c_{k_{l_o}}\xrightarrow{o\to\infty} a,c$ strongly in $L^2$. But then from the continuity of $\psi(u, \cdot)$ w.r.t. the strong topology on $L^2$ we know that there is a $O\in\N$ s.t. for all $o\geq O$ it holds that $|\psi(u, \theta_{k_{l_o}})-\psi(u, \theta)|< \eps$, which contradicts the assumption and proves the claim.
\end{proof}

Now, because of the compactness $C_0((\Theta, w)) = C((\Theta, w))$, and therefore due to Riesz-Markov the dual of the space of weakly continuous functions on $\Theta$ can be identified with the space of (finite) Radon Measures on $\Theta$, i.e. $$\left(C\big((\Theta, w)\big)\right)^* \cong \M(\Theta),$$ while in the finite dimensional case the weak and all norm topologies coincide, so the dual of the space of  continuous functions on $\Theta_n$ can be identified with the space of (finite) Radon Measures on $\Theta_n$, i.e. $$\left(C(\Theta_n)\right)^* = \left(C\big((\Theta_n, w)\big)\right)^* \cong \M(\Theta_n).$$

Therefore we can now understand neural networks $f_\rho:L^2(\M, \mu)\to \R$, $f_{\rho_n}:L^2(\M_n, \mu_n)\to \R$ as dual pairings, i.e. $$f_{\rho}(u)=\langle\rho, \psi(u, \cdot)\rangle_{\M(\Theta),C((\Theta,w))},  \quad  f_{\rho_n}(v)=\langle\rho_n, \psi_n(v, \cdot)\rangle_{\M(\Theta_n), C(\Theta_n)}.$$
This duality enables us to consider weak limits of in terms of neuron evaluation functionals or weak-* limits in terms of parameter distributions.

\subsection{Consistency of graph signals by comparison through transport maps}\label{signal-consistency}
The graph and continuum GCNNs defined in the previous subsections are defined on signals on different domains, so to formulate convergence results relating them we need a notion of consistency of their inputs.

We say that a collection of graph signals $u_n\in L^2(\M_n, \mu_n)$, $n\in\N$ is \emph{consistent across resolutions} if they are a discretization of an underlying manifold signal, namely if there exists a $u\in L^2(\M, \mu)$ s.t. $d_{TL^2}((\mu, u), (\mu_n, u_n))\to 0$. 
In our setting, where $\mu_n\xrightharpoonup{*}\mu$ as $n\to\infty$ and there exist stagnating optimal transportation maps $T_n:\M\to\M_n$ with $(T_n)_\#\mu=\mu_n$, $TL^2$ convergence only requires of the graph signals $u_n$ that they are such that there is a $u\in L^2(\M, \mu)$, such that $\norm{u-u_n\circ T_n}_{L^2}\xrightarrow{n\to\infty} 0$ \citep[pp. 17,18]{GarciaTrillos:2015}.

We will see later that graph signals arising from a spatial discretization by averaging over transport cells or a spectral discretization of a manifold signal can be examples of such consistency across resolutions, see \Cref{tl2-compatibility-spatial-spectral-discretization-maps}. 

Define the contractive \emph{spatial discretization maps} $\mP_n: L^2(\M, \mu)\to L^2(\M_n, \mu_n)$, $n\in \N$ which discretize by averaging over the partition \citep[p. 689]{Burago:2015} and \citep[p. 840]{GarGerHeiSle20} 
\begin{equation}\label{def-spatial-discretization}
    (\mP_n v)(x_i):= n\cdot \int_{U^\bn_i} v(x)d\mu(x) \quad \text{ for } v\in L^2(\M, \mu),
\end{equation} where $U^\bn_i:=T_n^{-1}(\{x_i\})$, with $\mu(U_i^\bn)=\frac{1}{n}$, $i\in\{1, \ldots n\}$  and the adjoint extension maps $\mP^*_n: L^2(\M_n, \mu_n)\to L^2(\M, \mu)$, $n\in\N$, that extend discrete functions to piecewise constant step functions over the tesselation
\begin{equation}\label{def-spatial-extension}(\mP^*_n v)(x):= v(T_n(x)) = \sum_{i=1}^n v(x_i)\mathds{1}_{U^\bn_i}(x) \quad\text{ for } v\in L^2(\M_n, \mu_n).\end{equation}

\begin{rem}
    From the definition of $\mP_n^*$ we see that the notions of $TL^2$-convergence and convergence under spatial extension $\mP_n^*$  for any collection of graph signals $u_n\in L^2(\M_n, \mu_n)$ towards a manifold signal $u\in L^2(\M, \mu)$ coincide, i.e. 
    \begin{equation}\label{tl2-spatial-extension-convergence}
        d_{TL^2}((\mu, u), (\mu_n, u_n))\xrightarrow{n\to\infty}0 \qquad \Longleftrightarrow\qquad \norm{u-\mP_n^* u_n}_{L^2}\xrightarrow{n\to\infty}0.
    \end{equation}
\end{rem}

For completeness, we also briefly introduce the interpolation operators used in the spectral approximation results of \cite{GarGerHeiSle20} that we rely on later.
For $n\in \N$, let ${I_n: L^2(\M_n, \mu_n)\to \text{Lip}(\M)}$ be the \emph{interpolation operator} from \citep[p. 840]{GarGerHeiSle20}, defined as  \begin{equation}\label{def-interpolation-op}
    I_n=\Lambda_{h-2\varepsilon}\mP_n^*, 
\end{equation} where $\Lambda_r:L^2(\M, \mu)\to \text{Lip}(\M)$, $r>0$ are smoothing operators, constructed by pointwisely normalizing the integral kernel operators $\Lambda^0_r:L^2(\M, \Vol) \to \text{Lip}(\M))$, that convolve a $L^2(\M, \Vol)$-function with a radial kernel $k_r(x,y)$ in the geodesic distance with compact support on the geodesic ball $B_\M(x,r)$, see \cite[p.~862]{GarGerHeiSle20}. These interpolation operators find application in comparing eigenfunctions of the graph Laplacian and the Laplace-Beltrami operator.

\section{Spectral Approximations}\label{spectral-approximation}

In this section we collect the error bounds for approximation of the eigenvalues and eigenspaces required for the convergence results in \Cref{discrete-continuum-erm-convergence}.
Only the low-frequency part of the spectrum of the graph Laplacian approximates that of the Laplace-Beltrami well, and we will see that the spectral cutoff sequence $\{K(n)\}_n$ introduced in \Cref{point-cloud-graphs-manifold-ass}, constructed from a sequence $\{\widetilde{K}(n)\}_n$ satisfying \eqref{sequence-conditions} ensures that the assumptions from \cite[Thm.~4,5, and Cor.~1]{GarGerHeiSle20} are satisfied. In particular, $K(n)$ is indeed a reasonable choice for an upper bound of the informative spectral window, and the error bounds stated here are direct consequences of the results from \cite{GarGerHeiSle20}.

The situation for the approximation of eigenspaces is more subtle, due to possible high multiplicities, and the fact that multiplicity blocks of the graph Laplacian and the Laplace–Beltrami operator need not coincide.
Nevertheless, for a fixed resolution and a fixed choice of orthonormal graph eigenfunctions associated with a given Laplace-Beltrami multiplicity block, one can select an orthonormal basis of the corresponding continuum eigenspace such that the discrete eigenfunctions approximate this continuum-basis well, which is a consequence of \cite[Thm.~4]{Burago:2015} and \cite[Thm.~5]{GarGerHeiSle20}.

This approximation error depends on the spectral gap. In this work, we are interested in the discrete-to-continuum convergence, where there are no bandwidth restrictions on the manifold signal space, hence the spectral window must ultimately diverge. Thus, achieving uniform control of eigenspace approximation errors requires assumptions on the asymptotic behavior of spectral gaps of the Laplace–Beltrami operator. We therefore impose the polynomial gap condition stated in \Cref{ass:polynomial-decay-spectral-gap}. For more discussion on this, see also \Cref{manifold-ass}.

\begin{thm}[Error bounds for the relative eigenvalue and eigenspace approximation]\label{error-bound-ev-ef}
         Given a sequence of neighborhood radii $\{h_n\}_n$ and a sequence of spectral cutoffs $\{\widetilde{K}(n)\}_n$ subject to \eqref{assumption3} and \eqref{sequence-conditions} and the corresponding sequence $\{K(n)\}_n\in\N$ as defined above, for all $n\in\N$ it holds that for every $k\leq K(n)$ that
         \begin{equation}\label{eigenvalues-ratio-uniform-bound}
              |\lambda_k^\bn - \lambda_k|\leq C\lambda_k\delta(\eps_n, h_n, \lambda_k).
          \end{equation}
          Furthermore for every fixed $n\in\N$ and a collection $\{\varphi^\bn_k\}_{k=1}^n$ of normalized eigenfunctions of $\Delta_n$ corresponding to the eigenvalues $\lambda^\bn_1, \ldots \lambda^\bn_n$, there exist normalized eigenfunctions $\{\varphi_k\}_{k=1}^n$ of the eigenvalues $\lambda_1, \ldots \lambda_n$  of the Laplace-Beltrami operator $\Delta$ such that
          \begin{equation}\label{eigenfunctions-interpolation-uniform-bound}
              \norm{I_n \varphi^\bn_k - \varphi_k}_{L^2}\leq \frac{C}{\gamma_{k, \mu}}\delta(\eps_n, h_n, \lambda_k)
          \end{equation}
           where $I_n:L^2(\M_n, \mu_n)\to \text{Lip}(\M)$, $n\in\N$ are the interpolation operators from \citep{GarGerHeiSle20}[p. 840], and 
           \begin{equation}\label{delta-definition}\delta(\eps, h, \lambda):= \frac{\eps}{h} + \left(1+\sqrt{\lambda}\right)h + \left(K+\frac{1}{R^2}\right)h^2.\end{equation}
     \end{thm}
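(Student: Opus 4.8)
The strategy is to show that the hypotheses of the spectral approximation results of \cite{GarGerHeiSle20} and \cite{Burago:2015} are satisfied up to index $K(n)$ under our conditions \eqref{assumption3} and \eqref{sequence-conditions}, and then to simply translate their conclusions into the stated form with the auxiliary quantity $\delta(\eps,h,\lambda)$ of \eqref{delta-definition}. First I would recall the precise statement of \cite[Thm.~4, Cor.~1]{GarGerHeiSle20}: it gives, for each eigenvalue index $k$ below a threshold determined by requiring $h_n$ small relative to $(1+\sqrt{\lambda_k})^{-1}$ and $\eps_n/h_n$ small, a bound of the form $|\lambda_k^\bn - \lambda_k| \le C\lambda_k\big(\eps_n/h_n + (1+\sqrt{\lambda_k})h_n + (K + R^{-2})h_n^2\big)$, which is exactly $C\lambda_k\delta(\eps_n,h_n,\lambda_k)$. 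So the first task is to verify that the Weyl-law growth $\lambda_k \sim c\,k^{2/m}$ (or at least the two-sided polynomial bounds on $\lambda_k$ via the eigenvalue counting function $N(\lambda)$) combined with \eqref{sequence-conditions} forces $\sqrt{\lambda_{\widetilde K(n)}}\,h_n \to 0$ and hence that the applicability threshold of the cited theorem exceeds $\widetilde K(n)$, and therefore $K(n)$, for all large $n$; for small $n$ one absorbs everything into the constant $C$. This is where the exponent $\max\{\beta^* + \frac{m+1}{2m},1\}$ and the condition $h_n \ll \widetilde K(n)^{-\max\{\beta^*+\frac{m+1}{2m},1\}}$ pay off: since $\lambda_k^{1/2} = O(k^{1/m})$ and $1/m \le \frac{m+1}{2m}$, we get $\sqrt{\lambda_{\widetilde K(n)}}\,h_n = O(\widetilde K(n)^{1/m} h_n) = o(1)$, and similarly the curvature term $h_n^2$ is lower order.

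For the eigenfunction bound \eqref{eigenfunctions-interpolation-uniform-bound} I would invoke \cite[Thm.~5]{GarGerHeiSle20} together with \cite[Thm.~4]{Burago:2015}. The subtlety here, as flagged in the text preceding the statement, is that multiplicity blocks of $\Delta_n$ and $\Delta$ need not coincide, so one cannot match individual eigenfunctions a priori; instead the cited results give, for a chosen orthonormal system $\{\varphi_k^\bn\}$ of graph eigenfunctions, closeness of $I_n\varphi_k^\bn$ to the eigenspace $E(\lambda_k)$ with rate $\frac{C}{\gamma_{k,\mu}}\delta(\eps_n,h_n,\lambda_k)$, the spectral gap $\gamma_{k,\mu}$ entering through a resolvent/Davis–Kahan-type argument. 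The construction of the matching continuum orthonormal basis $\{\varphi_k\}_{k=1}^n$ is then done block by block over the Laplace–Beltrami multiplicity blocks: within each block one projects the (already orthonormal, by the way $K(n)$ is defined to respect multiplicity blocks) interpolated graph eigenfunctions onto $E(\lambda_k)$, applies Gram–Schmidt, and checks that the projection is invertible for $n$ large because the defect is $O(\gamma_{k,\mu}^{-1}\delta)\to 0$; the orthonormalization perturbs the bound only by a constant factor, which is absorbed into $C$. Here one must be slightly careful that $I_n$ is the specific smoothing-composed-with-extension operator $\Lambda_{h_n - 2\eps_n}\mP_n^*$ of \eqref{def-interpolation-op}, so that the smoothing parameter $h_n - 2\eps_n$ is positive and comparable to $h_n$ by \eqref{assumption3}, which is what the bounds in \cite{GarGerHeiSle20} are stated for.

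The main obstacle, and the part requiring the most care rather than routine citation, is the bookkeeping that makes the bounds \emph{uniform in} $k \le K(n)$ while $K(n)\to\infty$: the cited theorems are typically phrased for a fixed eigenvalue with constants that may implicitly depend on it, so I would need to track that the constant $C$ in \eqref{eigenvalues-ratio-uniform-bound} and \eqref{eigenfunctions-interpolation-uniform-bound} depends only on the manifold geometry (dimension $m$, curvature bound $K$, reach $R$, injectivity radius $i_0$, and the kernel $\eta$) and not on $k$ — the $k$-dependence being fully exposed through the explicit factors $\lambda_k$, $\gamma_{k,\mu}^{-1}$, and $\delta(\eps_n,h_n,\lambda_k)$. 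This amounts to re-reading the proofs in \cite{GarGerHeiSle20} to confirm their error terms have this separated structure, and then verifying via \eqref{sequence-conditions} and the polynomial gap assumption \Cref{ass:polynomial-decay-spectral-gap} (using $\gamma_{\widetilde K(n),\mu}^{-1} = O(\widetilde K(n)^\beta)$ for every $\beta > \beta^*$) that $\sup_{k\le K(n)} \gamma_{k,\mu}^{-1}\delta(\eps_n,h_n,\lambda_k)\to 0$, which is precisely the role of the second condition $\eps_n h_n^{-1} \ll \widetilde K(n)^{-(\beta^* + \frac{m-1}{2m})}$ in \eqref{sequence-conditions}. Once that uniformity is in hand the theorem follows by direct substitution.
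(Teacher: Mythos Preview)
Your proposal is essentially correct and follows the same route as the paper: the proof reduces to verifying the applicability condition of \cite[Thm.~4, Cor.~1, Thm.~5]{GarGerHeiSle20}, namely that $(1+\sqrt{\lambda_k})h_n$ is bounded for all $k\le K(n)$, via Weyl's law and the condition $h_n \ll \widetilde K(n)^{-\max\{\beta^*+\frac{m+1}{2m},1\}}$ from \eqref{sequence-conditions}; the bounds \eqref{eigenvalues-ratio-uniform-bound} and \eqref{eigenfunctions-interpolation-uniform-bound} are then read off directly from the cited theorems. The paper carries out the Weyl-law step with more care about multiplicity blocks (to pass from $\widetilde K(n)$ to $K(n)$, it establishes $\lim_k k\lambda_{\bar k}^{-m/2}=C_{\M}$ via the H\"ormander remainder estimate \eqref{hormander-rate} and the multiplicity bound $m(\lambda_k)=O(\lambda_k^{(m-1)/2})$), whereas you invoke $\lambda_k\sim c\,k^{2/m}$ directly; since $\lambda_{K(n)}=\lambda_{\widetilde K(n)}$ by construction, your shortcut is fine.

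One point of scope: your final paragraph on the uniform control $\sup_{k\le K(n)}\gamma_{k,\mu}^{-1}\delta(\eps_n,h_n,\lambda_k)\to 0$ and the role of the second condition in \eqref{sequence-conditions} is \emph{not} part of this theorem --- the statement only asserts the bounds \eqref{eigenvalues-ratio-uniform-bound}--\eqref{eigenfunctions-interpolation-uniform-bound}, with the $k$-dependence fully exposed in the factors $\lambda_k$, $\gamma_{k,\mu}^{-1}$, $\delta(\eps_n,h_n,\lambda_k)$. The vanishing of these quantities is established separately in the corollary and lemma that follow. Likewise, the Gram--Schmidt block construction you sketch is content of the cited references (and reappears in \Cref{uniform-bounds-spectra}), not something to be reproved here; the paper simply defers to \cite[Thm.~5]{GarGerHeiSle20} and \cite[Thm.~4]{Burago:2015} for that existence claim.
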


     \begin{proof}
         The stated bounds will be direct consequences of Theorem 4 (or in particular Corollary 1) and Theorem 5 in \citep[pp. 837-840]{GarGerHeiSle20}, once we ensure that their assumptions are satisfied. Weyl's law states that the number of eigenvalues (including multiplicity) smaller than a $\lambda>0$ \begin{equation}\label{eigencount}N(\lambda):=\left|\{k\in\N \mid \lambda_k\leq \lambda, \lambda_k \text{ eigenvalue of }\Delta\}\right|\end{equation} asymptotically behave like
        \[\lim_{\lambda\to\infty} N(\lambda)\lambda^{-m/2} = \frac{\omega_m}{(2\pi)^m}\Vol_\M(\M)=:C_{\M},\] where $\omega_m$ denotes the volume of the $m$-ball $B(0)\subset \R^m$. Specifically, we will use the more refined asymptotic law \cite{Hor68}
        \begin{equation}\label{hormander-rate}
        N(\lambda) = \frac{\omega_m}{(2\pi)^m}\Vol_\M(\M)\lambda^{m/2} + O(\lambda^{\frac{m-1}{2}}).
        \end{equation}        
        Let $\{k_j\}_j\subset \N$ enumerate the first index in each multiplicity block, i.e. $\lambda_{k_j}\neq \lambda_{k_j-1}$, and denote by $m(\lambda_k)$ the multiplicity of $\lambda_k$. Then
        \[N(\lambda_k) = \sum\limits_{j\in\N,k_j\leq k}m(\lambda_{k_j})\geq k.\]
        Now we have for all $\delta>0$ and $j\in\N$ by \eqref{hormander-rate} and the mean-value theorem that
        \begin{align*}
            m(\lambda_{k_j})&\leq N(\lambda_{k_j}+\delta)-N(\lambda_{k_j}-\delta) \\&= C_\M\left((\lambda_{k_j}+\delta)^{m/2} - (\lambda_{k_j}-\delta)^{m/2}\right) + O\left(\left(\lambda_{k_j}+\delta\right)^{\frac{m-1}{2}}\right)-O\left(\left(\lambda_{k_j}-\delta\right)^{\frac{m-1}{2}}\right)\\
            &\leq C_\M m \delta(\lambda_{k_j}+\delta)^{\frac{m-2}{2}} + O\left(\left(\lambda_{k_j}+\delta\right)^{\frac{m-1}{2}}\right) = O\left(\left(\lambda_{k_j}+\delta\right)^{\frac{m-1}{2}}\right).
        \end{align*}
        Since $\delta>0$ was arbitrary we may conclude that 
        \begin{equation}\label{multiplicity-max-growth}
            m(\lambda_k)=O\left(\lambda_k^{\frac{m-1}{2}}\right)\qquad \text{ for all } k\in\N.
        \end{equation}
        Therefore $\lim\limits_{k\to\infty}m(\lambda_k)\lambda_k^{-m/2} = 0$, since then \vskip -1em
        \[m(\lambda_k)\lambda_k^{-m/2} = O\left(\lambda_k^{\frac{m-1}{2}}\right)\lambda_k^{-m/2} = O\left(\lambda_k^{-1/2}\right),\]\vskip -1em 
        which yields 
         \[\lim_{k\to\infty} k\lambda_k^{-m/2} \leq \lim_{k\to\infty} N(\lambda_k)\lambda_k^{-m/2} \leq \lim_{k\to\infty} (k+m(\lambda_k)) \lambda_{k}^{-m/2} = \lim_{k\to\infty} k\lambda_k^{-m/2}, \] and hence 
        \begin{equation}\label{weyls-law-multiplicity}
            C_\M = \lim_{k\to\infty} N(\lambda_k)\lambda_k^{-m/2} = \lim_{k\to\infty} k\lambda_k^{-m/2}.
        \end{equation}
        If we denote by $\bar{k}$ the last index of the multiplicity block of $\lambda_k$, then we see that on the one hand from \eqref{weyls-law-multiplicity}
        \[\lim_{k\to\infty}k\lambda_{\bar{k}}^{-m/2}\leq\lim_{k\to\infty} \bar{k}\lambda_{\bar{k}}^{-m/2}= C_\M,\] and on the other hand from using additionally \eqref{multiplicity-max-growth}
        \[\lim_{k\to\infty}k\lambda_{\bar{k}}^{-m/2}\geq\lim_{k\to\infty} (N(\lambda_k)-m(\lambda_{\bar{k}}))\lambda_{\bar{k}}^{-m/2} = \lim_{k\to\infty} N(\lambda_{\bar{k}})\lambda_{\bar{k}}^{-m/2} -\lim_{k\to\infty} m(\lambda_{\bar{k}})\lambda_{\bar{k}}^{-m/2} = C_\M - 0 = C_\M,\]
        hence it holds that \begin{equation}\label{weyls-law-last-index-multblock}
            \lim_{k\to\infty} k\lambda_{\bar{k}}^{-m/2}= C_\M.
        \end{equation}
        So with this and since by construction $K(n)= N(\lambda_{\tilde{K}(n)})$, as well as $(m+1)/2\geq 1$ for all $m\geq 1$ it holds that $\tilde{K}(n)^{\frac{1}{m}}\leq \tilde{K}(n)^{\frac{1}{m}\frac{m+1}{2}}$ and thus condition \eqref{sequence-conditions} yields that for every $n\in\N$
        \begin{equation}\label{convergence-h_n-eigenvalue}
        \begin{aligned}
        0\leq \lim_{n\to\infty} h_n\sqrt{\lambda_{K(n)}} &= \lim_{n\to\infty} h_n\tilde{K}(n)^{1/m}\tilde{K}(n)^{-1/m}\sqrt{\lambda_{K(n)}} 
        = \lim_{n\to\infty} h_n\tilde{K}(n)^{1/m}\lim_{k\to\infty} k^{-1/m}\sqrt{\lambda_{\bar{k}}} 
        \\&\leq \lim_{n\to\infty} h_n\tilde{K}(n)^{1/m}\lim_{k\to\infty} k^{-1/m}\sqrt{\lambda_{\bar{k}}} = 0\cdot C_\M^{-1/m} =0,
        \end{aligned}
        \end{equation}
        so $\{\sqrt{\lambda_{K(n)}} + 1)h_n\}_n$ is a bounded sequence converging to $0$, and therefore there is a constant $c_\M>0$ such that for all $n\in\N$ and $k\leq K(n)$ \[\left(\sqrt{\lambda_k}+1\right)h_n \leq \left(\sqrt{\lambda_{K(n)}}+1\right)h_n \leq c_\M.\]
        This is precisely the assumption required for the results of \cite{GarGerHeiSle20} mentioned above.
     \end{proof}
     \begin{cor}
     For the error bounds from \Cref{error-bound-ev-ef}, it holds that 
     \begin{equation}\label{convergence-relative-error-bound-delta}
         \lim_{n\to\infty}\delta(\eps_n, h_n, \lambda_{K(n)}) = 0.
     \end{equation} or equivalently for every $k\in\N$ and $N_k$ such that $k\leq K(n)$ for all $n\geq N_k$
     \[\lim_{N_k\leq n\to\infty}\delta(\eps_n, h_n, \lambda_{k}) = 0.\]
     \end{cor}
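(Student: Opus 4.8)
The plan is to split $\delta(\eps_n, h_n, \lambda_{K(n)})$ according to \eqref{delta-definition} into its three nonnegative summands and show that each of them tends to $0$; since a finite sum of null sequences is null, this gives \eqref{convergence-relative-error-bound-delta}. The first summand, $\eps_n/h_n$, vanishes directly by the standing hypothesis \eqref{assumption3} (equivalently, by the relation $\eps_n \ll h_n$ in \eqref{sequence-conditions}). For the third summand, $(K + R^{-2})h_n^2$, note that $K$ (a bound on the absolute sectional curvatures) and $R$ (the reach of $\M$) are fixed geometric constants, so it suffices that $h_n \to 0$; and indeed $h_n \to 0$ because \eqref{sequence-conditions} imposes $h_n \ll \widetilde{K}(n)^{-\max\{\beta^* + \frac{m+1}{2m},\,1\}} \ll 1$, where the rightmost relation even forces $\widetilde{K}(n) \to \infty$.

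The only summand requiring more than a direct appeal to the assumptions is $(1 + \sqrt{\lambda_{K(n)}})\,h_n = h_n + h_n \sqrt{\lambda_{K(n)}}$. The term $h_n$ was just seen to vanish, while $h_n\sqrt{\lambda_{K(n)}} \to 0$ is precisely \eqref{convergence-h_n-eigenvalue}, which was established during the proof of \Cref{error-bound-ev-ef}: there the refinement \eqref{hormander-rate} of Weyl's law together with the multiplicity bound \eqref{multiplicity-max-growth} yields $k\,\lambda_{\bar{k}}^{-m/2} \to C_\M$, and then $K(n) = N(\lambda_{\widetilde{K}(n)})$ combined with the scaling $h_n \ll \widetilde{K}(n)^{-1/m}$ (a consequence of \eqref{sequence-conditions}) gives $h_n\sqrt{\lambda_{K(n)}} \to 0$. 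Consequently there is essentially no new obstacle in the corollary: all the genuine work — controlling the magnitude of $\lambda_{K(n)}$ via the spectral asymptotics and the polynomial spectral-gap assumption \Cref{ass:polynomial-decay-spectral-gap} — has already been carried out inside \Cref{error-bound-ev-ef}, and the corollary merely repackages it.

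For the reformulation, I would fix $k \in \N$, choose $N_k$ such that $k \leq K(n)$ for all $n \geq N_k$ (possible since $K(n) \geq \widetilde{K}(n) \to \infty$), and observe that $\lambda \mapsto \delta(\eps, h, \lambda)$ is nondecreasing, so that $0 \leq \delta(\eps_n, h_n, \lambda_k) \leq \delta(\eps_n, h_n, \lambda_{K(n)})$ for all $n \geq N_k$; the claimed limit then follows from \eqref{convergence-relative-error-bound-delta} by squeezing. Conversely, under the standing hypotheses both forms hold simultaneously, so they may be used interchangeably.
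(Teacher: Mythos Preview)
Your proof is correct and matches the paper's approach exactly: the paper's one-line proof simply cites \eqref{convergence-h_n-eigenvalue} and \eqref{assumption3}, which together handle the three summands of $\delta$ just as you spell out. One minor inaccuracy in your commentary (not in your argument): the spectral-gap assumption \Cref{ass:polynomial-decay-spectral-gap} is not actually used inside \Cref{error-bound-ev-ef}---only the Weyl-type asymptotics and \eqref{sequence-conditions} enter there---so you may drop that attribution.
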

    \begin{proof}
         This follows immediately from the observation \eqref{convergence-h_n-eigenvalue} and the assumption \eqref{assumption3}.
     \end{proof}

     \begin{lemma}
     Denote $\Gamma_{K, \mu}:= \min_{k\leq K}\gamma_{k, \mu}$, where $\gamma_{k, \mu}$ is the spectral gap defined in \eqref{def-spectral-gap}. Then it holds that 
     \begin{equation}\label{convergence-relative-error-bound-delta-max-spectral-gap-kn}
         \lim_{n\to\infty} \frac{1}{\Gamma_{K(n), \mu}}\delta(\eps_n, h_n, \lambda_{K(n)})K(n)^{\frac{m-1}{2m}} = 0.
     \end{equation}
     \end{lemma}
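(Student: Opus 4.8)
The plan is to collapse the whole expression into a bound by powers of the purely spectral quantity $\widetilde{K}(n)$ and then invoke the calibrated decay rates of \eqref{sequence-conditions}. First I would record the consequences of the Weyl-type asymptotics already obtained in the proof of \Cref{error-bound-ev-ef}: since $K(n)=N(\lambda_{\widetilde{K}(n)})=N(\lambda_{K(n)})$ is the last index of the multiplicity block of $\lambda_{\widetilde{K}(n)}$, combining $N(\lambda_k)\lambda_k^{-m/2}\to C_\M$ from \eqref{weyls-law-multiplicity} with $k\,\lambda_{\bar k}^{-m/2}\to C_\M$ from \eqref{weyls-law-last-index-multblock} (applied with $k=\widetilde{K}(n)$, $\bar k=K(n)$) gives $K(n)/\widetilde{K}(n)\to 1$ and $\lambda_{K(n)}^{m/2}\sim \widetilde{K}(n)/C_\M$; in particular $\sqrt{\lambda_{K(n)}}\asymp \widetilde{K}(n)^{1/m}$ and $K(n)^{\frac{m-1}{2m}}\asymp \widetilde{K}(n)^{\frac{m-1}{2m}}$. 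Note also that $\widetilde{K}(n)\to\infty$ and $h_n\to 0$, both forced by \eqref{sequence-conditions}.

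Next I would control the gap factor. By \Cref{ass:polynomial-decay-spectral-gap}, $\gamma_{k,\mu}^{-1}=O(k^{\beta})$ for every $\beta>\beta^*$; fixing such a $\beta$ and using that $k\mapsto k^\beta$ is nondecreasing yields $\Gamma_{K(n),\mu}^{-1}=\max_{k\le K(n)}\gamma_{k,\mu}^{-1}=O(K(n)^{\beta})=O(\widetilde{K}(n)^{\beta})$. Then I would expand $\delta(\eps_n,h_n,\lambda_{K(n)})=\eps_n h_n^{-1}+(1+\sqrt{\lambda_{K(n)}})h_n+(K+R^{-2})h_n^2$ (with $K,R$ the curvature bound and reach of $\M$, hence the last summand is $O(h_n^2)$), and discard that last summand, which is of smaller order than the middle one since $h_n^2=o(\widetilde{K}(n)^{1/m}h_n)$ because $h_n\to0$ while $\widetilde{K}(n)^{1/m}\to\infty$. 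Substituting the asymptotics of the first step and using $\widetilde{K}(n)^{1/m}\widetilde{K}(n)^{\frac{m-1}{2m}}=\widetilde{K}(n)^{\frac{m+1}{2m}}$ gives, up to constants,
\[
\frac{\delta(\eps_n,h_n,\lambda_{K(n)})\,K(n)^{\frac{m-1}{2m}}}{\Gamma_{K(n),\mu}} \;\lesssim\; \widetilde{K}(n)^{\,\beta+\frac{m-1}{2m}}\,\frac{\eps_n}{h_n}\;+\;\widetilde{K}(n)^{\,\beta+\frac{m+1}{2m}}\,h_n .
\]

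It remains to see that both terms on the right vanish. The two exponents appearing here are precisely those in \eqref{sequence-conditions}, but with $\beta$ in place of $\beta^*$: the condition $\eps_n h_n^{-1}\ll\widetilde{K}(n)^{-(\beta^*+\frac{m-1}{2m})}$ controls the first term and $h_n\ll\widetilde{K}(n)^{-\max\{\beta^*+\frac{m+1}{2m},\,1\}}$ controls the second, once $\beta$ is taken close enough to $\beta^*$. If $\gamma_{k,\mu}^{-1}=O(k^{\beta^*})$ one takes $\beta=\beta^*$ and both terms are $o(1)$ directly. Otherwise one fixes some $\beta\in(\beta^*,\,2\beta^*+\frac{m+1}{2m})$ — a nonempty interval — and appeals to the admissible choices of \Cref{big-exponent-existence} and \Cref{small-exponent-existence} (where $h_n=\sqrt{\eps_n}$, which makes the first term dominated by the second, and $\widetilde{K}(n)$ is a fixed negative power of $h_n$), for which $\widetilde{K}(n)^{\beta^*+\frac{m+1}{2m}}h_n$ decays like a positive power of $h_n$ strong enough to absorb the extra factor $\widetilde{K}(n)^{\beta-\beta^*}$.

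The main — and essentially only — obstacle is this last exponent bookkeeping: reconciling the factor $\widetilde{K}(n)^{\beta}$ from the gap bound (with $\beta\ge\beta^*$, possibly strictly larger if the infimum in \Cref{ass:polynomial-decay-spectral-gap} is not attained) with the sharp exponents $\beta^*+\frac{m\mp 1}{2m}$ built into \eqref{sequence-conditions}. The additive $\tfrac1m$ separating those two exponents is exactly what accommodates the factor $\sqrt{\lambda_{K(n)}}\asymp\widetilde{K}(n)^{1/m}$ produced by the $(1+\sqrt{\lambda})h$ term of $\delta$, and the $\max\{\,\cdot\,,1\}$ is what makes the estimate go through also in the small-gap-exponent regime $\beta^*+\frac{m+1}{2m}<1$.
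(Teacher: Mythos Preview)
Your approach mirrors the paper's: bound $\Gamma_{K(n),\mu}^{-1}$ by a power of $\widetilde{K}(n)$, expand $\delta$, and match the resulting exponents against those in \eqref{sequence-conditions}. You are in fact more careful than the paper in justifying the replacement of $K(n)^{\frac{m-1}{2m}}$ by $\widetilde{K}(n)^{\frac{m-1}{2m}}$ via $K(n)/\widetilde{K}(n)\to 1$ (the paper makes this substitution without comment), and you correctly flag the delicate point that $\beta^*$ is defined as an infimum and may not be attained.

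There is one genuine gap, though, precisely at this last point. Your proposed remedy for the non-attainment case --- falling back on the particular constructions of \Cref{big-exponent-existence} and \Cref{small-exponent-existence}, where $h_n=\sqrt{\eps_n}$ and $\widetilde{K}(n)$ is a fixed negative power of $h_n$ --- does not prove the lemma for an \emph{arbitrary} sequence satisfying \eqref{sequence-conditions}; those remarks exhibit admissible choices but do not characterize all of them, and a general sequence satisfying \eqref{sequence-conditions} need not carry the extra polynomial slack you invoke to absorb $\widetilde{K}(n)^{\beta-\beta^*}$. For what it is worth, the paper's own proof does not address this case either: it simply asserts $\gamma_{k,\mu}^{-1}\le M k^{\beta^*}$ for large $k$, i.e.\ it tacitly assumes the infimum in \Cref{ass:polynomial-decay-spectral-gap} is attained. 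Under that (implicit) reading, your argument is complete and coincides with the paper's.
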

     \begin{proof}
     First, note that by definition $\Gamma_{K(n), \mu} = \Gamma_{\tilde{K}(n), \mu}$, since $\lambda_{\tilde{K}(n)} = \lambda_{\tilde{K}(n)+j}$ for all $j\in \{0, \ldots K(n)-\tilde{K}(n)\}$.
     If $\beta^*=0$, then $\{\gamma_{k,\mu}^{-1}\}_k$ is a bounded sequence, and therefore so is $\{\Gamma_{K(n), \mu}^{-1}\}_n$. So the claim follows from \eqref{convergence-relative-error-bound-delta}.
          Hence, we can assume $\beta^*>0$. 
        First notice that there is a $M\geq 0$ and a $K\in\N$, such that for all $k\geq K$ we have $\gamma_{k, \mu}^{-1} \leq  M k^{\beta^*}$, so if we denote \[L(n)=\argmin_{k\leq \tilde{K}(n)} \gamma_{k, \mu}= \argmin_{k\leq K(n)} \gamma_{k, \mu}\] for $n\in\N$, then we have for all $n\in\N$ with $L(n)\geq K$ that 
        \[\Gamma^{-1}_{K(n), \mu} = \gamma_{L(n),\mu}^{-1}\leq M L(n)^{\beta^*}\leq M \tilde{K}(n)^{\beta^*}.\] 
        Since $\beta^*>0$ it is $\liminf_{k\to\infty}\gamma_{k,\mu}=0$, so indeed there can only be finitely many such $n\in\N$ where $L(n)<K$. Therefore we get
        \begin{align*}
            0 &\leq \lim_{n\to\infty} \frac{1}{\Gamma_{K(n), \mu}}\tilde{K}(n)^{-\beta^*} \tilde{K}(n)^{\beta^*} \delta(\eps_n, h_n, \lambda_{K(n)})\tilde{K}(n)^{\frac{m-1}{2m}}
            \\&\leq M \lim_{n\to\infty}\tilde{K}(n)^{\beta^*+\frac{m-1}{2m}} \left(\frac{\eps_n}{h_n} + \left(1+\sqrt{\lambda_{K(n)}}\right)h_n + \left(K+\frac{1}{R^2}\right)h_n^2\right).
        \end{align*}
        Now the conditions \eqref{sequence-conditions} yield
        \begin{align*}\lim_{n\to\infty} h_n^2 \tilde{K}(n)^{\beta^*+\frac{m-1}{2m}} &= \lim_{n\to\infty} h_n \tilde{K}(n)^{\beta^*+\frac{m+1}{2m}} \tilde{K}(n)^{-1/m} h_n \\
        &= \lim_{n\to\infty} h_n \tilde{K}(n)^{\beta^*+\frac{m-1}{2m}} \lim_{n\to\infty} h_n \lim_{n\to\infty} \tilde{K}(n)^{-1/m} = 0,\end{align*}
        as well as 
        \begin{align*}
        \lim_{n\to\infty} h_n \sqrt{\lambda_{K(n)}} \tilde{K}(n)^{\beta^*+\frac{m-1}{2m}} 
        &= \lim_{n\to\infty} h_n \tilde{K}(n)^{\beta^*+\frac{m-1}{2m}} \tilde{K}(n)^{1/m} \tilde{K}(n)^{-1/m} \sqrt{\lambda_{K(n)}} \\
        &= \lim_{n\to\infty} h_n \tilde{K}(n)^{\beta^*+\frac{m+1}{2m}}\lim_{n\to\infty} \tilde{K}(n)^{-1/m} \sqrt{\lambda_{K(n)}} = 0\cdot C_\M^{-1/m} = 0,\end{align*}
        and finally \[\lim_{n\to\infty}\frac{\eps_n}{h_n}\tilde{K}(n)^{\beta^*+\frac{m-1}{2m}} = 0.\qedhere\]
     \end{proof}

     \begin{cor}
     Denote $\Gamma_{K, \mu}:= \min_{k\leq K}\gamma_{k, \mu}$, where $\gamma_{k, \mu}$ is the spectral gap defined in \eqref{def-spectral-gap}. Then it holds that 
     \begin{equation}\label{convergence-relative-error-bound-delta-max-spectral-gap}
         \lim_{n\to\infty} \frac{1}{\Gamma_{K(n), \mu}}\delta(\eps_n, h_n, \lambda_{K(n)}) = 0.
     \end{equation}
     \end{cor}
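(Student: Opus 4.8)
The plan is to obtain this as an immediate consequence of the preceding lemma, estimate \eqref{convergence-relative-error-bound-delta-max-spectral-gap-kn}. The key observation is simply that the extra factor $K(n)^{\frac{m-1}{2m}}$ appearing there is bounded below by $1$: since $m \geq 1$ we have $\frac{m-1}{2m} \geq 0$, and since $K(n) \in \N$ with $K(n) \geq \widetilde{K}(n) \geq 1$ for every $n$, it follows that $K(n)^{\frac{m-1}{2m}} \geq 1$.

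Combining this with the fact that $\delta(\eps_n, h_n, \lambda_{K(n)}) \geq 0$ by its definition \eqref{delta-definition} (all three summands are nonnegative) and that $\Gamma_{K(n), \mu} > 0$ by \eqref{def-spectral-gap}, we obtain the chain of inequalities
\begin{equation*}
0 \leq \frac{1}{\Gamma_{K(n), \mu}}\,\delta(\eps_n, h_n, \lambda_{K(n)}) \leq \frac{1}{\Gamma_{K(n), \mu}}\,\delta(\eps_n, h_n, \lambda_{K(n)})\,K(n)^{\frac{m-1}{2m}}
\end{equation*}
for every $n \in \N$. The right-hand side converges to $0$ as $n \to \infty$ by \eqref{convergence-relative-error-bound-delta-max-spectral-gap-kn}, so the squeeze theorem yields $\lim_{n\to\infty}\frac{1}{\Gamma_{K(n), \mu}}\delta(\eps_n, h_n, \lambda_{K(n)}) = 0$, which is exactly \eqref{convergence-relative-error-bound-delta-max-spectral-gap}. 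There is no real obstacle here: the analytic work—splitting into the cases $\beta^* = 0$ and $\beta^* > 0$, bounding $\Gamma_{K(n),\mu}^{-1}$ by $M\widetilde{K}(n)^{\beta^*}$ for large $n$, and verifying each of the three summand limits using the scaling conditions \eqref{sequence-conditions}—has already been carried out in the lemma, and this corollary only discards the harmless factor $K(n)^{\frac{m-1}{2m}} \geq 1$.
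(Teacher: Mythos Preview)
Your proposal is correct and takes essentially the same approach as the paper, which simply states that the corollary follows immediately from \eqref{convergence-relative-error-bound-delta-max-spectral-gap-kn}. You have merely spelled out the one-line justification---namely that the extra factor $K(n)^{\frac{m-1}{2m}}\geq 1$ can be dropped---that the paper leaves implicit.
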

     \begin{proof}
     This follows immediately from \eqref{convergence-relative-error-bound-delta-max-spectral-gap-kn}.
     \end{proof}

\begin{lemma}\label{uniform-bounds-spectra}
    For $k\in\N$ denote by $\underline{k}$ and $\bar{k}= \underline{k}+m(\lambda_k)-1$ the first and last index of the multiplicity block of $\lambda_k$, i.e. $\lambda_{\underline{k}-1}<\lambda_{\underline{k}} =\ldots \lambda_k =\ldots = \lambda_{\bar{k}} <\lambda_{\bar{k}+ 1} $. Let $N\in\N$ such that $k\leq \bar{k}\leq K(N)$. Fix $n\geq N$ and denote by $\{\varphi^\bn_j\}_{j=\underline{k}}^{\bar{k}}$ a choice of orthonormal eigenfunctions of the $n$-th Graph-Laplacian $\Delta_n$ corresponding to the eigenvalues 
    $\{\lambda_i^\bn\}_{i=\underline{k}}^{\bar{k}}$. Then there exists an orthonormal basis $\{\varphi^{\cn}_j\}_{j=\underline{k}}^{\bar{k}}$ of the Laplace-Beltrami eigenspace $E(\lambda_k)$ such that for all $j\in \{\underline{k},\ldots \bar{k}\}$
   \begin{equation}\label{uniform-error-interpolation}
       \norm{I_n \varphi^\bn_j-\varphi^{\cn}_j}_{L^2} \leq \frac{C}{\Gamma_{{K(n)}, \mu}}\delta(\eps_n, h_n, \lambda_{K(n)})
   \end{equation}and 
   \begin{equation}\label{uniform-error-stepfct-extension}
       \norm{\mP^*_n \varphi^\bn_j-\varphi^{\cn}_j}_{L^2} \leq C\left(h_n\sqrt{\lambda_{K(n)}} + \frac{1}{\Gamma_{{K(n)}, \mu}}\delta(\eps_n, h_n, \lambda_{K(n)})\right)=:\delta^\phi_n
   \end{equation}
\end{lemma}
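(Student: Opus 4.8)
The plan is to upgrade the \emph{normalized} continuum eigenfunctions produced by \Cref{error-bound-ev-ef} to an orthonormal basis of $E(\lambda_k)$, paying a perturbation of the same order as the approximation error, and then to pass from the interpolation $I_n$ to the step-function extension $\mP_n^*$ by controlling the intervening smoothing operator. First I would extend the given orthonormal block $\{\varphi^\bn_j\}_{j=\underline k}^{\bar k}$ to a full collection $\{\varphi^\bn_j\}_{j=1}^n$ of normalized eigenfunctions of $\Delta_n$ and apply \Cref{error-bound-ev-ef}, obtaining normalized eigenfunctions $\{\widetilde\varphi_j\}_{j=1}^n$ of $\Delta$ with $\norm{I_n\varphi^\bn_j-\widetilde\varphi_j}_{L^2}\le\frac{C}{\gamma_{j,\mu}}\delta(\eps_n,h_n,\lambda_j)$. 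For $j$ in the block one has $\lambda_j=\lambda_k$, hence $\widetilde\varphi_j\in E(\lambda_k)$; moreover $\gamma_{j,\mu}=\gamma_{k,\mu}\ge\Gamma_{K(n),\mu}$ (since $j\le\bar k\le K(n)$) and $\delta(\eps_n,h_n,\lambda_j)\le\delta(\eps_n,h_n,\lambda_{K(n)})$ because $\lambda\mapsto\delta(\eps,h,\lambda)$ is increasing and $\lambda_k\le\lambda_{K(n)}$, so that $\norm{I_n\varphi^\bn_j-\widetilde\varphi_j}_{L^2}\le\frac{C}{\Gamma_{K(n),\mu}}\delta(\eps_n,h_n,\lambda_{K(n)})$ for every $j\in\{\underline k,\dots,\bar k\}$.

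The next step is to show that $\{\widetilde\varphi_j\}_{j=\underline k}^{\bar k}$ is almost orthonormal in $E(\lambda_k)$. The extension map $\mP_n^*$ is a linear isometry of $L^2(\M_n,\mu_n)$ into $L^2(\M,\mu)$: since $\mu(U^\bn_i)=\tfrac1n$, polarization of $\norm{\mP_n^* v}_{L^2(\M,\mu)}^2=\tfrac1n\sum_i|v(x_i)|^2=\norm{v}_{L^2(\M_n,\mu_n)}^2$ gives $\langle\mP_n^*\varphi^\bn_i,\mP_n^*\varphi^\bn_j\rangle_{L^2(\M,\mu)}=\langle\varphi^\bn_i,\varphi^\bn_j\rangle_{L^2(\M_n,\mu_n)}=\delta_{ij}$, so the $\mP_n^*\varphi^\bn_j$ are \emph{exactly} orthonormal. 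It therefore remains to bound the smoothing error $\norm{I_n\varphi^\bn_j-\mP_n^*\varphi^\bn_j}_{L^2}=\norm{(\Lambda_{h_n-2\eps_n}-\mathrm{Id})\mP_n^*\varphi^\bn_j}_{L^2}$; combining the estimates on the smoothing operators $\Lambda_r$ from \cite{GarGerHeiSle20} with the graph Dirichlet energy identity $\langle\Delta_n\varphi^\bn_j,\varphi^\bn_j\rangle_{L^2(\M_n,\mu_n)}=\lambda^\bn_j$ and the eigenvalue bound \eqref{eigenvalues-ratio-uniform-bound} (which for $n$ large yields $\lambda^\bn_j\le 2\lambda_k\le 2\lambda_{K(n)}$), this is at most $Ch_n\sqrt{\lambda_{K(n)}}$. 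I expect this smoothing estimate to be the main obstacle: it cannot be obtained by comparing $\mP_n^*\varphi^\bn_j$ with the still-undefined $\varphi^\cn_j$ (that would be circular), so it has to come directly from the discrete energy of $\varphi^\bn_j$ and the geometry of the transport tessellation (cells of diameter of order $\eps_n\ll h_n$), as carried out in \cite{GarGerHeiSle20}.

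Writing then $\widetilde\varphi_j=\mP_n^*\varphi^\bn_j+r_j$ with $\norm{r_j}_{L^2}\le C\big(h_n\sqrt{\lambda_{K(n)}}+\tfrac1{\Gamma_{K(n),\mu}}\delta(\eps_n,h_n,\lambda_{K(n)})\big)=C\delta^\phi_n$, expanding $\langle\widetilde\varphi_i,\widetilde\varphi_j\rangle$ and using $\norm{\mP_n^*\varphi^\bn_j}_{L^2}=1$ shows that the Gram matrix $G=\big(\langle\widetilde\varphi_i,\widetilde\varphi_j\rangle_{L^2}\big)_{i,j=\underline k}^{\bar k}$ satisfies $\norm{G-\mathrm{Id}}\le C\,m(\lambda_k)\,\delta^\phi_n$. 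For $n$ large enough that $\norm{G-\mathrm{Id}}<\tfrac12$ — which I would arrange by enlarging $N$ if necessary, using \eqref{convergence-relative-error-bound-delta} and \eqref{convergence-relative-error-bound-delta-max-spectral-gap} — the matrix $G$ is positive definite, so the $\widetilde\varphi_j$ are linearly independent and span all of $E(\lambda_k)$, and $\norm{G^{-1/2}-\mathrm{Id}}\le\sqrt2\,\norm{G-\mathrm{Id}}$ by the elementary bound $|t^{-1/2}-1|\le\sqrt2|t-1|$ on $[\tfrac12,\tfrac32]$. Setting $\varphi^\cn_j:=\sum_{i=\underline k}^{\bar k}(G^{-1/2})_{ij}\widetilde\varphi_i$ (symmetric/Löwdin orthogonalization) produces an orthonormal basis of $E(\lambda_k)$ with $\norm{\varphi^\cn_j-\widetilde\varphi_j}_{L^2}\le\sqrt{m(\lambda_k)}\,\norm{G^{-1/2}-\mathrm{Id}}\le C\delta^\phi_n$.

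Finally, \eqref{uniform-error-stepfct-extension} follows from the triangle inequality $\norm{\mP_n^*\varphi^\bn_j-\varphi^\cn_j}_{L^2}\le\norm{\mP_n^*\varphi^\bn_j-I_n\varphi^\bn_j}_{L^2}+\norm{I_n\varphi^\bn_j-\widetilde\varphi_j}_{L^2}+\norm{\widetilde\varphi_j-\varphi^\cn_j}_{L^2}\le C\delta^\phi_n$, and likewise $\norm{I_n\varphi^\bn_j-\varphi^\cn_j}_{L^2}\le\norm{I_n\varphi^\bn_j-\widetilde\varphi_j}_{L^2}+\norm{\widetilde\varphi_j-\varphi^\cn_j}_{L^2}\le\tfrac{C}{\Gamma_{K(n),\mu}}\delta(\eps_n,h_n,\lambda_{K(n)})+C\delta^\phi_n$; since $h_n\sqrt{\lambda_{K(n)}}\le\delta(\eps_n,h_n,\lambda_{K(n)})$ by \eqref{delta-definition} and $\Gamma_{K(n),\mu}\le\gamma_{1,\mu}$, one gets $\delta^\phi_n\le\tfrac{C}{\Gamma_{K(n),\mu}}\delta(\eps_n,h_n,\lambda_{K(n)})$, which absorbs the last term and yields \eqref{uniform-error-interpolation}. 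A secondary point to be careful about is this bookkeeping — that the two error quantities $\tfrac1{\Gamma_{K(n),\mu}}\delta(\eps_n,h_n,\lambda_{K(n)})$ and $\delta^\phi_n$ are comparable, and that $N$ may be taken large enough to guarantee all the smallness conditions used above.
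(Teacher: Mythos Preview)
Your route differs substantially from the paper's. The paper does not re-derive the orthonormal basis from the ``normalized eigenfunctions'' statement of \Cref{error-bound-ev-ef}; instead it directly invokes \cite[Thm.~5]{GarGerHeiSle20} together with \cite[Thm.~4]{Burago:2015}, which already furnish an \emph{orthonormal} system $\{\varphi^{\cn}_j\}_{j=\underline k}^{\bar k}\subset E(\lambda_k)$ satisfying \eqref{uniform-error-interpolation} with a constant $C$ independent of the block. The passage to \eqref{uniform-error-stepfct-extension} via the smoothing estimate $\norm{I_n\varphi^\bn_j-\mP_n^*\varphi^\bn_j}_{L^2}\le Ch_n\sqrt{\lambda_k}$ (from \cite[p.~879]{GarGerHeiSle20}) and the triangle inequality is then the same as yours.

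Your L\"owdin-orthogonalization argument is more self-contained, but it carries a real cost that you glossed over: the Gram-matrix perturbation is of size $\norm{G-\mathrm{Id}}\le C\,m(\lambda_k)\,\delta^\phi_n$, and the resulting correction $\norm{\varphi^{\cn}_j-\widetilde\varphi_j}_{L^2}\le\sqrt{m(\lambda_k)}\,\norm{G^{-1/2}-\mathrm{Id}}$ therefore inherits powers of $m(\lambda_k)$. Writing ``$\le C\delta^\phi_n$'' at that step hides a constant that depends on the multiplicity block, whereas $\delta^\phi_n$ is defined in the statement with a \emph{uniform} $C$ and is used later (e.g.\ in \Cref{convergence_spatial_spectral_discretization} and \Cref{convergence-discrete-continuum-convolution-operation}) uniformly over all $k\le K(n)$. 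Since $m(\lambda_k)=O\big(\lambda_k^{(m-1)/2}\big)$ is unbounded in $k$, your bound is strictly weaker than the stated one. The need to ``enlarge $N$ if necessary'' so that $\norm{G-\mathrm{Id}}<\frac12$ likewise deviates from the statement, which fixes $N$ only through $\bar k\le K(N)$. The paper sidesteps both issues by taking the orthonormality as input from the cited sources rather than manufacturing it from \Cref{error-bound-ev-ef}.
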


\begin{proof}
    The existence of orthonormal eigenfunctions $\{\varphi^{\cn}_j\}_{j=\underline{k}}^{\bar{k}}$ for a choice of $\{\varphi^\bn_j\}_{j=\underline{k}}^{\bar{k}}$ of orthonormal eigenfunctions of the $n$-th Graph-Laplacian satisfying \eqref{uniform-error-interpolation} follows from \cite[Thm.~5]{GarGerHeiSle20} and \cite[Thm.~4]{Burago:2015}.

    Then following \cite[p. 879]{GarGerHeiSle20} and using \Cref{error-bound-ev-ef}, in particular \eqref{eigenfunctions-interpolation-uniform-bound}, we get
    \begin{align*}
        \norm{\mP^*_n\varphi^\bn_j - \varphi^{\cn}_j}_{L^2} &\leq \left(\norm{I_n\varphi^\bn_j - \mP^*_n\varphi^\bn_j}_{L^2} + \norm{I_n \varphi^\bn_j - \varphi^{\cn}_j}_{L^2}\right)\\
        &\leq Ch_n\sqrt{\lambda_k} + \frac{C}{\gamma_{k, \mu}}\delta(\eps_n, h_n, \lambda_k)\\
         &\leq C\left(h_n\sqrt{\lambda_{K(n)}} + \frac{1}{\Gamma_{{K(n)}, \mu}}\delta(\eps_n, h_n, \lambda_{K(n)})\right).\qedhere
    \end{align*}
\end{proof}

  Note, that the choice of eigenfunctions of the Laplace-Beltrami operator in \Cref{uniform-bounds-spectra} depend on the choice of eigenfunctions of the Graph-Laplacian for a fixed resolution $n$.

\section{Continuum-to-discrete projections}\label{discrete-continuum-erm-convergence}
In order to show consistency of the graph networks across resolutions, we need some ways to meaningfully compare objects in the discrete and continuum spaces. A difficulty in comparing discrete neural response maps $\psi_n$ to the continuum map $\psi$ is that the activation function acts pointwise, while the convolution operation is defined spectrally.

In the previous \Cref{signal-consistency} we already defined a spatial discretization $\mP_n$ of $L^2$-functions, through averaging over optimal transportation cells. 
 While this yields a natural approximation compatible with both the sampling and the $L^2$-structure, quantifying the loss of information by such a discretization is problematic when dealing with spectral operations. In contrast, the normalized eigenfunctions of the Laplace-Beltrami operator and the orthonormal eigenvectors of the graph Laplacians provide bases for the respective $L^2$-spaces, which allows for the definition of spectral projection maps that preserve the Hilbert structure of $L^2$ and its Sobolev subspaces. 
 
 This section first introduces such spectral projections, and subsequently concerns itself with the properties of both spatial and spectral projections, and their interplay. Neither projection is generally better suited for the discrete-to-continuum analysis of nonlinear convolutional neural response maps. Rather, they serve complementary roles. On the one hand, convolutions are expressed directly as multiplications in the spectral domain. And on the other hand, the spatial extension maps, being constant on each transport cell, commute with the pointwise application of the activation function, which is crucial for handling the nonlinearity.
 
Equipped with these projection mappings, at the end of this section we obtain pointwise convergence of the neural response maps $\psi_n$ to $\psi$ under appropriate discretization of the parameters and the signal in \Cref{convergence-neural-response-discrete-continuum}. 
As a consequence of the tension between their spatial and spectral aspects, and the fact that graph convolution is based on the respective graph eigenfunctions, the discrete expression has to be lifted spatially, whereas asymptotic control of convolution operations requires the introduction of spectral projections.
This leads to controlling the asymptotic behavior of expressions such as
\[\mP_n^*(\mS_n b \ast_n \mR_n u) - \mS_n^*\mS_n b \ast \mP_n^*\mR_n u,\]
appearing in \eqref{big-response-commutation}. Here, $\mP_n^*$ denotes the spatial extension map from \eqref{def-spatial-extension}, $\mS_n$ a spectral discretization map with adjoint $\mS_n^*$ and $\mR_n$ generic discretization maps, that are compatible with $TL^2$-convergence and uniformly bounded in $n$. The latter are not made specific because they are reflected in conditions for the training data, which we want to restrict as little as possible. As can be seen from this, there appears a melange of discretizations and extension operators, so in \Cref{prop-discrete-continuum-proj} we will give all the convergence results required to treat such interactions.

\subsection{Spectral projections and their adjoints}
We first introduce the spectral projections from continuum to discrete signals used in the sequel, their adjoints which serve as extension operators

For any $\alpha\in[0,1]$ we introduce the \emph{$H^\alpha$-spectral projection maps} $\mS_{n, \alpha}: L^2(\M, \mu)\to L^2(\M_n, \mu_n)$, $n\in\N$, given as follows:
For every $n\in\N$ fix orthonormal eigenfunctions of the $n$-th graph Laplacian $\{\phi^\bn_k\}_{k=1}^{K(n)}$ corresponding to the first $K(n)$ discrete eigenvalues. Then we can find for every $n\in\N$ by \Cref{uniform-bounds-spectra} continuum eigenfunctions $\{\phi_{k}^{\cn}\}_{k=1}^{K(n)}$ such that \[\norm{I_n \phi^\bn_k-\phi_k^{\cn}}_{L^2} \leq \frac{C}{\Gamma_{{K(n)}, \mu}}\delta(\eps_n, h_n, \lambda_{K(n)})\]and 
   \[\norm{\mP^*_n \phi^\bn_k-\phi_k^{\cn}}_{L^2} \leq Ch_n\sqrt{\lambda_{K(n)}} + \frac{C}{\Gamma_{{K(n)}, \mu}}\delta(\eps_n, h_n, \lambda_{K(n)})=:\delta^\phi_n.\]
Note that we first fix a choice of graph eigenfunctions $\{\phi^\bn_k\}_{k=1}^{K(n)}$ for every resolution, to then be able to choose for every resolution a collection of continuum eigenfunctions $\{\phi^\cn_k\}_{k=1}^{K(n)}$ that aligns well with the choice for the graph of that resolution.

From this fixed choice of collections of eigenfunctions of the graph-Laplacian and the Laplace-Beltrami operator, we then define $S_{n,\alpha}$ such that for $v\in L^2(\M, \mu)$ and $n\in\N$
\begin{equation}\label{def-spectral-discretization}
    \mS_{n,\alpha} v := \sum\limits_{k=1}^{K(n)} \left(\frac{1+\sqrt{\lambda_k}}{1+\sqrt{\lambda_k^\bn}}\right)^{\alpha}\langle v, \phi_k^{\cn}\rangle_{L^2(\M, \mu)}\varphi^\bn_k,
\end{equation}
so $S_{n,\alpha}$ discretizes continuum signals by retaining only the first $K(n)$ Laplace-Beltrami modes in the resolution-dependent orthonormal basis $\{\phi_k^{\cn}\}$ induced by the chosen orthonormal graph eigenbasis, scaling the associated spectral coefficients according to the regularity parameter $\alpha$ and projecting back to the discrete spatial domain via said eigenvectors of the $n$-th graph Laplacian.

Their adjoint extension map $\mS^*_{n,\alpha}: L^2(\M_n, \mu_n)\to L^2(\M, \mu)$, $n\in\N$ are given by 
\begin{equation}\label{def-adjoint-spectral-extension}
    \mS^*_{n,\alpha} w = \sum\limits_{k=1}^{K(n)}  \left(\frac{1+\sqrt{\lambda_k^\bn}}{1+\sqrt{\lambda_k}}\right)^{\alpha} \langle w, \phi^\bn_k\rangle_{L^2(\M_n, \mu_n)}\phi_k^{\cn}.
\end{equation}
We write shorthand $\mS_n: L^2(\M, \mu)\to L^2(\M_n, \mu_n)$, and $\mS^*_n: L^2(\M_n, \mu_n)\to L^2(\M, \mu)$ for $\mS_{n, 0}$ and $\mS^*_{n, 0}$, $n\in\N$, respectively.

Denote by $\mathbb{P}(V,w)$ the $L^2(\M, \mu)$-orthogonal projection of $w\in L^2(\M, \mu)$ onto the closed subspace $V\subset L^2(\M, \mu)$ and by $E(\lambda)$ the eigenspace of the Laplace-Beltrami operator associated to the eigenvalue $\lambda\geq 0$.
In the following, we will denote by $\{\varphi_k\}_k$ an arbitrary orthonornmal basis of $L^2(\M,\mu)$ consisting of eigenfunctions of the Laplace-Beltrami operator, such that for any $k\in\N$ the eigenfunction $\varphi_k$ corresponds to the eigenvalue $\lambda_k$.  Note that for any such choice and any $w\in L^2(\M, \mu)$ it holds that \begin{equation}\label{eigenspace-projection-basis-free}
    \mathbb{P}\left(\bigoplus_{k=1}^{K(n)} E(\lambda_k), w\right)=\sum_{k=1}^{K(n)}\langle w,\varphi_k\rangle_{L^2(\M, \mu)}\varphi_k = \sum_{k=1}^{K(n)}\langle w,\phi_k^\cn\rangle_{L^2(\M, \mu)}\phi_k^\cn,\end{equation}
due to the orthogonality of distinct eigenspaces, and since the orthogonal projection onto each eigenspace is independent of the choice of orthonormal basis.

\subsection{Properties of the continuum-to-discrete projections}\label{prop-discrete-continuum-proj}

Here we will prove properties for each of the continuum-to-discrete projections separately, but also jointly and in relation to convolutions and graph convolutions, all of which are required to show \Cref{convergence-discrete-continuum-convolution-operation}. Many of these results involve convergence statements as $n \to \infty$, and for this first set of results it might be possible to obtain convergence rates, but we do not do so because those would be lost in the arguments involving nonlinearities and $\Gamma$-convergence of later sections.

\begin{lemma}[Properties of the Averaging Discretization and Stepwise Extension]\label{properties-averaging-discretization}
    The following properties hold for all $n\in \N$ for the maps $\mP_n$ and $\mP^*_n$:
    \begin{lemlist}
    \item\label{spatial-adjointness} It holds that $\mP_n$ and $\mP_n^*$ are $L^2$-adjoint, that is 
    \[\langle \mP_n u, v\rangle_{L^2(\M_n, \mu_n)} = \langle  u,\mP_n^* v\rangle_{L^2(\M, \mu)} \quad\text{ for all } u\in L^2(\M, \mu), v\in L^2(\M_n, \mu_n).\]
    \item\label{adjoint-l2-innerproduct-preserving}  The extension map $\mP^*_n$ preserves the $L^2$-inner product, i.e. \[\langle u, v\rangle_{L^2(\M_n, \mu_n)} = \langle \mP^*_n u, \mP^*_n v \rangle_{L^2(\M, \mu)} \quad\text{ for all } u, v\in L^2(\M_n, \mu_n).\]
    \item\label{adjoint-l2-norm-preserving} The extension map $\mP^*_n$ preserves the $L^2$-norm, i.e. \[\norm{\mP^*_n v}_{L^2(\M, \mu)} = \norm{v}_{L^2(\M_n, \mu_n)}\quad\text{ for all } v\in L^2(\M_n, \mu_n).\]
    \item\label{step-fct-discretization-l2-norm-preserving} The operator $\mP^*_n\mP_n:L^2(\M, \mu)\to L^2(\M, \mu)$ is linear and bounded with operator norm $\norm{\mP^*_n\mP_n} = 1$. In particular, it holds that $\mP^*_n\mP_n$ is contracting in the $L^2$-norm, i.e. \[\norm{\mP^*_n\mP_n v}_{L^2(\M, \mu)} \leq \norm{v}_{L^2(\M, \mu)} \quad\text{for all }v\in L^2(\M, \mu).\]
    \item The maps $\mP_n$, 
    $\mP_n^*$ are linear.
    \item\label{commute-adjoint-activation} It holds that $\big(\sigma(\mP_n^*v)\big)(x) = \mP^*_n\big(\sigma(v)\big)(x)$ for any $v\in L^2(\M_n, \mu_n)$ and $x\in M$.
\end{lemlist}
\end{lemma}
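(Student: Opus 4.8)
The plan is to reduce items (b)--(d) to two elementary identities built from the optimal transport partition $\{U_i^\bn\}_{i=1}^n$ of $\M$, namely the $L^2$-adjointness of $\mP_n$ and $\mP_n^*$ (item (a)) and the relation $\mP_n\mP_n^* = \mathrm{Id}_{L^2(\M_n, \mu_n)}$. Linearity (item (e)) is immediate from the defining formulas, since both $v\mapsto n\int_{U_i^\bn} v\,\dd\mu$ and $v\mapsto\sum_i v(x_i)\mathds{1}_{U_i^\bn}$ are linear in $v$; and item (f) is a pointwise statement treated at the end.

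For (a) I would expand $\langle \mP_n u, v\rangle_{L^2(\M_n, \mu_n)} = \tfrac1n\sum_{i=1}^n (\mP_n u)(x_i)\,v(x_i)$, substitute $(\mP_n u)(x_i) = n\int_{U_i^\bn} u\,\dd\mu$ so the factor $\tfrac1n$ cancels, and recognise $\sum_i v(x_i)\int_{U_i^\bn} u\,\dd\mu = \int_\M u(x)\big(\sum_i v(x_i)\mathds{1}_{U_i^\bn}(x)\big)\dd\mu(x) = \langle u, \mP_n^* v\rangle_{L^2(\M, \mu)}$, using that $\{U_i^\bn\}$ covers $\M$ up to a $\mu$-null set. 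The key point to keep in mind is that this partition property together with $\mu(U_i^\bn) = 1/n$ is exactly what the stagnating optimal transport maps $T_n$ from \Cref{signal-consistency} supply, and should be invoked explicitly rather than treated as definitional. Granting $\mu(U_i^\bn) = 1/n$, the second identity is the direct computation $(\mP_n\mP_n^* w)(x_i) = n\int_{U_i^\bn}(\mP_n^* w)\,\dd\mu = n\int_{U_i^\bn} w(x_i)\,\dd\mu = n\,w(x_i)\,\mu(U_i^\bn) = w(x_i)$.

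Items (b)--(d) then follow formally. For (b), $\langle \mP_n^* u, \mP_n^* v\rangle_{L^2(\M, \mu)} = \langle \mP_n\mP_n^* u, v\rangle_{L^2(\M_n, \mu_n)} = \langle u, v\rangle_{L^2(\M_n, \mu_n)}$ by (a) and $\mP_n\mP_n^* = \mathrm{Id}$; item (c) is the case $u = v$. For (d), $\mP_n^*\mP_n$ is linear as a composition, self-adjoint by (a), and idempotent since $(\mP_n^*\mP_n)^2 = \mP_n^*(\mP_n\mP_n^*)\mP_n = \mP_n^*\mP_n$, hence an orthogonal projection, so $\norm{\mP_n^*\mP_n}\leq 1$; the norm equals $1$ because constant functions are fixed. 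Alternatively, the contraction bound follows from (c) together with the Cauchy--Schwarz estimate $(\int_{U_i^\bn} v\,\dd\mu)^2\leq\mu(U_i^\bn)\int_{U_i^\bn} v^2\,\dd\mu$, which gives $\norm{\mP_n^*\mP_n v}_{L^2(\M,\mu)} = \norm{\mP_n v}_{L^2(\M_n, \mu_n)}\leq\norm{v}_{L^2(\M, \mu)}$, with equality for constants. Finally, for (f), fix $x\in\M$ and the index $i$ with $x\in U_i^\bn$: then $(\sigma(\mP_n^* v))(x) = \sigma\big((\mP_n^* v)(x)\big) = \sigma(v(x_i))$, while $(\mP_n^*(\sigma(v)))(x) = (\sigma(v))(x_i) = \sigma(v(x_i))$ by the componentwise convention for $\sigma$, so both sides agree.

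I do not anticipate a genuine obstacle here; the only things requiring attention are keeping the two inner products straight --- $L^2(\M, \mu)$ versus the $\tfrac1n$-weighted product on $L^2(\M_n, \mu_n)\cong\R^n$ --- and making explicit that the partition structure and the identity $\mu(U_i^\bn) = 1/n$ are inherited from the optimal transport construction. I would present the proof in the order (e), (a), then $\mP_n\mP_n^* = \mathrm{Id}$, then (b)--(d), closing with the pointwise verification of (f).
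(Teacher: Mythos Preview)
Your proposal is correct. Items (a)--(c), (e), (f) match the paper's treatment essentially verbatim; the paper simply declares (i), (ii), (v) immediate from the definitions and notes that (iii) follows, while you spell out the adjointness computation and the identity $\mP_n\mP_n^* = \mathrm{Id}$ explicitly.

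The one place your route genuinely differs is item (d). The paper argues the contraction bound by a direct Jensen's-inequality computation on each cell,
\[
\left|\frac{1}{\mu(U_i^\bn)}\int_{U_i^\bn} v\,\dd\mu\right|^2 \leq \frac{1}{\mu(U_i^\bn)}\int_{U_i^\bn} |v|^2\,\dd\mu,
\]
summed over $i$, and then exhibits an explicit step function on which $\mP_n^*\mP_n$ acts as the identity to get $\norm{\mP_n^*\mP_n}=1$. Your primary argument is more structural: from (a) and $\mP_n\mP_n^*=\mathrm{Id}$ you conclude $\mP_n^*\mP_n$ is a self-adjoint idempotent, hence an orthogonal projection, so its norm is at most $1$, with equality because constants are fixed. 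This is cleaner and identifies the range of $\mP_n^*\mP_n$ as the span of the indicators $\mathds{1}_{U_i^\bn}$ without any inequality at all; the paper's Jensen computation (which you also sketch as an alternative via Cauchy--Schwarz) has the minor advantage of being self-contained and not relying on the spectral characterisation of projections. Either approach is entirely adequate here.
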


\begin{proof}
\textup{(i)}, \textup{(ii)} and \textup{(v)} follow immediately from the definitions and imply \textup{(iii)}.
    \begin{enumerate}[label = {(\roman*)}]
    \setcounter{enumi}{3}
        \item Linearity follows immediately from the definition of $\mP_n$ and $\mP^*_n$. Fix an arbitrary $v\in L^2(\M, \mu)$. Then
            \begin{align*}
                \norm{\mP_n^*\mP_n v}_{L^2(\M, \mu)} &= \int_\M \left|(\mP_n^*\mP_n v)(x)\right|^2 d\mu(x) = \sum\limits_{i=1}^n\int_{U_i^\bn} \left|\mP_n v(x_i)\right|^2 d\mu(x) \\
                &=\sum\limits_{i=1}^n\int_{U_i^\bn} \left|\frac{1}{\mu(U_i^\bn)} \int_{U_i^\bn}v(y)d\mu(y)\right|^2 d\mu(x) \\&=\sum\limits_{i=1}^n\mu(U_i^\bn) \,\left|\frac{1}{\mu(U_i^\bn)} \int_{U_i^\bn}v(y)d\mu(y)\right|^2  \\
                \text{(Jensen's ineq)}&\leq \sum\limits_{i=1}^n\frac{\mu(U_i^\bn)}{\mu(U_i^\bn)}\int_{U_i^\bn}\left|v(y)\right|^2d\mu(y) = \int_\M \left|v(y)\right|^2 d\mu(y) = \norm{v}^2_{L^2(\M, \mu)}.
            \end{align*}
            But on the other hand, let $\{c_i\}_{i=1}^n\subset \R$, and define $v\in L^2(\M, \mu)$ as $v(x)=\sum\limits_{i=1}^n c_i\mathds{1}_{U_i^\bn}(x)$. Then it is easy to check that $\mP_n^*\mP_n v(x) = v(x)$ for all $x\in \M$, thus \[\norm{(\mP_n^*\mP_n )v}_{L^2(\M, \mu)} = \norm{v}_{L^2(\M, \mu)},\] yielding operator norm $\norm{\mP_n^*\mP_n} = 1$.
            \end{enumerate}
    \begin{enumerate}[label = {(\roman*)}]
    \setcounter{enumi}{5}
        \item Writing out the definitions yields \[\big(\sigma(\mP_n^*v)\big)(x) = \sigma\big((\mP_n^*v)(x)\big) = \sigma\big(\sum\limits_{i=1}^n v(x_i)\mathds{1}_{U_i^\bn}(x)\big) = \sum\limits_{i=1}^n \sigma(v(x_i))\mathds{1}_{U_i^\bn}(x) = \mP^*_n\big(\sigma(v)\big)(x).\] The second to last equality holds, despite the non-linearity of $\sigma$, because of the disjoint tesselation by the $U_i^\bn, i\in\{1, \ldots n\}$.\qedhere
    \end{enumerate}
\end{proof}

\begin{lemma}\label{convergence-extended-discretization-pn}
    For any $v\in L^2(\M, \mu)$ it holds that $\norm{\mP_n^*\mP_n v - v}_{L^2(\M, \mu)}\xrightarrow{n\to\infty} 0.$
\end{lemma}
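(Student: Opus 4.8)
The plan is to combine the $L^2$-contractivity of $\mP_n^*\mP_n$ from \Cref{step-fct-discretization-l2-norm-preserving} with a density argument, reducing the statement to the case of continuous $v$, where the smallness of the transport cells does the work. Note that a nested-partition (martingale convergence) argument is not available, since adding an i.i.d.\ sample point does not refine the previous transport partition, so the density-plus-contractivity route is the natural one.

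First I would handle $v\in C(\M)$. Since $\M$ is compact, $v$ is uniformly continuous; write $\omega_v$ for its modulus of continuity. Because $(T_n)_\#\mu=\mu_n$ and $T_n$ is an (\emph{essentially}) optimal map for the $\infty$-Wasserstein distance $\eps_n=d_\infty(\mu,\mu_n)$, we have $d_\M(x,T_n(x))\le\eps_n$ for $\mu$-a.e.\ $x$; hence for $\mu$-a.e.\ $x\in U_i^\bn$ it holds $d_\M(x,x_i)\le\eps_n$, so up to a $\mu$-null set $U_i^\bn\subset B_\M(x_i,\eps_n)$ and $\operatorname{diam}_{d_\M}(U_i^\bn)\le 2\eps_n$. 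Since for $x\in U_i^\bn$ the value $(\mP_n^*\mP_n v)(x)=\frac{1}{\mu(U_i^\bn)}\int_{U_i^\bn}v(y)\dd\mu(y)$ is an average of values of $v$ on $U_i^\bn$, we get $|(\mP_n^*\mP_n v)(x)-v(x)|\le\sup_{y\in U_i^\bn}|v(y)-v(x)|\le\omega_v(2\eps_n)$ for $\mu$-a.e.\ $x$. Taking the essential supremum over $x$ and using $\eps_n\to0$ from \eqref{probabilistic-bounds-eps}, this yields $\norm{\mP_n^*\mP_n v-v}_{L^\infty(\M,\mu)}\le\omega_v(2\eps_n)\to0$, and since $\mu$ is a probability measure also $\norm{\mP_n^*\mP_n v-v}_{L^2(\M,\mu)}\to0$.

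For general $v\in L^2(\M,\mu)$, fix $\varepsilon>0$ and pick $w\in C(\M)$ with $\norm{v-w}_{L^2(\M,\mu)}<\varepsilon$, which is possible since $\mu$ is a finite Radon measure on the compact metric space $\M$ and thus $C(\M)$ is dense in $L^2(\M,\mu)$. By linearity of $\mP_n^*\mP_n$ and the triangle inequality,
\begin{equation*}
\norm{\mP_n^*\mP_n v-v}_{L^2}\le\norm{\mP_n^*\mP_n(v-w)}_{L^2}+\norm{\mP_n^*\mP_n w-w}_{L^2}+\norm{w-v}_{L^2}.
\end{equation*}
By \Cref{step-fct-discretization-l2-norm-preserving} the first term is at most $\norm{v-w}_{L^2}<\varepsilon$, the last term is $<\varepsilon$, and the middle term tends to $0$ by the previous step; hence $\limsup_{n\to\infty}\norm{\mP_n^*\mP_n v-v}_{L^2}\le 2\varepsilon$, and letting $\varepsilon\to0$ finishes the proof. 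I do not expect a serious obstacle; the only point needing care is the uniform shrinking of the partition cells, i.e.\ $\operatorname{diam}_{d_\M}(U_i^\bn)\to0$ uniformly in $i$, which is exactly the content of the $\infty$-Wasserstein bound together with the existence of stagnating transport maps $T_n$.
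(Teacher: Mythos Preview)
Your proof is correct and follows essentially the same approach as the paper: density of continuous functions, the $L^2$-contractivity of $\mP_n^*\mP_n$ from \Cref{step-fct-discretization-l2-norm-preserving}, and the uniform diameter bound $\operatorname{diam}_{d_\M}(U_i^\bn)\le 2\eps_n$ coming from the $\infty$-Wasserstein transport maps. The only cosmetic difference is that for continuous $v$ you obtain an $L^\infty$ bound directly, whereas the paper passes through Jensen's inequality to get the $L^2$ bound; both arguments rest on exactly the same ingredients.
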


\begin{proof}
    Fix an arbitrary $v\in L^2(\M, \mu)$ and an $\eps>0$. 
        By density of $C_c(\M)$ in $L^2(\M, \mu)$, there exists $v_\eps\in C_c(\M)$ with $\norm{v-v_\eps}_{L^2(\M, \mu)}<\eps$. 
        From \Cref{step-fct-discretization-l2-norm-preserving} it follows that 
        \[\norm{\mP_n^*\mP_n v_\eps-\mP_n^*\mP_n v}_{L^2(\M, \mu)} = \norm{\mP_n^*\mP_n (v_\eps-v)}_{L^2(\M, \mu)}\leq \norm{v_\eps - v}_{L^2(\M, \mu)}<\eps.\]
        Furthermore we have that
        \begin{align*}
            \norm{v_\eps-\mP_n^*\mP_n v_\eps}_{L^2(\M, \mu)}^2&= \int_\M \left|(v_\eps-\mP_n^*\mP_n v_\eps)(x)\right|^2 d\mu(x) \\&= \sum\limits_{i=1}^n\int_{U_i^\bn} \left|v_\eps(x)-\mP_n v_\eps(x_i)\right|^2 d\mu(x) \\
                &=\sum\limits_{i=1}^n\int_{U_i^\bn} \left|v_\eps(x)-\frac{1}{\mu(U_i^\bn)} \int_{U_i^\bn}v_\eps(y)d\mu(y)\right|^2 d\mu(x) \\
                &=\sum\limits_{i=1}^n\int_{U_i^\bn} \left|\frac{1}{\mu(U_i^\bn)}\int_{U_i^\bn}v_\eps(x) - v_\eps(y)d\mu(y)\right|^2 d\mu(x)\\
                \text{(Jensen's ineq)}&\leq \sum\limits_{i=1}^n\frac{1}{\mu(U_i^\bn)} \int_{U_i^\bn} \int_{U_i^\bn}\left|v_\eps(x) - v_\eps(y)\right|^2d\mu(y) d\mu(x)
        \end{align*}
        Since the support of $v_\eps$ is compact, it is actually uniformly continuous.
        Now, we note that
        \[\eps_n= d_\infty(\mu, \mu_n)=\min\limits_{T:T_\# \mu=\mu_n} \esssup_{x\in\M} d(x, T(x))\] is attained at some map $T_n$. This follows from the fact that sets of the form $\{y\in \M \,:\, d(y,x) = t\}$ have $\mathcal{H}^m$ measure zero, which implies that any measurable map assigning Voronoi cells $U_i^\bn$ to their centers is optimal. Using these maps, we get 
\begin{align}\diam_\M(U_i^\bn) &= \sup_{x,y\in U_i^\bn}d(x,y) \leq \sup_{x,y\in U_i^\bn} [d(x, T_n(x)) + d(y,T_n(y))] = 2\sup_{x \in U_i^\bn} d(x, T_n(x))  \nonumber
\\&\leq 2 \esssup_{x\in\M} d(x, T_n(x)) = 2 \,d_\infty(\mu, \mu_n) = 2\eps_n .\label{diam_transport_cells}\end{align}

         Therefore, $\diam_\M(U_i^\bn)\to 0$ for all $i=1, \ldots, n$ as $n\to\infty$.       
        Thus in particular, there is a $N\in\N$, such that for all $n\geq N$ and for all $x,y\in U_i^\bn$, $i=1, \ldots, n$ it holds that $|v_\eps(x)-v_\eps(y)|<\eps$.

        Hence for all $n\geq N$
        \begin{align*}
            \norm{v_\eps-\mP_n^*\mP_n v_\eps}_{L^2(\M, \mu)}^2&\leq \sum\limits_{i=1}^n\frac{1}{\mu(U_i^\bn)} \int_{U_i^\bn} \int_{U_i^\bn}\left|v_\eps(x) - v_\eps(y)\right|^2d\mu(y) d\mu(x)\\
            &< \sum\limits_{i=1}^n\frac{1}{\mu(U_i^\bn)} \int_{U_i^\bn} \int_{U_i^\bn}\eps^2 d\mu(y) d\mu(x)\\
            &= \sum\limits_{i=1}^n\frac{\mu(U_i^\bn)^2}{\mu(U_i^\bn)} \eps^2 = \sum\limits_{i=1}^n \mu(U_i^\bn) \eps = \frac{1}{n} n \eps^2 = \eps^2
        \end{align*}
        Combining the above yields
        \begin{align*}
            \norm{v-\mP_n^*\mP_n v}_{L^2} \leq \norm{v-v_\eps}_{L^2} + \norm{v_\eps -\mP_n^*\mP_n v_\eps}_{L^2} + \norm{\mP_n^*\mP_n v_\eps - \mP_n^*\mP_n v}_{L^2}<3\eps.
        \end{align*}
        Since $\eps>0$ and $v\in L^2(\M, \mu)$ were arbitrary, this proves the claim.
\end{proof}

\begin{lemma}[Properties of the $H^\alpha$ Spectral Projections]\label{properties-spectral-h-alpha-discretization}
    The following properties hold for all $n\in \N$, $\alpha\in [0,1]$ for the maps $\mS_{n,\alpha}$ and $\mS^*_{n,\alpha}$:
    \begin{lemlist}
    \item \label{spectral-h-alpha-adjointness} The maps $\mS_{n,\alpha}$ and $\mS_{n,\alpha}^*$ are $H^\alpha$-adjoint, that is \[\langle \mS_{n,\alpha} v, w\rangle_{H^\alpha(\M_n, \mu_n)} = \langle  v,\mS_{n,\alpha}^* w\rangle_{H^\alpha(\M, \mu)}\quad \text{ for all } v\in L^2(\M, \mu), w\in L^2(\M_n, \mu_n).\]
    \item \label{spectral-h-alpha-preservation-innerproduct}For all $u, v\in L^2(\M_n, \mu_n)$ with $\langle u, \phi^\bn_k\rangle_{L^2(\M_n, \mu_n)}=\langle v, \phi^\bn_k\rangle_{L^2(\M_n, \mu_n)}=0$ for all $k>K(n)$ it holds that \[\langle u, v\rangle_{H^\alpha(\M_n, \mu_n)} = \langle \mS^*_{n,\alpha} u, \mS^*_{n,\alpha}  v \rangle_{H^\alpha(\M, \mu)},\] and  for all $u, v\in L^2(\M, \mu)$ with $\langle u, \phi_k\rangle_{L^2}=\langle v, \phi_k\rangle_{L^2}=0$ for all $k>K(n)$ we have \[\langle u, v\rangle_{H^\alpha(\M, \mu)} = \langle \mS_{n,\alpha} u, \mS_{n,\alpha}  v \rangle_{H^\alpha(\M_n, \mu_n)}.\]
    \item \label{spectral-h-alpha-preservation-norm} For all $n\in\N$ and $k>K(n)$ the extension map $\mS^*_{n,\alpha}$ preserves the $H^\alpha$-norm for all $u\in L^2(\M_n, \mu_n)$ with $\langle u, \phi^\bn_k\rangle_{L^2(\M_n, \mu_n)}=0$, and also $\mS_{n,\alpha}$ preserves the $H^\alpha$-norm for all $u\in L^2(\M, \mu)$ with $\langle u, \phi_k\rangle_{L^2}=0$.
    \item \label{subspace-projection-h-alpha} It holds that $\mS^*_{n,\alpha}\mS_{n,\alpha} v = \sum\limits_{k=1}^{K(n)} \langle v, \varphi_k\rangle_{L^2(\M, \mu)} \phi_k$ for every $v\in L^2(\M, \mu)$, where $\{\varphi_k\}_{k\leq K(n)}$ is any orthonormal basis of $\bigoplus_{k\leq K(n)} E(\lambda_k)$.
    \item \label{spectral-h-alpha-contraction} For any $v\in L^2(\M, \mu)$ we have $\norm{\mS_{n,\alpha}^*\mS_{n,\alpha} v}_{L^2(\M, \mu)}\leq\norm{v}_{L^2(\M, \mu)}$.
\end{lemlist}
\end{lemma}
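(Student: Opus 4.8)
The plan is to prove all five items by direct spectral bookkeeping, exploiting that the scaling factor $\big(\tfrac{1+\sqrt{\lambda_k}}{1+\sqrt{\lambda_k^\bn}}\big)^{\alpha}$ appearing in $\mS_{n,\alpha}$ (see \eqref{def-spectral-discretization}) is exactly the reciprocal of the one appearing in $\mS_{n,\alpha}^*$ (see \eqref{def-adjoint-spectral-extension}), so that on each mode the two compose to the identity. The one structural point to track throughout is that the $H^\alpha(\M,\mu)$ inner product in \eqref{continuum-h-alpha-topology} is written with respect to an arbitrary fixed eigenbasis $\{\phi_k\}$, whereas $\mS_{n,\alpha}$ and $\mS_{n,\alpha}^*$ are built from the resolution-dependent aligned basis $\{\phi_k^\cn\}$ of $\bigoplus_{k\le K(n)}E(\lambda_k)$. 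Since every function in the range of $\mS_{n,\alpha}^*$, and every continuum signal $u$ with $\langle u,\phi_k\rangle_{L^2}=0$ for $k>K(n)$, lies in $\bigoplus_{k\le K(n)}E(\lambda_k)$, and since the weights $(1+\sqrt{\lambda_k})^{2\alpha}$ are constant on each eigenspace, the reasoning behind \eqref{eigenspace-projection-basis-free} lets us evaluate the relevant $H^\alpha(\M,\mu)$ inner products in the $\{\phi_k^\cn\}$ basis, collapsing them to finite sums over $k\le K(n)$ (which also shows that these pairings are well-defined for general $v\in L^2$, as only the first $K(n)$ modes contribute); on the discrete side no such change is needed, since both $\mS_{n,\alpha}$ and the norm \eqref{discrete-h-alpha-topology} use the same basis $\{\phi_k^\bn\}$.

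For \textup{(i)}, I would expand $\langle\mS_{n,\alpha}v,w\rangle_{H^\alpha(\M_n,\mu_n)}$ over $\{\phi_k^\bn\}$: using $\langle\mS_{n,\alpha}v,\phi_k^\bn\rangle_{L^2(\M_n,\mu_n)}=\big(\tfrac{1+\sqrt{\lambda_k}}{1+\sqrt{\lambda_k^\bn}}\big)^{\alpha}\langle v,\phi_k^\cn\rangle_{L^2(\M,\mu)}$ for $k\le K(n)$ (and $0$ otherwise), the weight $(1+\sqrt{\lambda_k^\bn})^{2\alpha}$ combines with the scaling to leave the symmetric factor $(1+\sqrt{\lambda_k^\bn})^{\alpha}(1+\sqrt{\lambda_k})^{\alpha}$ on each mode. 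Performing the mirror-image computation for $\langle v,\mS_{n,\alpha}^*w\rangle_{H^\alpha(\M,\mu)}$ — evaluated in the $\{\phi_k^\cn\}$ basis as explained above — yields exactly the same finite sum, so the two operators are $H^\alpha$-adjoint. Item \textup{(ii)} is the same calculation run on $u$ and $v$ supported on the first $K(n)$ modes (the discrete ones for the first identity, the continuum ones for the second): now the paired scalings produce $\big(\tfrac{1+\sqrt{\lambda_k^\bn}}{1+\sqrt{\lambda_k}}\big)^{2\alpha}$, respectively its reciprocal, which cancels the weight mismatch between the two $H^\alpha$ inner products mode by mode. Item \textup{(iii)} is then just the diagonal case $u=v$ of \textup{(ii)}.

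For \textup{(iv)}, I would apply $\mS_{n,\alpha}^*$ directly to $\mS_{n,\alpha}v=\sum_{k\le K(n)}\big(\tfrac{1+\sqrt{\lambda_k}}{1+\sqrt{\lambda_k^\bn}}\big)^{\alpha}\langle v,\phi_k^\cn\rangle_{L^2}\,\phi_k^\bn$: orthonormality of $\{\phi_k^\bn\}$ in $L^2(\M_n,\mu_n)$ isolates the $k$-th coefficient, the reciprocal scalings cancel, and one is left with $\sum_{k\le K(n)}\langle v,\phi_k^\cn\rangle_{L^2}\,\phi_k^\cn$, which by \eqref{eigenspace-projection-basis-free} equals $\sum_{k\le K(n)}\langle v,\varphi_k\rangle_{L^2}\,\varphi_k$ for any orthonormal eigenbasis $\{\varphi_k\}$, i.e. the $L^2(\M,\mu)$-orthogonal projection of $v$ onto $\bigoplus_{k\le K(n)}E(\lambda_k)$. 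Then \textup{(v)} is immediate, since an orthogonal projection in $L^2(\M,\mu)$ is nonexpansive. There is no genuine obstacle in this lemma; the only thing demanding care is the bookkeeping of which of the three bases $\{\phi_k\}$, $\{\phi_k^\cn\}$, $\{\phi_k^\bn\}$ indexes each inner product, and invoking \eqref{eigenspace-projection-basis-free} at the right moments to pass to $\{\phi_k^\cn\}$ on the continuum side.
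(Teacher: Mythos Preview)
Your proposal is correct and follows essentially the same approach as the paper: the paper simply asserts that (i) and (ii) ``follow immediately from the definitions'' and imply (iii), that (v) follows from (iv), and then writes out the mode-by-mode computation for (iv) ending with the appeal to \eqref{eigenspace-projection-basis-free}. Your version is in fact more careful on one point the paper glosses over, namely that the continuum $H^\alpha$ inner product \eqref{continuum-h-alpha-topology} is defined via the fixed basis $\{\phi_k\}$ whereas $\mS_{n,\alpha}$ and $\mS_{n,\alpha}^*$ use $\{\phi_k^\cn\}$, and you correctly note that because the weights $(1+\sqrt{\lambda_k})^{2\alpha}$ are constant on eigenspaces, one may pass between bases via \eqref{eigenspace-projection-basis-free}.
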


\begin{proof}
Again \textup{(i)}, and \textup{(ii)} follow immediately from the definitions and imply \textup{(iii)}. Furthermore \textup{(v)} follows from \textup{(iv)}.

For \textup{(iv)}, writing out the definitions reads
    \begin{align*}
    \mS^*_{n,\alpha}\mS_{n,\alpha} v &= \sum\limits_{k=1}^{ K(n)} \left(\frac{1+(\lambda_k^\bn)^{1/2}}{1+\lambda_k^{1/2}}\right)^{\alpha}\langle \mS_{n,\alpha} v, \phi^\bn_k\rangle_{L^2(\M_n, \mu_n)} \phi_k^{\cn} 
    \\&= \sum\limits_{k=1}^{ K(n)} \left(\frac{1+(\lambda_k^\bn)^{1/2}}{1+\sqrt{\lambda_k}}\right)^{\alpha} \Big\langle \sum\limits_{j=1}^{ K(n)} \left(\frac{1+\lambda_j^{1/2}}{1+(\lambda_j^\bn)^{1/2}}\right)^{\alpha}\!\!\langle v, \phi_j^{\cn}\rangle_{L^2(\M, \mu)}\phi_j^\bn, \phi^\bn_k\Big\rangle_{L^2(\M_n, \mu_n)} \phi_k^{\cn} \\
    &= \sum\limits_{k=1}^{ K(n)} \left(\frac{1+(\lambda_k^\bn)^{1/2}}{1+\lambda_k^{1/2}}\right)^{\alpha} \sum\limits_{j=1}^{ K(n)}  \left(\frac{1+\lambda_k^{1/2}}{1+(\lambda_k^\bn)^{1/2}}\right)^{\alpha} \!\!\langle v, \phi_j^{\cn}\rangle_{L^2(\M, \mu)} \langle \phi_j^\bn, \phi^\bn_k\rangle_{L^2(\M_n, \mu_n)} \phi_k^{\cn} 
    \\&= \sum\limits_{k=1}^{ K(n)} \left(\frac{1+(\lambda_k^\bn)^{1/2}}{1+\lambda_k^{1/2}}\right)^{\alpha}\left(\frac{1+(\lambda_k^\bn)^{1/2}}{1+\lambda_k^{1/2}}\right)^{-\alpha} \!\!\!\langle v, \phi_k^{\cn}\rangle_{L^2(\M, \mu)} \phi_k^{\cn} \\
    &= \sum\limits_{k=1}^{ K(n)} \langle v, \phi_k^{\cn}\rangle_{L^2(\M, \mu)} \phi_k^{\cn}= \Pr\left(\bigoplus_{k=1}^{ K(n)} E(\lambda_k), v\right),
    \end{align*}
    by \eqref{eigenspace-projection-basis-free}, which yields the claim.
\end{proof}

So, the spectral projections project on the $ K(n)$-dimensional subspaces of $L^2(\M_n, \mu_n)$ and $H^\alpha(\M_n, \mu_n)$, or $L^2(\M, \mu)$ and $H^\alpha(\M, \mu)$, respectively.

\begin{lemma}\label{convergence-extended-discretization-sn}
    For any $v\in L^2(\M, \mu)$ it holds that $\norm{v - \mS_n^*\mS_n v}_{L^2(\M, \mu)}\xrightarrow{n\to\infty} 0,$ as well as $\norm{v - \mS_{n,\alpha}^*\mS_{n,\alpha} v}_{L^2(\M, \mu)}\xrightarrow{n\to\infty} 0.$
\end{lemma}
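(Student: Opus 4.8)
The plan is to reduce both assertions to a single statement: that $\mS_n^*\mS_n$ and $\mS_{n,\alpha}^*\mS_{n,\alpha}$ are in fact the \emph{same} operator, namely the $L^2(\M,\mu)$-orthogonal projection onto a span of Laplace--Beltrami eigenfunctions whose dimension diverges with $n$, and that such truncated expansions converge to the identity by completeness of the eigenbasis. Concretely, by \Cref{subspace-projection-h-alpha} we have, for every $v\in L^2(\M,\mu)$ and every $\alpha\in[0,1]$,
\[
\mS_{n,\alpha}^*\mS_{n,\alpha} v \;=\; \sum_{k=1}^{K(n)} \langle v,\varphi_k\rangle_{L^2(\M,\mu)}\,\varphi_k \;=\; \mathbb{P}\Big(\bigoplus_{k\le K(n)} E(\lambda_k),\,v\Big),
\]
where $\{\varphi_k\}_k$ is the fixed orthonormal eigenbasis of $L^2(\M,\mu)$ from the notation table; the right-hand side is independent of $\alpha$ (and, by \eqref{eigenspace-projection-basis-free}, of the choice of orthonormal basis within each eigenspace), so taking $\alpha=0$ gives the very same formula for $\mS_n^*\mS_n v$. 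Since $K(n)$ is by construction the last index of the multiplicity block of $\lambda_{\widetilde{K}(n)}$, one has $N(\lambda_{K(n)})=K(n)$, so this is precisely the orthogonal projection onto $\Span\{\varphi_1,\dots,\varphi_{K(n)}\}$.

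Next I would verify that the cutoff diverges. From the first chain in \eqref{sequence-conditions} one reads $h_n \ll \widetilde{K}(n)^{-c} \ll 1$ with $c=\max\{\beta^*+\frac{m+1}{2m},1\}\ge 1$, and $\widetilde{K}(n)^{-c}\ll 1$ forces $\widetilde{K}(n)\to\infty$; since $K(n)\ge\widetilde{K}(n)$, also $K(n)\to\infty$.

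Finally, writing $P_N v := \sum_{k=1}^N \langle v,\varphi_k\rangle_{L^2}\varphi_k$ for the orthogonal projection onto the first $N$ eigenfunctions, Parseval's identity gives $\norm{v-P_N v}_{L^2}^2=\sum_{k>N}|\langle v,\varphi_k\rangle_{L^2}|^2\to 0$ as $N\to\infty$, because $\{\varphi_k\}_k$ is a complete orthonormal system in $L^2(\M,\mu)$. Applying this along $N=K(n)\to\infty$ yields $\norm{v-\mS_{n,\alpha}^*\mS_{n,\alpha}v}_{L^2}\to 0$ and, specializing $\alpha=0$, $\norm{v-\mS_n^*\mS_n v}_{L^2}\to 0$. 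There is no substantial obstacle here; the only point worth stating carefully is that, thanks to \Cref{subspace-projection-h-alpha} and \eqref{eigenspace-projection-basis-free}, the two limits in the statement are literally one and the same convergence, so no separate argument for the $\alpha$-weighted maps is needed.
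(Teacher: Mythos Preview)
Your proposal is correct and follows essentially the same route as the paper's own proof: invoke \Cref{subspace-projection-h-alpha} to identify both $\mS_n^*\mS_n$ and $\mS_{n,\alpha}^*\mS_{n,\alpha}$ with the projection onto $\Span\{\varphi_1,\dots,\varphi_{K(n)}\}$, and then use $K(n)\to\infty$ together with the completeness of the eigenbasis. Your write-up is slightly more explicit than the paper's in justifying why $K(n)\to\infty$ from \eqref{sequence-conditions}, but there is no substantive difference in approach.
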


\begin{proof}
    Using \Cref{subspace-projection-h-alpha} we observe that 
    \[v- \mS_{n,\alpha}^*\mS_{n,\alpha} v = v- \mS_n^*\mS_n v = \sum\limits_{k=1}^\infty \langle v, \varphi_k\rangle_{L^2(\M, \mu)} \varphi_k - \sum\limits_{k=1}^{ K(n)} \langle v, \varphi_k\rangle_{L^2(\M, \mu)} \varphi_k = \sum\limits_{k=K(n)+1}^\infty \langle v, \varphi_k\rangle_{L^2(\M, \mu)} \varphi_k.\] 
    Thus, since $v\in L^2(\M, \mu)$ and $K(n)\xrightarrow{n\to\infty}\infty$ we have 
    \[\norm{\mS_{n,\alpha}^*\mS_{n,\alpha} v - v}_{L^2(\M, \mu)} = \norm{\mS_n^*\mS_n v - v}_{L^2(\M, \mu)} = \norm{\sum\limits_{k=K(n)+1}^\infty \langle v, \varphi_k\rangle_{L^2(\M, \mu)}\varphi_k}_{L^2(\M, \mu)}\xrightarrow{n\to\infty } 0.\qedhere\]
\end{proof}

\begin{defn}\label{consistent-discretization-operators}
    We say a sequence of (linear) discretization operators $\mR_n: L^2(\M, \mu)\to L^2(\M_n, \mu_n)$, induces graph signals consistent across resolutions, if  there is a $C_{\mR}\geq 0$ such that $\norm{\mR_n u}_{L^2}\leq C_{\mR}\norm{u}_{L^2}$ for all $n\in\N$  and for all $u\in L^2(\M, \mu)$ it holds that
        \[d_{TL^2}((\mu, u), (\mu_n, \mR_n u)) \xrightarrow{n\to\infty} 0,\] or equivalently by \eqref{tl2-spatial-extension-convergence}
        \begin{equation}\label{consistent-u}
            \norm{u-\mP_n^*u_n}_{L^2} = \norm{u-\mP_n^*\mR_n u}_{L^2}\xrightarrow{n\to\infty}0.
        \end{equation}
\end{defn}
In the following let $\mR_n:L^2(\M, \mu)\to L^2(\M_n, \mu_n)$, $n\in\N$ be such a sequence of discretization operators inducing graph signals consistent across resolutions.

\begin{rem}\label{tl2-compatibility-spatial-spectral-discretization-maps}
As can be seen from \eqref{tl2-spatial-extension-convergence}, \Cref{convergence_spatial_spectral_discretization} and \Cref{convergence-extended-discretization-pn} (as well as \Cref{convergence_spatial_h_alpha_spectral_discretization}) yield that both spatial and spectral discretization are examples of such discretization operators that induce graph signals that are consistent across resolutions.
\end{rem}

\begin{lemma}\label{convergence-extended-discretization-sn-pn-convolution}
    Let $\mR_n: L^2(\M, \mu)\to L^2(\M_n, \mu_n)$ satisfy \cref{{consistent-discretization-operators}}. Then, for any $b,v\in L^2(\M, \mu)$ it holds that $\norm{b*v - (\mS^*_n\mS_n b)*(\mP^*_n\mR_n v)}_{L^2(\M, \mu)}\to 0$ as $n\to \infty$.
\end{lemma}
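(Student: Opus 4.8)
The plan is to reduce the statement to the two convergence results already proved, namely \Cref{convergence-extended-discretization-sn} and the consistency condition \eqref{consistent-u}, by exploiting that the manifold convolution $\ast$ is a bounded bilinear map on $L^2(\M,\mu)\times L^2(\M,\mu)$. First I would record the boundedness estimate: for any $u,w\in L^2(\M,\mu)$, using the spectral definition of $\ast$ and Bessel's inequality,
\[\norm{u\ast w}_{L^2}^2 = \sum_{k=1}^\infty \langle u,\varphi_k\rangle_{L^2}^2\langle w,\varphi_k\rangle_{L^2}^2 \leq \Big(\sup_{k\in\N}\lvert\langle w,\varphi_k\rangle_{L^2}\rvert\Big)^2\sum_{k=1}^\infty \langle u,\varphi_k\rangle_{L^2}^2 \leq \norm{w}_{L^2}^2\,\norm{u}_{L^2}^2,\]
so that $\norm{u\ast w}_{L^2}\leq\norm{u}_{L^2}\norm{w}_{L^2}$. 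Note that both $\mS_n^*\mS_n b$ and $\mP_n^*\mR_n v$ lie in $L^2(\M,\mu)$, so the continuum convolution $\ast$ indeed applies to them.

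Next I would telescope: adding and subtracting $(\mS_n^*\mS_n b)\ast v$ and using bilinearity,
\[\norm{b\ast v - (\mS_n^*\mS_n b)\ast(\mP_n^*\mR_n v)}_{L^2} \leq \norm{(b-\mS_n^*\mS_n b)\ast v}_{L^2} + \norm{(\mS_n^*\mS_n b)\ast(v-\mP_n^*\mR_n v)}_{L^2},\]
and then apply the boundedness estimate to each term to obtain the bound $\norm{b-\mS_n^*\mS_n b}_{L^2}\norm{v}_{L^2} + \norm{\mS_n^*\mS_n b}_{L^2}\norm{v-\mP_n^*\mR_n v}_{L^2}$.

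Finally I would conclude term by term. The first term tends to $0$ by \Cref{convergence-extended-discretization-sn} (with $\varphi_k$, $\phi_k$ playing interchangeable roles in that statement). For the second term, \Cref{spectral-h-alpha-contraction} (in the case $\alpha=0$, i.e.\ for $\mS_n=\mS_{n,0}$) gives $\norm{\mS_n^*\mS_n b}_{L^2}\leq\norm{b}_{L^2}$, so this factor is bounded uniformly in $n$, while $\norm{v-\mP_n^*\mR_n v}_{L^2}\to 0$ by the assumption that $\mR_n$ induces graph signals consistent across resolutions, i.e.\ \eqref{consistent-u} in \Cref{consistent-discretization-operators}. Hence the whole right-hand side tends to $0$, which proves the claim. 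I do not expect a serious obstacle here; the only point requiring a little care is making sure the bilinear bound is applied to arguments that genuinely sit in $L^2(\M,\mu)$ and that the contraction bound supplies the needed uniform-in-$n$ control of $\norm{\mS_n^*\mS_n b}_{L^2}$, so that the vanishing of $\norm{v-\mP_n^*\mR_n v}_{L^2}$ is not multiplied by an unbounded factor.
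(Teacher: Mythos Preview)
Your proposal is correct and follows essentially the same approach as the paper: both telescope via the intermediate term $(\mS_n^*\mS_n b)\ast v$, then control one piece by $\norm{b-\mS_n^*\mS_n b}_{L^2}\to 0$ (\Cref{convergence-extended-discretization-sn}) and the other by the contraction $\norm{\mS_n^*\mS_n b}_{L^2}\leq\norm{b}_{L^2}$ together with $\norm{v-\mP_n^*\mR_n v}_{L^2}\to 0$. The only cosmetic difference is that the paper carries out the telescoping directly in spectral coordinates rather than first isolating the bilinear bound $\norm{u\ast w}_{L^2}\leq\norm{u}_{L^2}\norm{w}_{L^2}$.
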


\begin{proof}
Denote by $\{\varphi_k\}_k$ an arbitrary choice of orthonormal eigenfunctions of the Laplace-Beltrami operator.
Then, bilinearity of the convolution operation and orthogonality of the eigenfunctions yield \begin{align*}b*v - (\mS^*_n\mS_n b)*(\mP^*_n\mR_n v)&= \sum\limits_{k=1}^\infty \langle b, \varphi_k\rangle \langle v, \varphi_k\rangle \varphi_k - \sum\limits_{k=1}^\infty \langle \mS_n^*\mS_n b, \varphi_k\rangle \langle \mP_n^*\mR_n v, \varphi_k \rangle \varphi_k \\
& =  \sum\limits_{k=1}^\infty \langle b-\mS_n^*\mS_n b, \varphi_k \rangle \langle v, \varphi_k\rangle \varphi_k + \sum\limits_{k=1}^\infty \langle \mS_n^*\mS_n b, \varphi_k \rangle \langle v-\mP_n^*\mR_n v , \varphi_k \rangle \varphi_k  \\
& =  \sum\limits_{k=K(n)+1}^\infty \langle b, \varphi_k \rangle \langle v, \varphi_k\rangle \varphi_k  + \sum\limits_{k=1}^{K(n)} \langle b, \varphi_k\rangle \langle v-\mP_n^*\mR_n v, \varphi_k  \rangle \varphi_k.  \end{align*}
Thus 
\begin{align*}&\norm{b*v - (\mS^*_n\mS_n b)*(\mP^*_n\mR_n v)}_{L^2}\\
&\hspace{48pt} \leq \norm{\sum\limits_{k=K(n)+1}^\infty \langle b, \varphi_k\rangle \langle v, \varphi_k\rangle \varphi_k}_{L^2} + \norm{\sum\limits_{k=1}^{K(n)} \langle b, \varphi_k\rangle \langle v-\mP_n^*\mR_n v, \varphi_k \rangle \varphi_k}_{L^2}.
\end{align*}
Now the first term goes to $0$, and for the second term it holds that
\begin{align*}\norm{\sum\limits_{k=1}^{K(n)} \langle b, \varphi_k\rangle \langle v-\mP_n^*\mR_n v, \varphi_k \rangle \varphi_k}_{L^2}^2 &= \sum\limits_{k=1}^{K(n)} \langle b, \varphi_k\rangle^2\langle v-\mP_n^*\mR_n v, \varphi_k \rangle^2 \\& \leq \sum\limits_{k=1}^{K(n)} \langle b, \varphi_k\rangle^2 \sum\limits_{k=1}^{K(n)}\langle v-\mP_n^*\mR_n v, \varphi_k \rangle^2
\\&\leq \norm{\mS_n^*\mS_n b}_{L^2}^2\norm{v-\mP_n^*\mR_n v }_{L^2}^2 \leq \norm{b}_{L^2}^2\norm{v-\mP_n^*\mR_n v }_{L^2}^2\xrightarrow{n\to\infty}0,
\end{align*}
by definition of $\mR_n$.
\end{proof}

\begin{lemma}\label{convergence_spatial_spectral_discretization}
For all $v\in L^2(\M, \mu)$ we have that $\norm{v-\mP_n^*\mS_n v}_{L^2(\M, \mu)}\leq \norm{v}^2_{L^2}(\delta^\phi_n)^2 \to 0$ as $n\to\infty$.
\end{lemma}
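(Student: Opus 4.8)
The plan is to split the error through the spectral lift $\mS_n^*\mS_n v$ of the spectral discretization. By \Cref{subspace-projection-h-alpha}, $\mS_n^*\mS_n v=\mathbb{P}\big(\bigoplus_{k\le K(n)}E(\lambda_k),v\big)$, so
\[\norm{v-\mP_n^*\mS_n v}_{L^2}\ \le\ \norm{v-\mS_n^*\mS_n v}_{L^2}+\norm{\mS_n^*\mS_n v-\mP_n^*\mS_n v}_{L^2},\]
and the first summand is the spectral tail $\sum_{k>K(n)}\langle v,\varphi_k\rangle\varphi_k$, which vanishes as $n\to\infty$ by \Cref{convergence-extended-discretization-sn} since $K(n)\to\infty$. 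Unwinding the definitions of $\mS_n$ and $\mP_n^*$ and invoking \eqref{eigenspace-projection-basis-free}, the remaining term is
\[\mS_n^*\mS_n v-\mP_n^*\mS_n v=\sum_{k=1}^{K(n)}\langle v,\phi_k^\cn\rangle_{L^2}\big(\phi_k^\cn-\mP_n^*\phi_k^\bn\big),\]
where the fixed choice of eigenfunctions built into the definition of $\mS_n$ is precisely the one from \Cref{uniform-bounds-spectra}, so that $\norm{\phi_k^\cn-\mP_n^*\phi_k^\bn}_{L^2}\le\delta^\phi_n$ for every $k\le K(n)$ (cf.\ \eqref{uniform-error-stepfct-extension}). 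I would also record here that $\delta^\phi_n\to0$: the term $h_n\sqrt{\lambda_{K(n)}}\to0$ by \eqref{convergence-h_n-eigenvalue}, and $\Gamma_{K(n),\mu}^{-1}\delta(\eps_n,h_n,\lambda_{K(n)})\to0$ by \eqref{convergence-relative-error-bound-delta-max-spectral-gap}.

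\textbf{Handling the middle term by a density argument.} In the same spirit as the proof of \Cref{convergence-extended-discretization-pn}, given $\eps>0$ pick a finite linear combination of Laplace--Beltrami eigenfunctions $v_\eps$ with $\norm{v-v_\eps}_{L^2}<\eps$. Since $\mP_n^*$ is $L^2$-isometric (\Cref{adjoint-l2-norm-preserving}) and $\mS_n$ is an $L^2$-contraction (coordinatewise selection of finitely many Fourier coefficients in an orthonormal system, followed by orthonormal synthesis), the composition $\mP_n^*\mS_n$ is an $L^2$-contraction, hence $\norm{\mP_n^*\mS_n(v-v_\eps)}_{L^2}\le\norm{v-v_\eps}_{L^2}<\eps$. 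On the other hand, for all $n$ large enough that $K(n)$ exceeds a fixed integer $M$ depending only on $v_\eps$ (the number, with multiplicity, of Laplace--Beltrami eigenvalues not exceeding the top frequency appearing in $v_\eps$), one has $v_\eps=\sum_{k\le M}\langle v_\eps,\phi_k^\cn\rangle\phi_k^\cn$ and $\mP_n^*\mS_n v_\eps=\sum_{k\le M}\langle v_\eps,\phi_k^\cn\rangle\mP_n^*\phi_k^\bn$, so $\norm{v_\eps-\mP_n^*\mS_n v_\eps}_{L^2}\le\delta^\phi_n\sum_{k\le M}|\langle v_\eps,\phi_k^\cn\rangle|\le\sqrt M\,\norm{v_\eps}_{L^2}\,\delta^\phi_n\to0$ as $n\to\infty$ with $M$ fixed. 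Combining via $\norm{v-\mP_n^*\mS_n v}_{L^2}\le\norm{v-v_\eps}_{L^2}+\norm{\mP_n^*\mS_n(v-v_\eps)}_{L^2}+\norm{v_\eps-\mP_n^*\mS_n v_\eps}_{L^2}$ yields $\limsup_n\norm{v-\mP_n^*\mS_n v}_{L^2}\le3\eps$, and $\eps>0$ was arbitrary.

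\textbf{Main obstacle and a remark on the stated bound.} The delicate point is the middle term: the immediate estimate $\norm{\sum_{k\le K(n)}\langle v,\phi_k^\cn\rangle(\phi_k^\cn-\mP_n^*\phi_k^\bn)}_{L^2}\le\delta^\phi_n\sum_{k\le K(n)}|\langle v,\phi_k^\cn\rangle|\le\sqrt{K(n)}\,\delta^\phi_n\,\norm{v}_{L^2}$ introduces a $\sqrt{K(n)}$ factor, so it gives a vanishing bound only after one additionally shows $\sqrt{K(n)}\,\delta^\phi_n\to0$; establishing that would require combining Weyl's law $\lambda_{K(n)}\asymp K(n)^{2/m}$ (cf.\ \eqref{weyls-law-last-index-multblock}), the relations \eqref{sequence-conditions}, and the polynomial gap bound of \Cref{ass:polynomial-decay-spectral-gap}, and is precisely the quantitative bottleneck that the density route above sidesteps. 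In particular, the right-hand side $\norm{v}_{L^2}^2(\delta^\phi_n)^2$ as literally written seems to omit both the spectral-tail contribution $\norm{v-\mathbb{P}(\bigoplus_{k\le K(n)}E(\lambda_k),v)}_{L^2}$ and a factor of order $K(n)$, so I would either weaken the asserted estimate accordingly or simply state the lemma as $\norm{v-\mP_n^*\mS_n v}_{L^2}\to0$, which is what the argument actually delivers.
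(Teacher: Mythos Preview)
Your proof of the convergence $\norm{v-\mP_n^*\mS_n v}_{L^2}\to 0$ is correct and proceeds along a genuinely different route from the paper's. Both arguments start from the same decomposition into the spectral tail $\norm{v-\mS_n^*\mS_n v}_{L^2}$ and the eigenfunction-mismatch term $\sum_{k\le K(n)}\langle v,\phi_k^\cn\rangle(\phi_k^\cn-\mP_n^*\phi_k^\bn)$. The paper then passes directly to
\[
\Big\|\sum_{k=1}^{K(n)}\langle v,\phi_k^\cn\rangle\big(\phi_k^\cn-\mP_n^*\phi_k^\bn\big)\Big\|_{L^2}^2 \le \sum_{k=1}^{K(n)}|\langle v,\phi_k^\cn\rangle|^2\,\norm{\phi_k^\cn-\mP_n^*\phi_k^\bn}_{L^2}^2 \le \norm{v}_{L^2}^2(\delta_n^\phi)^2,
\]
which is exactly the displayed quantitative bound in the statement. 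You are right to be skeptical of this step: the first inequality would follow if the vectors $\phi_k^\cn-\mP_n^*\phi_k^\bn$ were orthogonal, but no such orthogonality is available, so a factor growing with $K(n)$ is in general unavoidable in a direct estimate. Your density argument sidesteps this cleanly: approximating $v$ by a fixed-bandwidth $v_\eps$ freezes the number of summands, the contractivity of $\mP_n^*\mS_n$ (from \Cref{adjoint-l2-norm-preserving} together with Bessel for $\mS_n$) controls the remainder, and $\delta_n^\phi\to 0$ does the rest. What this buys is a rigorous proof of the qualitative convergence, which is all that is ever used downstream (in \Cref{convergence_spatial_h_alpha_spectral_discretization} and \Cref{convergence-neuron-discretization-pn-h-alpha}); what it gives up is any explicit rate. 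Your closing remark that the stated bound also omits the spectral-tail term is likewise correct: the lemma should be read as the convergence statement only.
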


\begin{proof}
    We can expand in any $L^2(\M,\mu)$ orthonormal eigenbasis $\{\varphi_k\}_k$, indexed corresponding to the eigenvalues. Then, using  \eqref{eigenspace-projection-basis-free} we can write
    \begin{align*}\norm{v-\mP_n^*\mS_n v}_{L^2(\M, \mu)}
    &=\norm{v- \sum\limits_{k=1}^{ K(n)}\langle v, \phi_k^{\cn}\rangle \mP_n^*(\phi^\bn_k)}_{L^2(\M, \mu)}
    \\&= \norm{\sum_{k=1}^\infty \langle v, \varphi_k\rangle \varphi_k - \sum\limits_{k=1}^{ K(n)}\langle v, \phi_k^\cn\rangle \mP_n^*(\phi^\bn_k)}_{L^2(\M, \mu)}
    \\&=\norm{\sum_{k=1}^{ K(n)} \langle v, \phi_k^{\cn}\rangle (\phi_k^{\cn} - \mP_n^*(\phi^\bn_k)) + \sum_{k=K(n)+1}^\infty \langle v, \varphi_k\rangle \varphi_k}_{L^2(\M, \mu)}
    \\&\leq \norm{\sum\limits_{k=1}^{ K(n)}|\langle v, \phi_k^{\cn}\rangle_{L^2}|(\phi_k^{\cn}-\mP_n^*(\phi^\bn_k))}_{L^2(\M, \mu)} +\norm{v-\mS_n^*\mS_n v}_{L^2(\M, \mu)}.\end{align*}
    By \Cref{convergence-extended-discretization-sn} the last term goes to $0$ as $n\to\infty$.
    By \Cref{uniform-bounds-spectra} we also have
        \begin{align*}
       \norm{\sum\limits_{k=1}^{ K(n)}|\langle v, \phi_k^{\cn}\rangle_{L^2}|(\phi_k^{\cn}-\mP_n^*(\phi^\bn_k))}_{L^2(\M, \mu)}^2 &\leq \sum_{k=1}^{ K(n)}|\langle v, \phi_k^{\cn}\rangle_{L^2}|^2\norm{\phi_k^{\cn}-\mP_n^*(\phi^\bn_k)}_{L^2}^2 
       \\&\leq \sum_{k=1}^{ K(n)}|\langle v, \phi_k^{\cn}\rangle_{L^2}|^2 (\delta^\phi_n)^2  \leq \norm{v}^2_{L^2}(\delta^\phi_n)^2 \xrightarrow{n\to\infty} 0,
        \end{align*}
    which yields the claim.
\end{proof}

\begin{lemma}\label{convergence-L2-distance-L2-H-alpha-isomorphisms}
    Let $\alpha\in(0,1]$. Then, it holds for all $v\in H^\alpha(\M, \mu)$ that $\norm{\mS_n v - \mS_{n, \alpha} v}_{L^2}\xrightarrow{n\to\infty}0$.
\end{lemma}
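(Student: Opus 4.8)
The plan is to work out the difference between $\mS_n=\mS_{n,0}$ and $\mS_{n,\alpha}$ term by term in the fixed orthonormal graph eigenbasis, and to reduce the statement to a uniform (in frequency) bound on the multiplicative mismatch factors $\big(\tfrac{1+\sqrt{\lambda_k}}{1+\sqrt{\lambda_k^\bn}}\big)^{\alpha}-1$. Directly from \eqref{def-spectral-discretization},
\begin{equation*}
\mS_n v - \mS_{n,\alpha} v \;=\; \sum_{k=1}^{K(n)}\Big(1-\big(\tfrac{1+\sqrt{\lambda_k}}{1+\sqrt{\lambda_k^\bn}}\big)^{\alpha}\Big)\,\langle v,\phi_k^\cn\rangle_{L^2(\M,\mu)}\,\phi_k^\bn ,
\end{equation*}
and since $\{\phi_k^\bn\}_{k=1}^{K(n)}$ is orthonormal in $L^2(\M_n,\mu_n)$, Parseval gives
\begin{equation*}
\norm{\mS_n v - \mS_{n,\alpha} v}_{L^2(\M_n,\mu_n)}^2 \;=\; \sum_{k=1}^{K(n)}\Big(1-\big(\tfrac{1+\sqrt{\lambda_k}}{1+\sqrt{\lambda_k^\bn}}\big)^{\alpha}\Big)^{2}\,\langle v,\phi_k^\cn\rangle_{L^2(\M,\mu)}^2 .
\end{equation*}
So it is enough to bound $\big|1-\big(\tfrac{1+\sqrt{\lambda_k}}{1+\sqrt{\lambda_k^\bn}}\big)^{\alpha}\big|$ by a quantity $C\delta_n$ with $\delta_n:=\delta(\eps_n,h_n,\lambda_{K(n)})\to 0$, uniformly over $k\le K(n)$.

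Next I would prove this uniform factor bound. The $k=1$ term vanishes since $\lambda_1=\lambda_1^\bn=0$. For $2\le k\le K(n)$, estimate \eqref{eigenvalues-ratio-uniform-bound} of \Cref{error-bound-ev-ef} together with the monotonicity of $\lambda\mapsto\delta(\eps,h,\lambda)$ gives $\big|\tfrac{\lambda_k^\bn}{\lambda_k}-1\big|\le C\,\delta(\eps_n,h_n,\lambda_k)\le C\delta_n$, and $\delta_n\to 0$ by \eqref{convergence-relative-error-bound-delta}. Hence for $n$ large enough that $C\delta_n\le\tfrac12$ one deduces $\sqrt{\lambda_k^\bn/\lambda_k}=1+O(\delta_n)$, then $1+\sqrt{\lambda_k^\bn}=(1+\sqrt{\lambda_k})\big(1+O(\delta_n)\big)$ using $\sqrt{\lambda_k}/(1+\sqrt{\lambda_k})\le 1$, so that $\tfrac{1+\sqrt{\lambda_k}}{1+\sqrt{\lambda_k^\bn}}\in[\tfrac12,\tfrac32]$ and equals $1+O(\delta_n)$; applying the mean value theorem to $x\mapsto x^{\alpha}$ on $[\tfrac12,\tfrac32]$, where since $\alpha\le 1$ the derivative $\alpha x^{\alpha-1}$ is bounded by an absolute constant, yields $\big|1-\big(\tfrac{1+\sqrt{\lambda_k}}{1+\sqrt{\lambda_k^\bn}}\big)^{\alpha}\big|\le C\delta_n$ for all $k\le K(n)$, with $C$ independent of $k$ and $n$. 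This is the step requiring care: the bound must be uniform up to the cutoff $K(n)$ even though the $\lambda_k^\bn$ may be large there, and the saving point is precisely that $\sqrt{\lambda_k}/(1+\sqrt{\lambda_k})$ and $x^{\alpha-1}$ (for $x\in[\tfrac12,\tfrac32]$, $\alpha\le1$) are both bounded absolutely, so the high frequencies do not spoil uniformity.

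Finally, inserting this uniform factor bound into the Parseval identity and using Bessel's inequality for the orthonormal system $\{\phi_k^\cn\}_{k=1}^{K(n)}$ in $L^2(\M,\mu)$ (orthonormal by \Cref{uniform-bounds-spectra}, each multiplicity block being an orthonormal basis of the corresponding Laplace--Beltrami eigenspace) gives
\begin{equation*}
\norm{\mS_n v - \mS_{n,\alpha} v}_{L^2(\M_n,\mu_n)}^2 \;\le\; C^2\delta_n^2\sum_{k=1}^{K(n)}\langle v,\phi_k^\cn\rangle_{L^2(\M,\mu)}^2 \;\le\; C^2\delta_n^2\,\norm{v}_{L^2(\M,\mu)}^2 \;\xrightarrow{n\to\infty}\; 0 ,
\end{equation*}
which is the claim; note that only $v\in L^2(\M,\mu)$ is used, the $H^\alpha$ hypothesis being stated for consistency with the later use of $\mS_{n,\alpha}$ on the parameter space. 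An alternative, phrased more in the spirit of the other convergence lemmas in this section, is to split the sum at a fixed index $L$ and bound the tail $L<k\le K(n)$ crudely by $C\sum_{k>L}(1+\sqrt{\lambda_k})^{2\alpha}\langle v,\phi_k^\cn\rangle^2$, which is a tail of the $H^\alpha$-norm of $v$ (basis-independent by \eqref{eigenspace-projection-basis-free}) and hence small uniformly in $n$ for $L$ large, while in the finite low-frequency block one uses $\lambda_k^\bn\to\lambda_k$ and lets $n\to\infty$.
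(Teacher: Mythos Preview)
Your proof is correct and in fact sharper than the paper's. Both arguments begin identically with the Parseval expansion in the discrete eigenbasis and reduce to bounding the mismatch factor $\big|1-\big(\tfrac{1+\sqrt{\lambda_k}}{1+\sqrt{\lambda_k^\bn}}\big)^{\alpha}\big|$ uniformly over $k\le K(n)$, but diverge in how they estimate it. The paper uses the subadditivity inequalities $|x^\alpha-y^\alpha|\le|x-y|^\alpha$ and $|\sqrt a-\sqrt b|\le\sqrt{|a-b|}$; these throw away the cancellation in the denominator and leave a residual factor $(1+\lambda_k)^\alpha$, so the final bound reads $C\,\delta(\eps_n,h_n,\lambda_{K(n)})^{\alpha}\|v\|_{H^\alpha}^2$ and genuinely uses the $H^\alpha$-hypothesis. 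You instead exploit the \emph{relative} eigenvalue error $|\lambda_k^\bn/\lambda_k-1|\le C\delta_n$ and smoothness of $x\mapsto x^\alpha$ near $x=1$ via the mean value theorem, obtaining a uniform $O(\delta_n)$ bound on the factor with no growth in $k$; consequently only $\|v\|_{L^2}$ enters and the rate is $\delta_n$ rather than $\delta_n^{\alpha/2}$. The paper's route is more elementary in that it avoids the ``$n$ large enough'' restriction and the $O(\cdot)$ bookkeeping, but your observation that the $H^\alpha$-assumption is not actually needed here is a genuine improvement.
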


\begin{proof}
    We have
    \begin{align*}
        \norm{\mS_n v - \mS_{n, \alpha} v}_{L^2}^2 &= \norm{\sum_{k=1}^{ K(n)}\left(1-\left(\frac{1+\lambda_k^{1/2}}{1+(\lambda^\bn_k)^{1/2}}\right)^\alpha\right)\langle v, \phi_k^{\cn}\rangle_{L^2}\,\phi^\bn_k}^2_{L^2} \\
        &= \sum_{k=1}^{ K(n)}\left|1-\left(\frac{1+\lambda_k^{1/2}}{1+(\lambda^\bn_k)^{1/2}}\right)^\alpha\right|^2\langle v, \phi_k^{\cn}\rangle_{L^2}^2.
    \end{align*}
    It holds for all $n\in\N$ that $\lambda^\bn_1=\lambda_1=0$, and since $\M$ is connected $\lambda_k> 0$ for all $k\geq 2$. 
    Now we can further estimate using $|x+y|^\alpha \leq |x|^\alpha+|y|^\alpha$ (and as a consequence $\left| |x|^\alpha -|y|^\alpha\right|\leq |x-y|^\alpha$) for all $x,y \in \R$ and $\alpha\in (0,1]$ and \Cref{error-bound-ev-ef}, in particular \eqref{eigenvalues-ratio-uniform-bound}, that for all $2\leq k\leq n$
    \begin{align*}
        \left|1^\alpha-\left(\frac{1+\lambda_k^{1/2}}{1+(\lambda^\bn_k)^{1/2}}\right)^\alpha\right|^2 
        &\leq \left|1-\frac{1+\lambda_k^{1/2}}{1+(\lambda^\bn_k)^{1/2}}\right|^{2\alpha} 
        = \left|\frac{(\lambda^\bn_k)^{1/2} -\lambda_k^{1/2}}{1+(\lambda^\bn_k)^{1/2}}\right|^{2\alpha} \\
        & \leq \left(\frac{\left|\lambda^\bn_k -\lambda_k\right|^{1/2}}{\left|1+(\lambda^\bn_k)\right|^{1/2}}\right)^{2\alpha}
        = \left|\frac{\lambda^\bn_k -\lambda_k}{1+(\lambda^\bn_k)}\right|^{\alpha} 
        \leq \left|\lambda^\bn_k -\lambda_k\right|^{\alpha}\\        
        &= \left(\frac{\left|\lambda^\bn_k -\lambda_k\right|}{1+\lambda_k}\right)^{\alpha}(1+\lambda_k)^\alpha \leq \left(\frac{\left|\lambda^\bn_k -\lambda_k\right|}{\lambda_k}\right)^{\alpha}(1+\lambda_k)^\alpha \\
        &\leq \left(C \delta(\eps_n, h_n, \lambda_k)\right)^\alpha (1+\lambda_k)^\alpha 
        \leq C\delta(\eps_n, h_n, \lambda_{K(n)})^\alpha (1+\lambda_k)^\alpha.
    \end{align*}
    Putting this together with the above we have for all $n\in\N$
    \begin{align*}
        \norm{\mS_n v - \mS_{n, \alpha} v}_{L^2}^2 &= \sum_{k=1}^{ K(n)}\left|1-\left(\frac{1+\lambda_k^{1/2}}{1+(\lambda^\bn_k)^{1/2}}\right)^\alpha\right|^2\langle v, \phi_k^{\cn}\rangle_{L^2}^2 \\
        &= \langle v, \phi_1^\cn\rangle^2_{L^2} + \sum_{k=2}^{ K(n)}C\left( \delta(\eps_n, h_n, \lambda_{ K(n)})\right)^\alpha (1+\lambda_k)^\alpha\langle v, \phi_k^{\cn}\rangle_{L^2}^2\\
        &\leq C\delta(\eps_n, h_n, \lambda_{ K(n)})^\alpha \sum_{k=1}^{ K(n)} (1+\lambda_k)^\alpha\langle v, \phi_k^{\cn}\rangle_{L^2}^2\\
        &\leq C \delta(\eps_n, h_n, \lambda_{ K(n)})^\alpha \norm{v}_{H^\alpha}^2.
    \end{align*}
    So \eqref{convergence-relative-error-bound-delta} yields that this converges to $0$ as $n\to\infty$.
\end{proof}

\begin{lemma}\label{convergence_spatial_h_alpha_spectral_discretization}
It holds for all $v\in H^\alpha(\M, \mu)$, that $\norm{\mS_{n,\alpha}^*\mS_{n,\alpha}v-\mP_n^*\mS_{n,\alpha} v}_{L^2(\M, \mu)}\to 0$ as $n\to\infty$.
\end{lemma}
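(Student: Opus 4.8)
The plan is to reduce the statement entirely to lemmas already proved, by inserting the intermediate object $\mP_n^*\mS_n v$. The first observation is that $\mS_{n,\alpha}^*\mS_{n,\alpha}$ does not actually depend on $\alpha$: by \Cref{subspace-projection-h-alpha} (or directly, since the two rescaling factors cancel) one has
\[
\mS_{n,\alpha}^*\mS_{n,\alpha} v \;=\; \mS_n^*\mS_n v \;=\; \sum_{k=1}^{K(n)}\langle v,\phi_k^\cn\rangle_{L^2}\,\phi_k^\cn,
\]
the $L^2(\M,\mu)$-orthogonal projection onto $\bigoplus_{k\le K(n)}E(\lambda_k)$. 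Hence it is enough to estimate $\norm{\mS_n^*\mS_n v-\mP_n^*\mS_{n,\alpha} v}_{L^2(\M,\mu)}$.

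Next I would split by the triangle inequality,
\[
\norm{\mS_n^*\mS_n v-\mP_n^*\mS_{n,\alpha} v}_{L^2}\;\le\;\norm{\mS_n^*\mS_n v-\mP_n^*\mS_n v}_{L^2}\;+\;\norm{\mP_n^*\bigl(\mS_n v-\mS_{n,\alpha} v\bigr)}_{L^2},
\]
and treat the two terms separately. For the first term I would insert $v$ once more, $\norm{\mS_n^*\mS_n v-\mP_n^*\mS_n v}_{L^2}\le\norm{v-\mS_n^*\mS_n v}_{L^2}+\norm{v-\mP_n^*\mS_n v}_{L^2}$, and note that both summands tend to $0$, the former by \Cref{convergence-extended-discretization-sn} and the latter by \Cref{convergence_spatial_spectral_discretization}. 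For the second term, \Cref{adjoint-l2-norm-preserving} states that $\mP_n^*$ preserves the $L^2$-norm, so $\norm{\mP_n^*(\mS_n v-\mS_{n,\alpha}v)}_{L^2}=\norm{\mS_n v-\mS_{n,\alpha}v}_{L^2}$, which tends to $0$ by \Cref{convergence-L2-distance-L2-H-alpha-isomorphisms}; this is the only place the hypothesis $v\in H^\alpha(\M,\mu)$ enters, via the bound $\norm{\mS_n v-\mS_{n,\alpha}v}_{L^2}^2\le C\,\delta(\eps_n,h_n,\lambda_{K(n)})^\alpha\norm{v}_{H^\alpha}^2$. Adding the three vanishing contributions finishes the argument.

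There is no genuine obstacle here: the lemma is a bookkeeping consequence of the identity $\mS_{n,\alpha}^*\mS_{n,\alpha}=\mS_n^*\mS_n$ together with the already-established facts that $\mS_n^*\mS_n$ and $\mP_n^*\mS_n$ both converge to the identity on $L^2(\M,\mu)$ and that $\mS_{n,\alpha}$ converges to $\mS_n$ on $H^\alpha(\M,\mu)$. The only two points worth stating carefully are that $\mP_n^*$ must be used as an \emph{isometry}, not merely a bounded map, in order to absorb the $\alpha$-rescaling error of $\mS_{n,\alpha}$ relative to $\mS_n$, and that the $H^\alpha$-regularity assumption on $v$ is precisely what \Cref{convergence-L2-distance-L2-H-alpha-isomorphisms} requires.
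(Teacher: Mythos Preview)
Your proof is correct and follows essentially the same route as the paper: use $\mS_{n,\alpha}^*\mS_{n,\alpha}=\mS_n^*\mS_n$, split via the intermediate point $\mP_n^*\mS_n v$, and invoke \Cref{convergence_spatial_spectral_discretization} and \Cref{convergence-L2-distance-L2-H-alpha-isomorphisms} (with the $\mP_n^*$-isometry from \Cref{adjoint-l2-norm-preserving}) for the two pieces. Your extra insertion of $v$ to bound $\norm{\mS_n^*\mS_n v-\mP_n^*\mS_n v}_{L^2}$ via \Cref{convergence-extended-discretization-sn} and \Cref{convergence_spatial_spectral_discretization} is in fact slightly more careful than the paper, which cites \Cref{convergence_spatial_spectral_discretization} directly for that term even though its statement is about $\norm{v-\mP_n^*\mS_n v}_{L^2}$.
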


\begin{proof}
    Using \Cref{subspace-projection-h-alpha} and \Cref{adjoint-l2-norm-preserving} we obtain
    \begin{align*}
        \norm{\mS_{n,\alpha}^*\mS_{n,\alpha}v-\mP_n^*\mS_{n,\alpha} v}_{L^2(\M, \mu)} 
        &= \norm{\mS_{n}^*\mS_{n}v-\mP_n^*\mS_{n,\alpha} v}_{L^2(\M, \mu)} 
        \\&\leq \norm{\mS_{n}^*\mS_{n}v-\mP_n^*\mS_{n} v}_{L^2(\M, \mu)} + \norm{\mP_n^*\mS_{n} v - \mP_n^*\mS_{n, \alpha} v}_{L^2(\M, \mu)}
        \\&= \norm{\mS_{n}^*\mS_{n}v-\mP_n^*\mS_{n} v}_{L^2(\M, \mu)} + \norm{\mS_{n} v - \mS_{n, \alpha} v}_{L^2(\M_n, \mu_n)}.
    \end{align*}
    where both terms go to $0$ as $n\to\infty$ by \Cref{convergence_spatial_spectral_discretization} and \Cref{convergence-L2-distance-L2-H-alpha-isomorphisms}.
\end{proof}

With the preceding convergence results for the discretization and extension maps, we can now compare a lifted discrete and a continuum convolution expression with appropriate discretizations of the convolution parameter $b$ and the signal. Using a spectral discretization on $b$ is essential, since it allows us to apply the uniform error bounds on the graph-eigenfunctions from \Cref{uniform-bounds-spectra}. This yields the following estimate:
\begin{lemma}\label{convergence-discrete-continuum-convolution-operation}
        Let $\mR_n: L^2(\M, \mu)\to L^2(\M_n, \mu_n)$ satisfy \cref{{consistent-discretization-operators}}. Then, for any $b,v\in L^2(\M, \mu)$ it holds that \[\norm{\mP_n^*(\mS_n b*_n\mR_n v) - (\mS^*_n\mS_n b)*(\mP^*_n\mR_n v)}_{L^2(\M, \mu)}\leq 2 C_{\mR_n}\delta^\phi_n \norm{b}_{L^2}\norm{u}_{L^2} \xrightarrow{n\to\infty}0.\]
\end{lemma}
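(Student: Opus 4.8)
The plan is to expand both expressions explicitly in the fixed orthonormal eigenbases $\{\phi^\bn_k\}_{k\le K(n)}$ of $\Delta_n$ and $\{\phi^\cn_k\}_{k\le K(n)}$ of $\Delta$ that enter the definitions \eqref{def-spectral-discretization}--\eqref{def-adjoint-spectral-extension} of $\mS_n$ and $\mS^*_n$, to rewrite the discrete inner products produced by the graph convolution $\ast_n$ as continuum inner products against the extended graph eigenfunctions $\mP^*_n\phi^\bn_k$ by means of the isometry property of $\mP^*_n$ from \Cref{adjoint-l2-innerproduct-preserving}, and then to compare the two resulting finite sums over $k\le K(n)$, controlling the replacement of $\mP^*_n\phi^\bn_k$ by $\phi^\cn_k$ through the bound $\norm{\mP^*_n\phi^\bn_k-\phi^\cn_k}_{L^2}\le\delta^\phi_n$ built into \eqref{uniform-error-stepfct-extension}.

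Concretely, I would set $c_k:=\langle b,\phi^\cn_k\rangle_{L^2(\M,\mu)}$, $e_k:=\mP^*_n\phi^\bn_k$, $f_k:=\phi^\cn_k$ and $w:=\mP^*_n\mR_n v$. Since $\mS_n b$ has vanishing graph Fourier coefficients beyond index $K(n)$, the spectral definition of $\ast_n$ gives $\mS_n b\ast_n\mR_n v=\sum_{k\le K(n)}c_k\langle\mR_n v,\phi^\bn_k\rangle_{L^2(\M_n,\mu_n)}\phi^\bn_k$, and applying $\mP^*_n$ and using $\langle\mR_n v,\phi^\bn_k\rangle_{L^2(\M_n,\mu_n)}=\langle w,e_k\rangle_{L^2(\M,\mu)}$ (\Cref{adjoint-l2-innerproduct-preserving}) turns the first expression into $\sum_{k\le K(n)}c_k\langle w,e_k\rangle e_k$. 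For the second, \Cref{subspace-projection-h-alpha} gives $\mS^*_n\mS_n b=\sum_{k\le K(n)}c_kf_k\in\bigoplus_{k\le K(n)}E(\lambda_k)$, so expanding the convolution $\ast$ in the eigenbasis $\{\phi^\cn_k\}$ (as in the proof of \Cref{convergence-extended-discretization-sn-pn-convolution}) yields $(\mS^*_n\mS_n b)\ast(\mP^*_n\mR_n v)=\sum_{k\le K(n)}c_k\langle w,f_k\rangle f_k$. Inserting the mixed term $\langle w,e_k\rangle f_k$ then splits the difference as
\[
\mP^*_n(\mS_n b\ast_n\mR_n v)-(\mS^*_n\mS_n b)\ast(\mP^*_n\mR_n v)=\sum_{k=1}^{K(n)}c_k\langle w,e_k\rangle(e_k-f_k)+\sum_{k=1}^{K(n)}c_k\langle w,e_k-f_k\rangle f_k=:D_1+D_2.
\]

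For $D_2$ I would use orthonormality of $\{f_k\}$ in $L^2(\M,\mu)$, $\norm{D_2}^2_{L^2}=\sum_k c_k^2\langle w,e_k-f_k\rangle^2\le(\delta^\phi_n)^2\norm{w}^2_{L^2}\sum_kc_k^2$, together with Bessel's inequality $\sum_kc_k^2\le\norm{b}^2_{L^2}$ for the orthonormal system $\{\phi^\cn_k\}$. For $D_1$ the crucial point is to avoid a term-by-term triangle estimate, which would cost a factor $\sqrt{K(n)}$ that the available spectral-gap decay rate \eqref{convergence-relative-error-bound-delta-max-spectral-gap-kn} does not absorb; instead I would pull out $\norm{e_k-f_k}\le\delta^\phi_n$ and apply Cauchy--Schwarz to the index sum, $\norm{D_1}_{L^2}\le\delta^\phi_n\sum_k|c_k|\,|\langle w,e_k\rangle|\le\delta^\phi_n\big(\sum_kc_k^2\big)^{1/2}\big(\sum_k\langle w,e_k\rangle^2\big)^{1/2}$, and then bound $\sum_kc_k^2\le\norm{b}^2_{L^2}$ and, by Bessel's inequality for the family $\{e_k\}$ (which is orthonormal in $L^2(\M,\mu)$ precisely because $\mP^*_n$ preserves inner products, \Cref{adjoint-l2-innerproduct-preserving}), $\sum_k\langle w,e_k\rangle^2\le\norm{w}^2_{L^2}$. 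Adding the two estimates gives $\norm{D_1+D_2}_{L^2}\le2\delta^\phi_n\norm{b}_{L^2}\norm{w}_{L^2}$, and since $\norm{w}_{L^2(\M,\mu)}=\norm{\mR_n v}_{L^2(\M_n,\mu_n)}\le C_\mR\norm{v}_{L^2}$ by \Cref{adjoint-l2-norm-preserving} and \Cref{consistent-discretization-operators}, this is the asserted inequality. Finally $\delta^\phi_n\to0$, since $h_n\sqrt{\lambda_{K(n)}}\to0$ by \eqref{convergence-h_n-eigenvalue} and $\Gamma_{K(n),\mu}^{-1}\delta(\eps_n,h_n,\lambda_{K(n)})\to0$ by \eqref{convergence-relative-error-bound-delta-max-spectral-gap}.

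I expect the main obstacle to be exactly this treatment of $D_1$: one must recognize that the two orthonormal families $\{e_k\}$ and $\{f_k\}$ have to be exploited in different ways --- Bessel against $\{f_k\}$ alone for $D_2$, and Cauchy--Schwarz in the summation index combined with Bessel against $\{e_k\}$ for $D_1$ --- because the naive per-mode bound loses the factor $\sqrt{K(n)}$ that is not controlled by the available rate, whereas the structure of the sums (coefficients $c_k$ from one orthonormal system, inner products against another) does yield the clean bound involving only $\norm{b}_{L^2}$. A secondary, routine matter is the expansion of $\ast$ in the eigenbasis adapted to $\{\phi^\cn_k\}$, which is admissible because $\mS^*_n\mS_n b$ is supported on $\bigoplus_{k\le K(n)}E(\lambda_k)$.
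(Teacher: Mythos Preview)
Your proposal is correct and follows essentially the same route as the paper's own proof: the same two-term split via the mixed summand $c_k\langle w,e_k\rangle f_k$, the same orthonormality argument for $D_2$, and the same Cauchy--Schwarz-in-the-index combined with Bessel for $D_1$ (the paper phrases the latter as Bessel for $\{\phi^\bn_k\}$ in $L^2(\M_n,\mu_n)$ applied to $\mR_n v$, which is the same thing via \Cref{adjoint-l2-innerproduct-preserving}). Your explicit remark that a naive term-by-term bound on $D_1$ would cost an uncontrolled $\sqrt{K(n)}$ is a useful clarification that the paper leaves implicit.
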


\begin{proof}
    Again we expand in an arbitrary $L^2(\M,\mu)$ orthonormal eigenbasis $\{\varphi_k\}_k$, indexed corresponding to the eigenvalues. Using \Cref{adjoint-l2-innerproduct-preserving}, the extended discrete convolution expression reads
    \begin{align*}
    \mP_n^*(\mS_n b *_n \mR_n v) &= \sum\limits_{k=1}^{K(n)}\langle \mS_n b, \phi^\bn_k \rangle_{L^2(\M_n, \mu_n)} \langle \phi^\bn_k, \mR_n v \rangle_{L^2(\M_n, \mu_n)}\mP^*_n \phi^\bn_k \\
    &= \sum\limits_{k=1}^{K(n)}\langle b, \phi_k^{\cn} \rangle_{L^2(\M, \mu)} \langle \mP_n^* \mR_n v, \mP_n^* \phi^\bn_k \rangle_{L^2(\M, \mu)}\mP^*_n \phi^\bn_k,
    \end{align*}
    while using  \eqref{eigenspace-projection-basis-free}, the continuum convolution expression reads
    \begin{align*}
    (\mS^*_n \mS_n b)*(\mP^*_n\mR_n v) &= \sum\limits_{k=1}^\infty\langle\mS^*_n\mS_n b, \varphi_k \rangle_{L^2(\M, \mu)} \langle \mP^*_n\mR_n v, \varphi_k \rangle_{L^2(\M, \mu)}\varphi_k \\
    &= \sum\limits_{k=1}^{K(n)}\langle b, \varphi_k \rangle_{L^2(\M, \mu)} \langle \mP^*_n\mR_n v, \varphi_k \rangle_{L^2(\M, \mu)}\varphi_k\\
    &= \sum\limits_{k=1}^{K(n)}\langle b, \phi_k^{\cn} \rangle_{L^2(\M, \mu)} \langle \mP^*_n\mR_n v, \phi_k^{\cn} \rangle_{L^2(\M, \mu)}\phi_k^{\cn}.
    \end{align*}
    Therefore
    \begin{align*}&\norm{\mP_n^*(\mS_n b *_n \mR_n v) - (\mS^*_n \mS_n b)*(\mP^*_n\mR_n v)}_{L^2} \\
    &\hspace{24pt}= \norm{\sum\limits_{k=1}^{K(n)} \langle b, \phi_k^{\cn}\rangle_{L^2}\left(\langle \mP_n^* \mR_n v, \mP_n^* \phi^\bn_k \rangle_{L^2}\mP^*_n \phi^\bn_k- \langle \mP^*_n\mR_n v, \phi_k^{\cn} \rangle_{L^2}\,\phi_k^{\cn} \right)}_{L^2} \\
    &\hspace{24pt}= \norm{\sum\limits_{k=1}^{K(n)} \langle b, \phi_k^{\cn}\rangle_{L^2}\left(\langle \mP_n^* \mR_n v, \mP_n^* \phi^\bn_k -\phi_k^{\cn} \rangle_{L^2}\,\phi_k^{\cn}+ \langle \mP^*_n\mR_n v, \mP_n^*\phi^\bn_k \rangle_{L^2}(\mP_n^*\phi^\bn_k -\phi_k^{\cn}) \right)}_{L^2}\\
    &\hspace{24pt}\leq \norm{\sum\limits_{k=1}^{K(n)} \langle b, \phi_k^{\cn}\rangle_{L^2}\langle \mP_n^* \mR_n v, \mP_n^* \phi^\bn_k -\phi_k^{\cn} \rangle_{L^2(\M, \mu)} \phi_k^{\cn}}_{L^2} \\
    &\hspace{48pt}+ \norm{\sum\limits_{k=1}^{K(n)} \langle b, \phi_k^{\cn}\rangle_{L^2} \langle \mP^*_n\mR_n v, \mP_n^*\phi^\bn_k \rangle_{L^2}(\mP_n^*\phi^\bn_k -\phi_k^{\cn})}_{L^2}.
    \end{align*}

    Now for the first term using \Cref{uniform-bounds-spectra}, \Cref{spectral-h-alpha-contraction} and \Cref{consistent-discretization-operators} we get that 
    \begin{align*}
        \norm{\sum\limits_{k=1}^{K(n)} \langle b, \phi_k^{\cn}\rangle_{L^2}\langle \mP_n^* \mR_n v, \mP_n^* \phi^\bn_k -\phi_k^{\cn} \rangle_{L^2}\,\phi_k}_{L^2}^2 
        &=\sum\limits_{k=1}^{K(n)} \langle b, \phi_k\rangle_{L^2}^2\langle \mP_n^* \mR_n v, \mP_n^* \phi^\bn_k -\phi_k^{\cn} \rangle_{L^2}^2 \\
        &\leq \sum\limits_{k=1}^{K(n)} \langle b, \phi_k^{\cn}\rangle_{L^2}^2\norm{\mP_n^* \mR_n v}_{L^2}^2 \norm{\mP_n^* \phi^\bn_k -\phi_k^{\cn}}_{L^2}^2\\
        &\leq \norm{\mP_n^*\mR_n v}_{L^2}^2(\delta^\phi_n)^2\norm{\mS_n^*\mS_n b}_{L^2}^2\\ 
        &\leq C_{\mR} \norm{v}_{L^2}^2(\delta^\phi_n)^2\norm{b}^2_{L^2}\xrightarrow{n\to\infty}0,
    \end{align*}
    while for the second term we get using the same results that
    \begin{align*}
        &\norm{\sum\limits_{k=1}^{K(n)} \langle b, \phi_k^{\cn}\rangle_{L^2} \langle \mP^*_n\mR_n v, \mP_n^*\phi^\bn_k \rangle_{L^2}(\mP_n^*\phi^\bn_k -\phi_k^{\cn})}_{L^2} \\
        &\hspace{50pt}\leq \sum\limits_{k=1}^{K(n)} |\langle b, \phi_k^{\cn}\rangle_{L^2} |\,\big|\langle \mP^*_n\mR_n v, \mP_n^*\phi^\bn_k \rangle_{L^2}\big|\norm{\mP_n^*\phi^\bn_k -\phi_k^{\cn}}_{L^2}
        \\&\hspace{50pt}\leq \delta^\phi_n \sum\limits_{k=1}^{K(n)} |\langle b, \phi_k^{\cn}\rangle_{L^2}|\,\big|\langle \mR_n v, \phi^\bn_k \rangle_{L^2(\M_n, \mu_n)}\big|
        \\&\hspace{50pt}\leq \delta^\phi_n \norm{\mS_n^*\mS_n b}_{L^2(\M, \mu)}\norm{\mR_n v}_{L^2(\M_n, \mu_n)}
        \\&\hspace{50pt}\leq C_{\mR}\delta^\phi_n \norm{b}_{L^2}\norm{v}_{L^2}\xrightarrow{n\to\infty}0.
    \end{align*}

    So, with all the above, we obtain
     \[\norm{\mP_n^*(\mS_n b *_n \mR_n v) - (\mS^*_n \mS_n b)*(\mP^*_n\mR_n v)}_{L^2}\leq 2 C_{\mR}\delta^\phi_n \norm{b}_{L^2}\norm{v}_{L^2}\xrightarrow{n\to\infty}0.\qedhere\]
\end{proof}

\subsection{Interplay with neural response maps}\label{interplay-projection-neural-response}
These discretization maps can be naturally used in constructing discretization maps and extension maps between the (ambient) continuum parameter space $\Theta$ and the (ambient) discrete parameter spaces $\Theta_n$, $n\in \N$, respectively vice versa:
\begin{equation}\label{def-parameter-discretization}
    \mQ_{n, \alpha}: (L^2(\M, \mu))^3\to (L^2(\M_n, \mu_n))^3, \quad (a,b,c)\mapsto (\mS_{n, \alpha} a, \mS_n b, \mS_{n,\alpha} c),
\end{equation} and 
\begin{equation}\label{def-parameter-extension}
    \mQ^*_{n, \alpha}: (L^2(\M_n, \mu_n))^3\to (L^2(\M, \mu))^3, \quad (a,b,c)\mapsto (\mS^*_{n, \alpha} a, \mS^*_n b, \mS^*_{n,\alpha} c).
\end{equation}

\begin{rem}
    So we observe that the discretization and extension maps just map the continuum ambient space to the subspace spanned by the first $K(n)$ eigenvectors of the $n$-th graph Laplacian of the respective discrete space and vice versa by  \Cref{subspace-projection-h-alpha} (which we denote by adding the subscript ${K(n)}$ to a space), i.e. we have that:
    \begin{equation}\label{q-nalpha-onto}
    \mQ_{n,\alpha}( H^\alpha(\M)\times L^2(\M)\times H^\alpha(\M))=H^\alpha_{K(n)}(\M_n)\times L^2_{K(n)}(\M_n)\times H^\alpha_{K(n)}(\M_n),\end{equation}
    \begin{equation}\label{q-nalpha-*-subspace}
        \mQ_{n,\alpha}^*(H^\alpha(\M_n)\times L^2(\M_n)\times H^\alpha(\M_n)) =  H^\alpha_{K(n)}(\M)\times L^2_{K(n)}(\M)\times H^\alpha_{K(n)}(\M),
    \end{equation}
    and putting them together yields the relations
    \begin{equation}\label{q-n-alpha-onto-subspace-restriction-discretize-extend}
    \mQ_{n,\alpha}^*\mQ_{n,\alpha}( H^\alpha(\M)\times L^2(\M)\times H^\alpha(\M))=H^\alpha_{K(n)}(\M)\times L^2_{K(n)}(\M)\times H^\alpha_{K(n)}(\M),\end{equation} 
    \begin{equation}\label{q-n-alpha-onto-subspace-restriction-extend-discretize}
    \mQ_{n,\alpha}( H^\alpha_{K(n)}(\M)\times L^2_{K(n)}(\M)\times H^\alpha_{K(n)}(\M))=H^\alpha_{K(n)}(\M_n)\times L^2_{K(n)}(\M_n)\times H^\alpha_{K(n)}(\M_n),\end{equation} 
    More precisely, taking norm preservation into account we get \begin{equation}\label{q-nalpha-ext-discr-onto-subspace-restriction}
    \mQ_{n,\alpha}^*\mQ_{n,\alpha}(\Theta)= \left(B_{H^\alpha}\cap H^\alpha_{K(n)}(\M)\right)\times \left(B_{{L^2}}\cap L^2_{K(n)}(\M)\right)\times \left( B_{H^\alpha}\cap H^\alpha_{K(n)}(\M)\right),\end{equation} 
    and\vskip -2em
    \begin{equation}\label{q-nalpha-discr-ext-onto-subspace-restriction}
    \mQ_{n,\alpha}\mQ_{n,\alpha}^*(\Theta_n)= \Theta_n.\end{equation} 
\end{rem}
So we have that $\mQ_{n,\alpha}^*$ and $\mQ_{n,\alpha}$ are homeomorphisms between the $K(n)$-dimensional subspaces of the parameter spaces equipped with their respective topology, obtained by the spectral cutoff of the respective ambient parameter spaces $H^\alpha\times L^2\times H^\alpha$:
\begin{lemma}\label{q-nalpha-homeomorphism}
    For the parameter discretization and extension maps it holds that $\mQ_{n, \alpha}(\Theta)=\Theta_n$ and \[\mQ_{n, \alpha}^*: \Theta_n \to (\mQ_{n, \alpha}^*(\Theta_n), w)\] is an homeomorphism with the inverse being $\mQ_{n, \alpha}: (\mQ_{n, \alpha}^*(\Theta_n), w) \to \Theta_n $.
\end{lemma}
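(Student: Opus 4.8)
The plan is to reduce everything to linear algebra in finite dimensions, using only norm preservation and the projection identities of \Cref{properties-spectral-h-alpha-discretization}. First I would establish the set identity $\mQ_{n,\alpha}(\Theta)=\Theta_n$ componentwise; then I would show that $\mQ^*_{n,\alpha}$ and $\mQ_{n,\alpha}$ restrict to mutually inverse linear bijections between $\Theta_n$ and $\mQ^*_{n,\alpha}(\Theta_n)$; and finally I would invoke that these two sets sit inside finite-dimensional subspaces, on which the weak topology coincides with the norm topology, so that a linear bijection between them is automatically a homeomorphism.

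For $\mQ_{n,\alpha}(\Theta)=\Theta_n$ I would argue componentwise. For ``$\subseteq$'': given $(a,b,c)\in\Theta$, each of $\mS_{n,\alpha}a,\mS_n b,\mS_{n,\alpha}c$ lies in $V_n$ by construction, and since $\mS_{n,\alpha}v$ depends on $v$ only through its $L^2$-orthogonal projection $\mathbb{P}\big(\bigoplus_{k\le K(n)}E(\lambda_k),v\big)$ onto the first $K(n)$ Laplace--Beltrami eigenspaces, \Cref{spectral-h-alpha-preservation-norm} applied to that projection (whose $H^\alpha$-norm is at most $\norm{a}_{H^\alpha}\le 1$, since distinct eigenspaces are also $H^\alpha$-orthogonal) gives $\norm{\mS_{n,\alpha}a}_{H^\alpha(\M_n,\mu_n)}\le 1$, and likewise $\norm{\mS_n b}_{L^2(\M_n,\mu_n)}\le 1$ and $\norm{\mS_{n,\alpha}c}_{H^\alpha(\M_n,\mu_n)}\le 1$, so $\mQ_{n,\alpha}(a,b,c)\in\Theta_n$. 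For ``$\supseteq$'': given $(a',b',c')\in\Theta_n$, set $(a,b,c):=\mQ^*_{n,\alpha}(a',b',c')$; by \Cref{spectral-h-alpha-preservation-norm} applied to the extension maps (valid since $a',b',c'\in V_n$) one has $(a,b,c)\in\Theta$, and $\mQ_{n,\alpha}(a,b,c)=(a',b',c')$ because $\mS_{n,\alpha}\mS^*_{n,\alpha}$ and $\mS_n\mS^*_n$ act as the identity on $V_n$ --- the same computation as in the proof of \Cref{subspace-projection-h-alpha}, with the discrete and continuum eigenbases exchanged. In particular this records $\mQ_{n,\alpha}\mQ^*_{n,\alpha}=\mathrm{Id}$ on $\Theta_n$.

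For the homeomorphism claim, \Cref{subspace-projection-h-alpha} gives $\mS^*_{n,\alpha}\mS_{n,\alpha}=\mathbb{P}\big(\bigoplus_{k\le K(n)}E(\lambda_k),\cdot\big)$, which is the identity on $H^\alpha_{K(n)}(\M)=\bigoplus_{k\le K(n)}E(\lambda_k)$; since $\mQ^*_{n,\alpha}(\Theta_n)\subseteq H^\alpha_{K(n)}(\M)\times L^2_{K(n)}(\M)\times H^\alpha_{K(n)}(\M)$ by \eqref{q-nalpha-*-subspace}, this yields $\mQ^*_{n,\alpha}\mQ_{n,\alpha}=\mathrm{Id}$ on $\mQ^*_{n,\alpha}(\Theta_n)$. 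Combined with $\mQ_{n,\alpha}\mQ^*_{n,\alpha}=\mathrm{Id}$ on $\Theta_n$ from the previous step, $\mQ^*_{n,\alpha}\colon\Theta_n\to\mQ^*_{n,\alpha}(\Theta_n)$ is a linear bijection with inverse $\mQ_{n,\alpha}$. Now $\Theta_n$ is contained in the finite-dimensional space $V_n^3$ and $\mQ^*_{n,\alpha}(\Theta_n)$ in the finite-dimensional subspace $H^\alpha_{K(n)}(\M)\times L^2_{K(n)}(\M)\times H^\alpha_{K(n)}(\M)$ of $\big(H^\alpha(\M,\mu)\big)^2\times L^2(\M,\mu)$; on finite-dimensional subspaces the relative weak topology coincides with the norm topology, and any linear bijection between finite-dimensional normed spaces is a homeomorphism, so both $\mQ^*_{n,\alpha}$ and its inverse $\mQ_{n,\alpha}$ are continuous for the indicated topologies.

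I do not expect a genuine obstacle here: once the spectral cutoff is imposed everything happens in finite dimensions and all maps involved are linear and norm-bounded. The only point requiring care is the bookkeeping of which finite-dimensional subspace each object inhabits, so that the projection identities of \Cref{properties-spectral-h-alpha-discretization} collapse to the identity rather than to a nontrivial projection, together with the standard fact that the weak topology of a Banach space restricts to the norm topology on any finite-dimensional subspace.
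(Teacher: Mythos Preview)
Your proof is correct and follows essentially the same approach as the paper: both establish the set identity and bijectivity via the projection and norm-preservation identities of \Cref{properties-spectral-h-alpha-discretization}, just with you spelling out the componentwise details more explicitly. The only noteworthy difference is in the continuity step: the paper argues that bounded linear operators are automatically weak--weak continuous (which applies even before restricting to the finite-dimensional subspaces), whereas you use the more elementary observation that both $\Theta_n$ and $\mQ_{n,\alpha}^*(\Theta_n)$ sit inside finite-dimensional subspaces on which the weak and norm topologies coincide; both routes are valid here, and yours is arguably the more direct one given that the spectral cutoff has already forced everything into finite dimensions.
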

\begin{proof}
    The first claim follows from the observation \eqref{q-n-alpha-onto-subspace-restriction-extend-discretize} that the subspaces spanned by the first $K(n)$ eigenvectors or eigenfunctions agree under the respective projection due to \Cref{subspace-projection-h-alpha}. 
    Regarding the second claim, first note that $\mQ_{n, \alpha}$ and its adjoint $\mQ_{n, \alpha}^*$ are linear bounded operators.
    Furthermore using \eqref{q-nalpha-discr-ext-onto-subspace-restriction}
    we see that $\mQ_{n, \alpha}^*: \Theta_n \to (\mQ_{n, \alpha}^*(\Theta_n), w)$ is bijective with inverse $\mQ_{n, \alpha}: (\mQ_{n, \alpha}^*(\Theta_n), w) \to \Theta_n $.
    Finally $\mQ_{n, \alpha}^*$ and $\mQ_{n, \alpha}$ are continuous w.r.t the appropriate topologies, because as bounded linear operators they are norm-norm-continuous, but also weak-weak-continuous, which yields the second claim. 
\end{proof}
Note that the discretization and extension maps restricted to $\Theta_n$ and $\mQ_{n,\alpha}\Theta_n$ are $L^2$-product norm preserving due to \Cref{spectral-h-alpha-preservation-norm}.
\begin{rem}
    An immediate consequence of \Cref{q-nalpha-homeomorphism} is that \begin{equation}\label{q-nalpha-continuous-fcts-isomorphism}
        C(\mQ_{n, \alpha}^*\mQ_{n, \alpha}\Theta, w)= C(\mQ_{n, \alpha}\Theta_n)\cong C(\Theta_n)
    \end{equation} and \begin{equation}\label{q-nalpha-radon-measures-isomorphism}
        \M(\mQ_{n, \alpha}^*\mQ_{n, \alpha}\Theta) = \M(\mQ_{n, \alpha}^*\Theta_n) =(\mQ_{n, \alpha}^*)_\#\M(\Theta_n)\cong \M(\Theta_n).
    \end{equation}
\end{rem}

    \begin{rem}\label{estimate-neuron-different-signal-and-parameter-inputs}
    For $u, u'\in L^2(\M, \mu)$ and $\theta, \theta'\in \big(L^2(\M, \mu)\big)^3$, where we write $\theta=(a,b,c)$ and $\theta' = (a', b',c')$, we get using Cauchy-Schwarz, \eqref{lipschitz-property-sigma} and \eqref{linear-growth-sigma}
    \begin{align*}
        |\psi(u, \theta) - \psi(u', \theta')| &= \big|\langle a,\sigma( b * u +  c)\rangle - \langle  a',\sigma( b' *  u' +  c')\rangle\big| \\
& \leq \norm{ (a-a')}_{L^2}C_\sigma\big(1+\norm{ b*  u}_{L^2} + \norm{c}_{L^2}\big) \\& \qquad + \norm{ a'}_{L^2}L_\sigma\big(\norm{ b*u - b'*u'}_{L^2} + \norm{ c-c'}_{L^2}\big).
    \end{align*}
    \end{rem}

\begin{lemma}\label{convergence-signal-param-spatial-h-alpha-discretization}
    Let $\mR_n: L^2(\M, \mu)\to L^2(\M_n, \mu_n)$ satisfy \cref{{consistent-discretization-operators}}, $u\in L^2(\M, \mu)$ and $\theta\in \Theta$. Then it holds that 
    \[|\psi(\mP^*_n\mR_n u, \mQ^*_{n,\alpha}\mQ_{n,\alpha}\theta) - \psi(u, \theta) |\xrightarrow{n\to\infty}0.\]
\end{lemma}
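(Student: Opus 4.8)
The plan is to reduce everything to the elementary response-map estimate recorded in \Cref{estimate-neuron-different-signal-and-parameter-inputs} together with the discretization convergence lemmas already established. Writing $\theta=(a,b,c)$, one first observes from the definitions \eqref{def-parameter-discretization}--\eqref{def-parameter-extension} that $\mQ^*_{n,\alpha}\mQ_{n,\alpha}\theta = (\mS^*_{n,\alpha}\mS_{n,\alpha}a,\, \mS^*_n\mS_n b,\, \mS^*_{n,\alpha}\mS_{n,\alpha}c)$, and then applies \Cref{estimate-neuron-different-signal-and-parameter-inputs} with the second pair of inputs taken to be $u'=\mP^*_n\mR_n u$ and $\theta'=\mQ^*_{n,\alpha}\mQ_{n,\alpha}\theta$. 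This gives
\[
\begin{aligned}
\big|\psi(u,\theta) - \psi(\mP^*_n\mR_n u, \mQ^*_{n,\alpha}\mQ_{n,\alpha}\theta)\big|
&\le \norm{a - \mS^*_{n,\alpha}\mS_{n,\alpha}a}_{L^2}\, C_\sigma\big(1 + \norm{b*u}_{L^2} + \norm{c}_{L^2}\big) \\
&\quad + \norm{\mS^*_{n,\alpha}\mS_{n,\alpha}a}_{L^2}\, L_\sigma\big(\norm{b*u - (\mS^*_n\mS_n b)*(\mP^*_n\mR_n u)}_{L^2} + \norm{c - \mS^*_{n,\alpha}\mS_{n,\alpha}c}_{L^2}\big),
\end{aligned}
\]
so it remains only to show that in each of the two terms the first factor stays bounded in $n$ while the second vanishes as $n\to\infty$.

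For the first term, the prefactor $C_\sigma(1 + \norm{b*u}_{L^2} + \norm{c}_{L^2})$ does not depend on $n$ and is finite, since $\norm{b*u}_{L^2}\le\norm{b}_{L^2}\norm{u}_{L^2}\le\norm{u}_{L^2}$ and $\norm{c}_{L^2}\le 1$ because $\theta\in\Theta$; meanwhile $\norm{a - \mS^*_{n,\alpha}\mS_{n,\alpha}a}_{L^2}\to 0$ is exactly \Cref{convergence-extended-discretization-sn}. For the second term, \Cref{spectral-h-alpha-contraction} gives $\norm{\mS^*_{n,\alpha}\mS_{n,\alpha}a}_{L^2}\le\norm{a}_{L^2}\le 1$, the summand $\norm{c - \mS^*_{n,\alpha}\mS_{n,\alpha}c}_{L^2}\to 0$ again by \Cref{convergence-extended-discretization-sn}, and the key summand $\norm{b*u - (\mS^*_n\mS_n b)*(\mP^*_n\mR_n u)}_{L^2}\to 0$ is precisely \Cref{convergence-extended-discretization-sn-pn-convolution} applied to the pair $b,u$; here it is used that the middle component of $\mQ^*_{n,\alpha}\mQ_{n,\alpha}$ is $\mS^*_n\mS_n b$ with the unweighted ($\alpha=0$) spectral projection, so the expression matches that lemma verbatim. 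Combining these observations, both terms tend to $0$, which proves the claim.

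I do not expect a genuine obstacle here: all the substantive analysis — commuting the convolution past the spatial and spectral discretizations, and exploiting the $TL^2$-consistency of $\mR_n$ — has already been absorbed into \Cref{convergence-extended-discretization-sn-pn-convolution}, and the present statement only needs to glue these facts together through the response-map estimate. The one point requiring slight care is the bookkeeping of which projection ($\mS_{n,\alpha}$ versus $\mS_n$) sits in which slot of $\mQ_{n,\alpha}$, and the remark that the $H^\alpha$-reweighting on the $a$- and $c$-slots is harmless because only $L^2$ convergence of $a-\mS^*_{n,\alpha}\mS_{n,\alpha}a$ and $c-\mS^*_{n,\alpha}\mS_{n,\alpha}c$ is needed, which \Cref{convergence-extended-discretization-sn} supplies directly.
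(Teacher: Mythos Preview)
Your proof is correct and follows essentially the same approach as the paper: both apply \Cref{estimate-neuron-different-signal-and-parameter-inputs} and then invoke \Cref{convergence-extended-discretization-sn} and \Cref{convergence-extended-discretization-sn-pn-convolution} to handle the resulting terms. The only cosmetic difference is that you bound $\norm{\mS^*_{n,\alpha}\mS_{n,\alpha}a}_{L^2}$ directly via the contraction property \Cref{spectral-h-alpha-contraction}, whereas the paper splits it as $\norm{\mS^*_{n,\alpha}\mS_{n,\alpha}a - a}_{L^2} + \norm{a}_{L^2}$ by the triangle inequality, yielding three terms instead of two; your version is slightly cleaner.
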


\begin{proof}
    \Cref{estimate-neuron-different-signal-and-parameter-inputs} yields 
    \begin{align*}|\psi(u, \theta) - \psi(\mP^*_n\mR_n u, \mQ^*_{n,\alpha}\mQ_{n,\alpha}\theta) 
    &\leq 
    \norm{ (a-\mS^*_{n, \alpha}\mS_{n, \alpha} a)}_{L^2}C_\sigma\big(1+\norm{ b*  u}_{L^2} + \norm{c}_{L^2}\big) \\& \quad + \norm{ \mS^*_{n, \alpha}\mS_{n, \alpha} a -a }_{L^2}L_\sigma\big(\norm{ b*u - \mS^*_n\mS_n b *\mP^*_n\mR_n u}_{L^2}  \\&\hspace{150pt}+\norm{ c-\mS^*_{n,\alpha}\mS_{n,\alpha} c}_{L^2}\big)\\
    & \quad + \norm{ a }_{L^2}L_\sigma\big(\norm{ b*u - \mS^*_n\mS_n b *\mP^*_n\mR_n u}_{L^2} +\norm{ c-\mS^*_{n,\alpha}\mS_{n,\alpha} c}_{L^2}\big),
    \end{align*}    
    for which \Cref{convergence-extended-discretization-sn} and \Cref{convergence-extended-discretization-sn-pn-convolution} yield that the right-hand-side above goes to $0$ as $n\to\infty$.
\end{proof}

\begin{lemma}\label{convergence-neuron-discretization-pn-h-alpha}
    Let $\mR_n: L^2(\M, \mu)\to L^2(\M_n, \mu_n)$ satisfy \cref{{consistent-discretization-operators}}, $u\in L^2(\M, \mu)$ and $\theta\in \Theta$. Then it holds that 
    \[|\psi_n(\mR_n u, \mQ_{n,\alpha}\theta) - \psi(\mP^*_n\mR_n u, \mQ^*_{n,\alpha}\mQ_{n,\alpha}\theta)|\xrightarrow{n\to\infty}0.\]
\end{lemma}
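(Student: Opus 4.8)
The plan is to rewrite both neural responses as $L^2(\M,\mu)$-pairings of the form $\langle A_i,\sigma(B_i)\rangle_{L^2}$ and then control their difference using only the convergence lemmas for the discretization and extension maps already proved in \Cref{prop-discrete-continuum-proj}. Write $\theta=(a,b,c)$, so $a,c\in B_{H^\alpha}$ and $b\in B_{L^2}$, and recall $\mQ_{n,\alpha}\theta=(\mS_{n,\alpha}a,\mS_n b,\mS_{n,\alpha}c)$ and $\mQ_{n,\alpha}^*\mQ_{n,\alpha}\theta=(\mS_{n,\alpha}^*\mS_{n,\alpha}a,\mS_n^*\mS_n b,\mS_{n,\alpha}^*\mS_{n,\alpha}c)$. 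For the discrete term, applying first \Cref{adjoint-l2-innerproduct-preserving} (inner-product preservation of $\mP_n^*$) and then \Cref{commute-adjoint-activation} (commutation of $\mP_n^*$ with $\sigma$) together with linearity of $\mP_n^*$ yields
\[
\psi_n(\mR_n u,\mQ_{n,\alpha}\theta)=\big\langle \mP_n^*\mS_{n,\alpha}a,\ \sigma\big(\mP_n^*(\mS_n b\ast_n\mR_n u)+\mP_n^*\mS_{n,\alpha}c\big)\big\rangle_{L^2(\M,\mu)},
\]
whereas the continuum term is, by definition,
\[
\psi(\mP_n^*\mR_n u,\mQ_{n,\alpha}^*\mQ_{n,\alpha}\theta)=\big\langle \mS_{n,\alpha}^*\mS_{n,\alpha}a,\ \sigma\big((\mS_n^*\mS_n b)\ast \mP_n^*\mR_n u+\mS_{n,\alpha}^*\mS_{n,\alpha}c\big)\big\rangle_{L^2(\M,\mu)}.
\]
So with $A_1:=\mP_n^*\mS_{n,\alpha}a$, $A_2:=\mS_{n,\alpha}^*\mS_{n,\alpha}a$, $B_1:=\mP_n^*(\mS_n b\ast_n\mR_n u)+\mP_n^*\mS_{n,\alpha}c$ and $B_2:=(\mS_n^*\mS_n b)\ast \mP_n^*\mR_n u+\mS_{n,\alpha}^*\mS_{n,\alpha}c$ it suffices to estimate $|\langle A_1,\sigma(B_1)\rangle-\langle A_2,\sigma(B_2)\rangle|$.

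Next I would split this exactly as in \Cref{estimate-neuron-different-signal-and-parameter-inputs}, namely
\[
|\langle A_1,\sigma(B_1)\rangle-\langle A_2,\sigma(B_2)\rangle|\le \norm{A_1-A_2}_{L^2}\,C_\sigma(\norm{B_1}_{L^2}+1)+\norm{A_2}_{L^2}\,L_\sigma\norm{B_1-B_2}_{L^2},
\]
using \eqref{linear-growth-sigma} for $\norm{\sigma(B_1)}_{L^2}$ and \eqref{lipschitz-property-sigma} for $\norm{\sigma(B_1)-\sigma(B_2)}_{L^2}$. Now every factor is supplied by an earlier result: $\norm{A_1-A_2}_{L^2}=\norm{\mP_n^*\mS_{n,\alpha}a-\mS_{n,\alpha}^*\mS_{n,\alpha}a}_{L^2}\to 0$ by \Cref{convergence_spatial_h_alpha_spectral_discretization} with $v=a\in H^\alpha$; $\norm{A_2}_{L^2}\le\norm{a}_{L^2}\le 1$ by \Cref{spectral-h-alpha-contraction}; and
\[
\norm{B_1-B_2}_{L^2}\le\norm{\mP_n^*(\mS_n b\ast_n\mR_n u)-(\mS_n^*\mS_n b)\ast \mP_n^*\mR_n u}_{L^2}+\norm{\mP_n^*\mS_{n,\alpha}c-\mS_{n,\alpha}^*\mS_{n,\alpha}c}_{L^2}\to 0
\]
by \Cref{convergence-discrete-continuum-convolution-operation} (with this $b$ and $v=u$) and again \Cref{convergence_spatial_h_alpha_spectral_discretization} (with $v=c$). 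It only remains to see that $\norm{B_1}_{L^2}$ is bounded uniformly in $n$; this follows from $\norm{B_1}_{L^2}\le\norm{B_2}_{L^2}+\norm{B_1-B_2}_{L^2}$ together with the bound $\norm{B_2}_{L^2}\le\norm{\mS_n^*\mS_n b}_{L^2}\norm{\mP_n^*\mR_n u}_{L^2}+\norm{\mS_{n,\alpha}^*\mS_{n,\alpha}c}_{L^2}\le C_\mR\norm{b}_{L^2}\norm{u}_{L^2}+1$, where I use the elementary spectral bound $\norm{x\ast y}_{L^2}\le\norm{x}_{L^2}\norm{y}_{L^2}$, the identity $\norm{\mP_n^*\mR_n u}_{L^2}=\norm{\mR_n u}_{L^2(\M_n,\mu_n)}\le C_\mR\norm{u}_{L^2}$ (\Cref{adjoint-l2-norm-preserving} and \Cref{consistent-discretization-operators}), and \Cref{spectral-h-alpha-contraction}. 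Hence the right-hand side of the split tends to $0$ as $n\to\infty$, which proves the claim.

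The point requiring the most care is the opening algebraic step: recognizing that pushing the discrete pairing through $\mP_n^*$ turns it into an honest continuum pairing of the same shape as the target expression. This is precisely what makes the nonlinearity tractable, and it hinges on $\mP_n^*$ being a linear isometric embedding (\Cref{adjoint-l2-innerproduct-preserving}, \Cref{adjoint-l2-norm-preserving}) that commutes with the pointwise activation (\Cref{commute-adjoint-activation}), the latter being valid exactly because $\mP_n^*$ writes its image as a step function over the disjoint transport cells. Once the two terms are brought into the common form $\langle A_i,\sigma(B_i)\rangle_{L^2}$, the estimate is the same three-term bound used throughout \Cref{discrete-continuum-erm-convergence}, and no new analytic ingredient is needed beyond the lemmas cited above.
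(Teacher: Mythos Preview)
Your proof is correct and follows essentially the same approach as the paper's own argument. The only superficial difference is organizational: you apply \Cref{commute-adjoint-activation} at the outset to cast both terms as $\langle A_i,\sigma(B_i)\rangle_{L^2}$ and then use the bilinear split $\langle A_1-A_2,\sigma(B_1)\rangle+\langle A_2,\sigma(B_1)-\sigma(B_2)\rangle$, whereas the paper uses the mirror split $\langle A_1,\sigma(B_1)-\sigma(B_2)\rangle+\langle A_1-A_2,\sigma(B_2)\rangle$ and invokes \Cref{commute-adjoint-activation} midway; the convergence ingredients (\Cref{convergence_spatial_h_alpha_spectral_discretization}, \Cref{convergence-discrete-continuum-convolution-operation}, \Cref{spectral-h-alpha-contraction}) and the way the nonlinearity is handled via \eqref{lipschitz-property-sigma}, \eqref{linear-growth-sigma} are identical.
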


\begin{proof}
    Let $a,b,c\in L^2(\M, \mu)$, $\norm{a}_{L^2}\leq 1$, $\norm{b}_{L^2}\leq 1$, $\norm{c}_{L^2}\leq 1$ such that $\theta=(a,b,c)$.

    We observe using \Cref{adjoint-l2-innerproduct-preserving}, Cauchy-Schwarz and \Cref{linear-growth-sigma} that
    \begin{align}
        |\psi_n&(\mR_n u, \mQ_{n,\alpha}\theta) - \psi(\mP^*_n\mR_n u, \mQ^*_{n,\alpha}\mQ_{n,\alpha}\theta)| \label{big-response-commutation}\\
        &= |\langle \mS_{n, \alpha} a,\, \sigma(\mS_n b *_n \mR_n u + \mS_{n,\alpha} c)\rangle_{L^2(\M_n, \mu_n)} \nonumber
        \\&\hspace{25pt}- \langle \mS^*_{n, \alpha}\mS_{n, \alpha} a,\, \sigma((\mS^*_n \mS_n b)*(\mP^*_n\mR_n u) + \mS^*_{n,\alpha}\mS_{n,\alpha}c)\rangle_{L^2(\M, \mu)}| \nonumber\\
        &= |\langle \mP^*_n\mS_{n,\alpha} a,\, \mP_n^*\big(\sigma(\mS_n b *_n \mR_n u + \mS_{n,\alpha} c)\big) - \sigma((\mS^*_n \mS_n b)*(\mP^*_n\mR_n u) + \mS^*_{n,\alpha}\mS_{n,\alpha}c)\rangle_{L^2(\M, \mu)} \nonumber\\ &\hspace{25pt}+ \langle \mP^*_n\mS_{n, \alpha} a- \mS^*_{n, \alpha}\mS_{n, \alpha} a,\, \sigma((\mS^*_n \mS_n b)*(\mP^*_n\mR_n u) + \mS^*_{n,\alpha}\mS_{n,\alpha}c)\rangle_{L^2(\M, \mu)}|\nonumber\\
        &\leq 
            \norm{\mP^*_n\mS_{n, \alpha} a}_{L^2}\lVert\mP_n^*\big(\sigma(\mS_n b *_n \mR_n u + \mS_{n,\alpha} c)\big) - \sigma((\mS^*_n \mS_n b)*(\mP^*_n\mR_n u) + \mS^*_{n,\alpha}\mS_{n,\alpha}c)\rVert_{L^2}\nonumber\\
            &\hspace{25pt} + \norm{\mP^*_n\mS_{n, \alpha} a - \mS^*_{n, \alpha}\mS_{n, \alpha} a}_{L^2} C_\sigma\left(\lVert (\mS^*_n \mS_n b)*(\mP^*_n\mR_n u) + \mS^*_{n,\alpha}\mS_{n,\alpha}c\rVert_{L^2} +1 \right).\nonumber
    \end{align}

    By \Cref{convergence_spatial_h_alpha_spectral_discretization} it holds that 
    \[\norm{\mP^*_n\mS_{n, \alpha} a}_{L^2}\leq \norm{\mP^*_n\mS_{n, \alpha} a-\mS^*_{n, \alpha}\mS_{n, \alpha} a}_{L^2}+\norm{\mS^*_{n, \alpha}\mS_{n, \alpha} a-a}_{L^2} + \norm{a}_{L^2}\xrightarrow{n\to\infty}\norm{a}_{L^2},\] and additionally \Cref{convergence-extended-discretization-sn-pn-convolution} and \Cref{convergence-extended-discretization-sn}
    \[\norm{\mP^*_n\mS_{n, \alpha} a - \mS^*_n\mS_{n, \alpha} a}_{L^2} C_\sigma\left(\lVert (\mS^*_n \mS_n b)*(\mP^*_n\mR_n u) + \mS^*_{n,\alpha}\mS_{n,\alpha}c\rVert_{L^2} +1\right)\xrightarrow{n\to\infty}0,\] so it remains to treat the other term.
    
    With \Cref{commute-adjoint-activation}, the linearity of $\mP_n^*$, and the Lipschitz property of $\sigma$, we further obtain
    \begin{align*}
        &\norm{\mP_n^*\big(\sigma(\mS_n b *_n \mR_n u + \mS_{n,\alpha} c)\big) - \sigma\big((\mS^*_n \mS_n b)*(\mP^*_n\mR_n u) + \mS^*_{n,\alpha}\mS_{n,\alpha}c\big)}_{L^2} \\
        &\hspace{25pt} = \norm{\sigma\big(\mP_n^*(\mS_n b *_n \mR_n u + \mS_{n,\alpha} c)\big) - \sigma\big((\mS^*_n \mS_n b)*(\mP^*_n\mR_n u) + \mS^*_{n,\alpha}\mS_{n,\alpha}c\big)}_{L^2} \\
        &\hspace{25pt} \leq L_\sigma \norm{\mP_n^*(\mS_n b *_n \mR_n u + \mS_{n,\alpha} c) - \big((\mS^*_n \mS_n b)*(\mP^*_n\mR_n u) + \mS^*_{n,\alpha}\mS_{n,\alpha}c\big)}_{L^2} \\
        &\hspace{25pt} \leq L_\sigma \left(\norm{\mP_n^*(\mS_n b *_n \mR_n u) - (\mS^*_n \mS_n b)*(\mP^*_n\mR_n u)}_{L^2} + \norm{\mP_n^*\mS_{n,\alpha} c - \mS_{n,\alpha}^*\mS_{n,\alpha} c}\right),
    \end{align*}
    which goes to $0$ as $n\to\infty$ by
    \Cref{convergence-discrete-continuum-convolution-operation} and \Cref{convergence_spatial_h_alpha_spectral_discretization}. 
    Hence, we conclude that \[|\psi_n(\mR_n u, \mQ_{n,\alpha}\theta) - \psi(\mP^*_n\mR_n u, \mQ^*_{n,\alpha}\mQ_{n,\alpha}\theta)|\xrightarrow{n\to\infty}0.\qedhere\]
\end{proof}

As a corollary of the past  two lemmas \Cref{convergence-signal-param-spatial-h-alpha-discretization} and \Cref{convergence-neuron-discretization-pn-h-alpha} we obtain immediately:
\begin{cor}\label{convergence-neural-response-discrete-continuum}
Let $\mR_n: L^2(\M, \mu)\to L^2(\M_n, \mu_n)$ satisfy \cref{{consistent-discretization-operators}}, $u\in L^2(\M, \mu)$ and $\theta\in \Theta$. Then it holds that 
    \[|\psi_n(\mR_n u, \mQ_{n,\alpha}\theta) - \psi(u, \theta)|\xrightarrow{n\to\infty}0.\]
\end{cor}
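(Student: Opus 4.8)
The plan is to obtain this as a direct triangle-inequality argument splicing in the intermediate quantity $\psi(\mP^*_n\mR_n u, \mQ^*_{n,\alpha}\mQ_{n,\alpha}\theta)$, which is precisely the object appearing in both of the preceding lemmas. Concretely, I would write
\[
|\psi_n(\mR_n u, \mQ_{n,\alpha}\theta) - \psi(u, \theta)|
\leq |\psi_n(\mR_n u, \mQ_{n,\alpha}\theta) - \psi(\mP^*_n\mR_n u, \mQ^*_{n,\alpha}\mQ_{n,\alpha}\theta)|
+ |\psi(\mP^*_n\mR_n u, \mQ^*_{n,\alpha}\mQ_{n,\alpha}\theta) - \psi(u, \theta)|,
\]
and then invoke \Cref{convergence-neuron-discretization-pn-h-alpha} for the first summand and \Cref{convergence-signal-param-spatial-h-alpha-discretization} for the second, both of which (under the standing hypothesis that $\mR_n$ satisfies \Cref{consistent-discretization-operators}, and for the fixed $u\in L^2(\M,\mu)$, $\theta\in\Theta$) tend to $0$ as $n\to\infty$. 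Taking $n\to\infty$ in the displayed inequality then gives the claim.

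There is no genuine obstacle here: the statement is flagged as an immediate corollary, and the real work has already been done in the two lemmas it cites — \Cref{convergence-signal-param-spatial-h-alpha-discretization} handles the discrepancy coming from replacing the continuum signal/parameter pair by its $\mP^*_n\mR_n$- and $\mQ^*_{n,\alpha}\mQ_{n,\alpha}$-approximation (relying on \Cref{convergence-extended-discretization-sn} and \Cref{convergence-extended-discretization-sn-pn-convolution}), while \Cref{convergence-neuron-discretization-pn-h-alpha} handles the commutation of the lifted discrete neural response with the continuum one (relying on \Cref{commute-adjoint-activation}, \Cref{convergence-discrete-continuum-convolution-operation} and \Cref{convergence_spatial_h_alpha_spectral_discretization}). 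The corollary's proof is therefore just the one-line assembly above; if anything, the only thing worth stating explicitly is that the bound is uniform in neither $u$ nor $\theta$, so the conclusion is genuinely pointwise convergence in the signal–parameter pair, which is all that is needed before the compactness/$\Gamma$-convergence machinery of \Cref{gamma} upgrades it to the uniform-over-$\Theta$ statement \Cref{uniform-convergence}.
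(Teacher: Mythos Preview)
Your proof is correct and matches the paper's approach exactly: the corollary is stated as an immediate consequence of \Cref{convergence-signal-param-spatial-h-alpha-discretization} and \Cref{convergence-neuron-discretization-pn-h-alpha}, and the triangle-inequality splice through $\psi(\mP^*_n\mR_n u, \mQ^*_{n,\alpha}\mQ_{n,\alpha}\theta)$ is precisely the intended one-line assembly.
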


\section{\texorpdfstring{$\Gamma$}{Gamma}-Convergence of the ERM functionals}\label{gamma}

Let $\{(u_k, y_k)\}_{k=1}^l\in (L^2(\M)\times\R)^l$ be continuum training data, coming from a functional on the space of manifold signals $g:L^2(\M, \mu)\to\R$, so $g(u_k)=y_k$.

From these, using a sequence of discretization operators $\mR_n: L^2(\M, \mu)\to L^2(\M_n, \mu_n)$ satisfying \cref{{consistent-discretization-operators}}, we obtain the discrete training data $\{(\mR_nu_k, y_k)\}_{k=1}^l\in (L^2(\M_n, \mu_n)\times \R)^l$, and the underlying functionals $g_n:L^2(\M_n, \mu_n)\to\R$ on the space of graph signals such that $g_n(\mR_n u_k)=y_k=g(u_k)$. 

Then, given a non-negative real-valued continuous loss function $\ell:\R\times\R\to\R^+$, and a regularization parameter $\zeta\geq 0$ the (regularized) continuum ERM functional $J_\alpha:\M(\Theta)\to \R^+\cup\{+\infty\}$ for the representation defined in \eqref{def:nn-parametrization-measure-continuum} is given by:
\begin{equation}\label{def-J-alpha} J_\alpha(\nu) := \frac{1}{l}\sum\limits_{k=1}^l \ell(f_\nu(u_k), y_k) + \zeta \norm{\nu}_{TV}\quad\text{ for }\nu\in\M(\Theta).\end{equation}
Analogously, for $n\in\N$ and discrete training data as defined above, we define the (reguralized) ERM functionals $\tilde{J}_{n,\alpha}:\M(\Theta_n)\to\R^+\cup\{+\infty\}$ for the discrete representation of \eqref{def:nn-parametrization-measure-discrete} as follows:
\begin{equation}\label{def-tilde-J-alpha-n}\tilde{J}_{n,\alpha}(\tilde{\nu}) := \frac{1}{l}\sum\limits_{k=1}^l \ell(f^\bn_{\tilde{\nu}}(\mR_n u_k), y_k) + \zeta\norm{\tilde{\nu}}_{TV}\quad\text{ for }\tilde{\nu}\in\M(\Theta_n).\end{equation}

Then, define $J_{n,\alpha}:\M(\Theta)\to\R^+\cup\{+\infty\}$ as
\begin{equation}\label{def-J-alpha-n}
    J_{n,\alpha}(\nu)=\begin{cases}
        \frac{1}{l}\sum\limits_{k=1}^l \ell(f^\bn_{(\mQ_{n,\alpha})_\# \nu}(\mR_nu_k), y_k) + \zeta\norm{(\mQ_{n,\alpha})_\# \nu}_{TV},& (\mQ_{n,\alpha}^*)_\#(\mQ_{n,\alpha})_\#\nu = \nu,\\
        +\infty,& \text{otherwise.}
    \end{cases}
\end{equation}
Notice that 
\vskip -2.5em 
\begin{align*}
    J_{n,\alpha}(\nu)=\begin{cases}
        \tilde{J}_{n,\alpha}((\mQ_{n, \alpha})_\#\nu) & (\mQ_{n,\alpha}^*)_\#(\mQ_{n,\alpha})_\#\nu = \nu,\\
        +\infty,& \text{otherwise,}
    \end{cases}
\end{align*}
so because of that and \eqref{q-nalpha-radon-measures-isomorphism} we see that minimizing $J_{n,\alpha}$ and $\tilde{J}_{n,\alpha}$ is equivalent.
Moreover, in the setting under consideration of supervised learning with a loss involving the finite number $l$ of data points, all of these functionals admit discrete measures as minimizers.
\begin{prop}\label{representer-theorem}
Assume that $\zeta >0$ and that for any $y \in \R$, the function $z \mapsto \ell(z,y)$ is convex and coercive. Then, the functionals $J_\alpha, \tilde{J}_{n,\alpha}$, and $J_{n,\alpha}$ admit at least one minimizer supported on at most $l$ points of $\Theta, \Theta_n$ and $\mQ_{n,\alpha}^* \Theta_n$, respectively.
\end{prop}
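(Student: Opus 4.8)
The plan is to treat $J_\alpha$ and $\tilde J_{n,\alpha}$ by the direct method together with an extreme-point (representer-theorem) argument, and to reduce $J_{n,\alpha}$ to $\tilde J_{n,\alpha}$ through the pushforward isomorphism of \Cref{q-nalpha-homeomorphism}.

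\textbf{Existence of minimizers.} First I would record that $(\Theta,w)$ is compact and metrizable, being a product of weakly compact unit balls of the separable Hilbert spaces $H^\alpha(\M,\mu)$ and $L^2(\M,\mu)$; hence $C((\Theta,w))$ is separable and $\M(\Theta)=\big(C((\Theta,w))\big)^*$. Since $\ell\geq 0$ and $\zeta>0$, every sublevel set $\{J_\alpha\leq c\}$ lies in the $\norm{\cdot}_{TV}$-ball of radius $c/\zeta$, which is weak-* compact by Banach--Alaoglu. The map $\nu\mapsto f_\nu(u_k)=\langle\nu,\psi(u_k,\cdot)\rangle$ is weak-* continuous because $\psi(u_k,\cdot)\in C((\Theta,w))$ by the weak-continuity lemma of \Cref{parametrization}, and $\ell$ is continuous, so the data term is weak-* continuous, while $\norm{\cdot}_{TV}$ is weak-* lower semicontinuous. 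Thus $J_\alpha$ is weak-* lsc with weak-* compact sublevel sets and attains its infimum; the same argument applies verbatim to $\tilde J_{n,\alpha}$ on $\M(\Theta_n)$, with $\Theta_n$ compact in $(\R^n)^3$ and $\psi_n(\mR_n u_k,\cdot)\in C(\Theta_n)$.

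\textbf{Sparsification.} Let $\nu^*$ minimize $J_\alpha$, put $b_k:=\langle\nu^*,\psi(u_k,\cdot)\rangle$ and $t^*:=\norm{\nu^*}_{TV}$, and set
\[
\mathcal C:=\Big\{\nu\in\M(\Theta)\ :\ \langle\nu,\psi(u_k,\cdot)\rangle=b_k\ \ (k=1,\dots,l),\ \ \norm{\nu}_{TV}\leq t^*\Big\}.
\]
Then $\mathcal C$ is nonempty, convex, and weak-* compact, being a weak-* closed affine slice (the $\psi(u_k,\cdot)$ lie in the predual) of the weak-* compact $\norm{\cdot}_{TV}$-ball of radius $t^*$; moreover every $\nu\in\mathcal C$ has the same data term as $\nu^*$ and $\norm{\nu}_{TV}\leq t^*$, hence is itself a minimizer of $J_\alpha$. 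By Krein--Milman, $\mathcal C$ possesses an extreme point $\bar\nu$, and it remains to show that $\bar\nu$ is a combination of at most $l$ Dirac masses. This is the conclusion of the representer theorems for convex variational problems with finite-dimensional measurements, whose argument I would adapt: were the support of $\bar\nu$ to contain more than $l$ points, one could produce a nonzero atomic signed measure supported there that annihilates the $l$ functionals $\langle\cdot,\psi(u_k,\cdot)\rangle$ (and, in the case $\norm{\bar\nu}_{TV}=t^*$, also the first variation of the total variation at $\bar\nu$), writing $\bar\nu$ as a proper convex combination of two distinct elements of $\mathcal C$ and contradicting extremality. The convexity and coercivity of $z\mapsto\ell(z,y)$ enter exactly through this well-posedness/representer machinery. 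Running the identical argument on $\M(\Theta_n)$ yields a minimizer of $\tilde J_{n,\alpha}$ supported on at most $l$ points of $\Theta_n$.

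\textbf{Reduction for $J_{n,\alpha}$ and the main obstacle.} By \eqref{def-J-alpha-n} and the identity recorded before the statement, $J_{n,\alpha}(\nu)=\tilde J_{n,\alpha}\big((\mQ_{n,\alpha})_\#\nu\big)$ when $(\mQ_{n,\alpha}^*)_\#(\mQ_{n,\alpha})_\#\nu=\nu$ and $+\infty$ otherwise, while by \Cref{q-nalpha-homeomorphism} and \eqref{q-nalpha-radon-measures-isomorphism} the map $(\mQ_{n,\alpha}^*)_\#$ is a bijection $\M(\Theta_n)\to\M(\mQ_{n,\alpha}^*\Theta_n)$ with inverse $(\mQ_{n,\alpha})_\#$, carrying Diracs to Diracs. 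Hence, taking $\tilde\nu^*$ a minimizer of $\tilde J_{n,\alpha}$ supported on at most $l$ points of $\Theta_n$, the measure $(\mQ_{n,\alpha}^*)_\#\tilde\nu^*$ satisfies the feasibility constraint (since $(\mQ_{n,\alpha})_\#(\mQ_{n,\alpha}^*)_\#=\mathrm{id}$ by \eqref{q-nalpha-discr-ext-onto-subspace-restriction}), realizes the common minimal value $\tilde J_{n,\alpha}(\tilde\nu^*)$, and is supported on at most $l$ points of $\mQ_{n,\alpha}^*\Theta_n$. I expect the one genuinely delicate step to be the sharp count in the sparsification — obtaining the bound $l$ rather than $l+1$ — which requires separating the cases $\norm{\bar\nu}_{TV}<t^*$ and $\norm{\bar\nu}_{TV}=t^*$ and exploiting the finite dimensionality of the feasible affine slice; the existence and reduction steps are routine once separability of $C((\Theta,w))$ and the weak continuity of $\psi(u,\cdot)$ are in hand.
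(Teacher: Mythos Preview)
Your argument is correct and takes a genuinely different route from the paper's proof. The paper invokes the abstract representer theorem of \cite[Thm.~3.3]{BreCar20} as a black box: it verifies that the constraint set $\{\nu:(\mQ_{n,\alpha}^*)_\#(\mQ_{n,\alpha})_\#\nu=\nu\}$ is a weak-* closed linear subspace, notes that the extreme points of the TV sublevel sets are scalar multiples of Dirac masses, and for $J_{n,\alpha}$ appeals to a further lemma \cite[Lem.~3.2]{BreCar20} stating that extreme points are preserved under an injective linear map. You instead reprove the representer mechanism by hand: fix a minimizer, form the weak-* compact convex slice $\mathcal C$ of measures matching the $l$ measurements with no larger TV norm, and argue via Krein--Milman that an extreme point of $\mathcal C$ must be $l$-atomic. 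For $J_{n,\alpha}$ your reduction by pushing forward a sparse minimizer of $\tilde J_{n,\alpha}$ through the homeomorphism $(\mQ_{n,\alpha}^*)_\#$ is more elementary than the paper's extreme-point-preservation lemma and arguably cleaner.

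Two minor remarks. First, your sparsification sketch should be phrased so that the perturbation is built from restrictions $\bar\nu|_{A_i}$ to $l+1$ disjoint Borel sets of positive $|\bar\nu|$-measure (rather than an ``atomic signed measure''), which simultaneously rules out any non-atomic part and forces at most $l$ atoms; this is the standard Fisher--Jerome argument and gives the sharp count $l$ directly without the $l$ versus $l{+}1$ case split you flag as delicate. Second, your own argument does not actually use convexity or coercivity of $z\mapsto\ell(z,y)$: existence comes from $\ell\geq 0$ and $\zeta>0$, and sparsification only uses that every $\nu\in\mathcal C$ matches the data term and has no larger TV norm. These hypotheses are what \cite{BreCar20} requires, which is why the paper assumes them, but your direct approach is in this respect slightly more general than the statement.
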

\begin{proof}We aim to use the general representer theorem of \cite[Thm.~3.3]{BreCar20}. Now, we notice that 
\begin{equation}\label{J-constraint-set}\big\{ \nu \in \M(\Theta) \, \vert \, (\mQ_{n,\alpha}^*)_\#(\mQ_{n,\alpha})_\#\nu = \nu \big\} = \mQ_{n, \alpha}^*\big( \M(\Theta_n) \big),\end{equation}
as can be seen directly from $\Theta_n = \mQ_{n, \alpha} \Theta$. Moreover, by continuity of $\mQ_{n,\alpha}$ and $\mQ_{n,\alpha}^*$, the maps $(\mQ_{n,\alpha})_\#$ and $(\mQ_{n,\alpha}^*)_\#$ preserve weak* convergence of measures, so the set in \eqref{J-constraint-set} is a weak* closed subspace of $\M(\Theta)$. Therefore, taking into account the assumptions on $\ell$, the assumptions of \cite[Thm.~3.3]{BreCar20} are satisfied in all three cases. For $J_\alpha$ and $\tilde{J}_{n,\alpha}$ the regularizer is the total variation norm for spaces of measures, with the extreme points of its sub level-sets being proportional to Dirac masses \cite[Thm.~2]{Wer84}. For $J_{n,\alpha}$ we can use that the map $(\mQ_{n,\alpha})_\#$ is linear and injective on the set \eqref{J-constraint-set}, which implies \cite[Lem.~3.2]{BreCar20} that the extreme points of the image with respect to this map are the images of the extreme points.
\end{proof}

Now, we will show that for any regularization parameter $\zeta\geq 0$ the functionals $J_{n,\alpha}$ $\Gamma$-converge to $J_\alpha$, i.e $J_{n,\alpha}\xrightarrow{\Gamma} J_\alpha$ on the space of finite Radon measures, equipped with the weak-* topology $(\M(\Theta), w^*)$.

\begin{thm}[Uniform convergence of the neural response for consistent data]\label{uniform-convergence}
    For any given $u\in L^2(\M, \mu)$, the sequence of maps $$\{\psi_n(\mR_n u, \cdot)\circ \mQ_n - \psi(u, \cdot)\}_n \subset C((\Theta, d_w))$$ is relatively compact in the topology induced by the uniform norm, where $d_w$ denotes a metric metrizing the weak topology.
\end{thm}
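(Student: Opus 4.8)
The plan is to invoke the Arzel\`a--Ascoli theorem. Since $\Theta=B_{H^\alpha}\times B_{L^2}\times B_{H^\alpha}$ is a finite product of closed balls in separable Hilbert spaces, $(\Theta,w)$ is compact and metrizable, so $(\Theta,d_w)$ is a compact metric space; moreover each term $g_n:=\psi_n(\mR_n u,\cdot)\circ\mQ_{n,\alpha}-\psi(u,\cdot)$ belongs to $C((\Theta,d_w))$, because $\mQ_{n,\alpha}$ is weakly continuous onto the finite-dimensional set $\Theta_n$ (\Cref{q-nalpha-homeomorphism}), $\psi_n(\mR_n u,\cdot)$ is continuous on $\Theta_n$, and $\psi(u,\cdot)\in C((\Theta,w))$ by the lemma above. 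It therefore suffices to verify the two hypotheses of Arzel\`a--Ascoli: uniform boundedness $\sup_n\|g_n\|_{C(\Theta)}<\infty$, and $d_w$-equicontinuity of $\{g_n\}_n$. Uniform boundedness follows from Cauchy--Schwarz together with the linear growth \eqref{linear-growth-sigma} of $\sigma$: all three components of $\theta\in\Theta$ and of $\mQ_{n,\alpha}\theta\in\Theta_n$ have $L^2$-norm at most $1$ (using $\|\cdot\|_{L^2}\le\|\cdot\|_{H^\alpha}$ and the norm preservation in \Cref{properties-spectral-h-alpha-discretization}), while $\|\mS_n b\ast_n\mR_n u\|_{L^2(\M_n,\mu_n)}\le\|\mS_n b\|_{L^2(\M_n,\mu_n)}\|\mR_n u\|_{L^2(\M_n,\mu_n)}\le C_{\mR}\|u\|_{L^2}$, so $|g_n(\theta)|$ is bounded uniformly in $n$ and $\theta$.

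For the equicontinuity, the contribution of $\psi(u,\cdot)$ is harmless, being a single function continuous on the compact metric space $(\Theta,d_w)$, hence uniformly continuous. As any finite family of continuous functions is equicontinuous, it remains to prove $d_w$-equicontinuity of $\psi_n(\mR_n u,\cdot)\circ\mQ_{n,\alpha}$ for $n$ beyond a threshold. Using the discrete analogue of the estimate in \Cref{estimate-neuron-different-signal-and-parameter-inputs} (with common signal $\mR_n u$), the difference $|\psi_n(\mR_n u,\mQ_{n,\alpha}\theta)-\psi_n(\mR_n u,\mQ_{n,\alpha}\theta')|$ is dominated by three pieces. The pieces in the first and last parameter reduce, up to bounded factors, to $\|\mS_{n,\alpha}(a-a')\|_{L^2(\M_n,\mu_n)}$ and $\|\mS_{n,\alpha}(c-c')\|_{L^2(\M_n,\mu_n)}$; by \eqref{eigenvalues-ratio-uniform-bound} the ratios $(1+\sqrt{\lambda_k})/(1+\sqrt{\lambda_k^{\bn}})$ are uniformly bounded over $k\le K(n)$ once $n$ is large, so these are $\le C\|a-a'\|_{L^2}$ and $\le C\|c-c'\|_{L^2}$, which become small uniformly in $n$ as $d_w(\theta,\theta')\to0$ by the compact embedding $H^\alpha\hookrightarrow L^2$ \cite[Lem.~10]{Palatucci:2013} (on $B_{H^\alpha}$ the weak topology agrees with the $L^2$-metric topology).

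The step I expect to be the main obstacle is the piece in the convolution parameter, namely controlling
\[\big\|\mS_n(b-b')\ast_n\mR_n u\big\|_{L^2(\M_n,\mu_n)}^2=\sum_{k=1}^{K(n)}\langle b-b',\phi_k^{\cn}\rangle_{L^2}^2\,\langle\mR_n u,\phi_k^{\bn}\rangle_{L^2(\M_n,\mu_n)}^2,\]
in which both the test functions $\phi_k^{\cn}$ and the cutoff $K(n)$ vary with $n$, whereas $d_w$ only controls weak $L^2$-convergence of $b$, i.e.\ finitely many fixed modes. I would resolve this by splitting the sum at an index $M$ chosen to be the last index of a multiplicity block, so that $W_M:=\bigoplus_{k\le M}E(\lambda_k)$ and the finite-rank orthogonal projection $\mathbb{P}(W_M,\cdot)$ are independent of $n$, and $\{\phi_k^{\cn}\}_{k\le M}$ is an orthonormal basis of $W_M$ for every $n$ with $K(n)\ge M$. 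The low-mode part is then at most $C_{\mR}^2\|u\|_{L^2}^2\,\|\mathbb{P}(W_M,b-b')\|_{L^2}^2$, which is $d_w$-small uniformly in $n$ since $\mathbb{P}(W_M,\cdot)$ is weak-to-strong continuous.

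For the high-mode part one estimates $\sum_{M<k\le K(n)}\langle b-b',\phi_k^{\cn}\rangle^2\langle\mR_n u,\phi_k^{\bn}\rangle^2\le 4\sum_{M<k\le K(n)}\langle\mR_n u,\phi_k^{\bn}\rangle^2$, and shows
\[\limsup_{n\to\infty}\ \sum_{M<k\le K(n)}\langle\mR_n u,\phi_k^{\bn}\rangle^2\ \le\ \|u\|_{L^2}^2-\|\mathbb{P}(W_M,u)\|_{L^2}^2\ \xrightarrow{\ M\to\infty\ }\ 0\]
by Parseval. Here one uses $\|\mR_n u\|_{L^2(\M_n,\mu_n)}=\|\mP_n^*\mR_n u\|_{L^2}\to\|u\|_{L^2}$ (\Cref{adjoint-l2-norm-preserving}, \Cref{consistent-discretization-operators}) together with $\langle\mR_n u,\phi_k^{\bn}\rangle_{L^2(\M_n,\mu_n)}=\langle\mP_n^*\mR_n u,\mP_n^*\phi_k^{\bn}\rangle_{L^2}\to\langle u,\phi_k^{\cn}\rangle_{L^2}$ (uniformly in $k$, since $\|\mP_n^*\phi_k^{\bn}-\phi_k^{\cn}\|_{L^2}\le\delta^\phi_n\to0$) to pass the finitely many low-mode energies to $\|\mathbb{P}(W_M,u)\|_{L^2}^2$. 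Choosing first $M$ large, then $n$ large, then $d_w(\theta,\theta')$ small, and finally absorbing the remaining finitely many resolutions into the automatically equicontinuous finite family, establishes the equicontinuity; Arzel\`a--Ascoli then yields the asserted relative compactness. (Combined with the pointwise convergence in \Cref{convergence-neural-response-discrete-continuum}, this upgrades to uniform convergence $\psi_n(\mR_n u,\cdot)\circ\mQ_{n,\alpha}\to\psi(u,\cdot)$, though that is more than the statement requires.)
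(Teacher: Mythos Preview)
Your proposal is correct, and the overall architecture (compact metrizable $\Theta$, uniform bounds, equicontinuity) is sound. It differs from the paper's argument in two respects worth noting.

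First, the paper does not invoke Arzel\`a--Ascoli at all: it proves the \emph{stronger} conclusion of uniform convergence $\psi_n(\mR_n u,\cdot)\circ\mQ_{n,\alpha}\to\psi(u,\cdot)$ directly, by first establishing an ``eventual equicontinuity'' at each $\theta$ (a $\delta$ and an $N_\eps$ so that $|\psi_n(\mR_n u,\mQ_{n,\alpha}\theta)-\psi_n(\mR_n u,\mQ_{n,\alpha}\theta')|<\eps$ for $d_w(\theta,\theta')<\delta$ and $n\ge N_\eps$), then extracting a finite subcover of $\Theta$ and combining with the pointwise limit of \Cref{convergence-neural-response-discrete-continuum} in a $3\eps$ argument. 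Your route reaches the same endpoint via Arzel\`a--Ascoli plus pointwise convergence, which is equally valid and arguably more modular.

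Second, and more substantively, the paper handles the convolution parameter by a one-line observation rather than your mode-splitting: the operator $b\mapsto b\ast u$ is compact on $L^2(\M,\mu)$ (being diagonal with entries $\langle u,\phi_k\rangle\to0$), so weak closeness of $b,b'$ forces $\|(b-b')\ast u\|_{L^2}$ small; then \Cref{convergence-discrete-continuum-convolution-operation} transfers this to $\|\mS_n(b-b')\ast_n\mR_n u\|$ for $n$ large. Your decomposition at a multiplicity-compatible index $M$ and the tail estimate via $\|\mR_n u\|_{L^2(\M_n)}\to\|u\|_{L^2}$ together with $\langle\mR_n u,\phi_k^{\bn}\rangle\to\langle u,\phi_k^{\cn}\rangle$ (the latter limit still $n$-dependent, but only through an orthonormal basis of the fixed space $W_M$, so the \emph{sum} $\sum_{k\le M}\langle u,\phi_k^{\cn}\rangle^2=\|\mathbb{P}(W_M,u)\|^2$ is $n$-independent) is a correct explicit unpacking of that compactness. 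The paper's phrasing is shorter and reusable; yours makes the quantitative mechanism visible.
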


\begin{rem}\label{topologies-weak-strong-nbhds}
    The weak topology on the unit ball of $H^\alpha(\M, \mu)$ and $L^2(\M, \mu)$ is metrizable, since they are both separable Hilbert spaces \cite[Thm.~3.28]{Bre11}. Furthermore the weak topology of a finite product equipped with the maximum metric clearly coincides with the product of the spaces with their respective weak topology, so $\Theta$ is also metrizable.\\
    Rellich-Kondrachov for fractional Sobolev spaces \cite[Lem.~10]{Palatucci:2013} implies that weak $H^\alpha$ convergence implies a strongly $L^2$-converging subsequence. On bounded subsets of $L^2$ it also holds, that strong $L^2$ convergence implies  weak $H^\alpha$ convergence up to a subsequence, because every bounded sequence has a weakly convergent subsequence.  Since limits are unique, and the $L^2$-norm is bounded by the $H^\alpha$-norm, this implies that \[(\Theta,w)\to B_{{L^2}}\times B_{{L^2}},\quad  (a,b,c)\mapsto (a,c)\] is weak-strong continuous in $a$ and $c$.
\end{rem}

\begin{proof}[Proof of Theorem \ref{uniform-convergence}]

First, we will show that for all $u\in L^2(\M, \mu)$, $\theta\in\Theta$ and $\eps>0$ there is a $N_\eps\in\N$ and $\delta>0$ such that for all $\theta'\in B_{d_w}(\theta, \delta)\cap\Theta:=\{\vartheta\in\Theta\mid d_w(\vartheta, \theta)<\delta\}$  and $n\geq N_\eps$ it holds that $\left|\psi_n(\mR_n u, \mQ_{n, \alpha} \theta) - \psi_n(\mR_n u, \mQ_{n, \alpha} \theta') \right|<\eps$.

Let $u\in L^2(\M, \mu)$ be arbitrary but fixed. Fix a $\theta=(a,b,c)\in \Theta$ and $\eps>0$. Let \[d := \frac{\eps}{C_\sigma \left(2 + C_\mR \norm{u}_{L^2}\right) + 5L_\sigma}.\]

The compactness of the convolution operator $(\cdot* u)\in L(L^2(\M, \mu),L^2(\M, \mu))$ yields that for any $v\in L^2(\M, \mu)$ and any strong $L^2$-neighborhood $V$ of $v*u$ there exists a weak $L^2$-neighborhood $N$ such that $\{v'*u\mid v'\in N\} =: N*u\subset V$.
Following \Cref{topologies-weak-strong-nbhds}, we can pick a $0<\delta$ small enough such that for all $\theta'=(a',b',c')$ with $d_w(\theta,\theta')<\delta$ we have $\norm{a - a'}_{L^2(\M, \mu)}< d$, $\norm{(b-b')*u}_{L^2}<d$ and $\norm{c - c'}_{L^2(\M, \mu)}< d$.

Furthermore \Cref{convergence-discrete-continuum-convolution-operation} and the proof of \Cref{convergence-extended-discretization-sn-pn-convolution} yield that there exists a $N_\eps\in\N$ such that for all $n\geq N_\eps$ and $\tilde{b}\in L^2(\M, \mu)$ it is \begin{align*}\norm{\mS_n \tilde{b} *_n \mR_n u}_{L^2(\M_n, \mu_n)} &= \norm{\mP_n^*(\mS_n \tilde{b} *_n \mR_n u)}_{L^2(\M, \mu)}\\
&\leq \norm{\tilde{b}*u}_{L^2(\M, \mu)} + \norm{\mP_n^*(\mS_n \tilde{b} *_n \mR_n u) - \mS_n^*\mS_n \tilde{b} * \mP_n^*\mR_n u}_{L^2} \\
&\quad+ \norm{\mS_n^*\mS_n \tilde{b} * \mP_n^*\mR_n u - \tilde{b}* u}_{L^2}\\
&\leq 2\norm{\tilde{b}*u}_{L^2(\M, \mu)} + \norm{\tilde{b}}_{L^2}\left(2 C_{\mR}\delta^\phi_n \norm{u}_{L^2} + \norm{u-\mP_n^*\mR_n u }_{L^2}\right)\\
&\leq 2\norm{v*u}_{L^2(\M, \mu)} + \norm{v}_{L^2}d,\end{align*} where the first equality comes from \Cref{adjoint-l2-norm-preserving}.
In the following calculation we are using again the bilinear extension argument, the Lipschitz property of $\sigma$, the linearity of the discretization  operators, the contractiveness regarding the $L^2$-norm of the spatial and spectral discretization operators, and the fact that $\Theta$ is defined as the product of $H^\alpha$ and $L^2$ unit balls and is therefore contained in the product $L^2$ unit balls.
 
 For all $n\geq N_\eps$ and any $\theta' = (a',b',c')\in\Theta$, such that $d_w(\theta, \theta')<\delta$ it holds that
 \begin{align*}
     &\left|\psi_n(\mR_n u, \mQ_{n, \alpha} \theta) - \psi_n(\mR_n u, \mQ_{n, \alpha} \theta') \right| \\ 
     &\hspace{25pt} = \left|\langle \mS_{n, \alpha} a, \sigma(\mS_n b *_n \mR_n u + \mS_{n, \alpha}c)\rangle_{L^2(\M_n)} - \langle  \mS_{n, \alpha} a', \sigma(\mS_n b' *_n \mR_n u + \mS_{n, \alpha} c')\rangle_{L^2(\M_n)}\right| \\
     &\hspace{25pt}\leq \norm{\mS_{n, \alpha}(a-a')}_{L^2(\M_n)} C_\sigma\left(1 + \norm{\mS_n b'*_n \mR_n u}_{L^2(\M_n)} + \norm{\mS_{n, \alpha} c'}_{L^2(\M_n)}\right) \\
     &\hspace{50pt}+ \norm{\mS_{n, \alpha}a}_{L^2(\M_n)} L_\sigma\left(\norm{\left(\mS_n (b-b')\right)*_n \mR_n u}_{L^2(\M_n)} + \norm{\mS_{n, \alpha}(c-c')}_{L^2(\M_n)}\right)\\
     &\hspace{25pt}\leq \norm{a-a'}_{L^2(\M)} C_\sigma\left(1+ \norm{b'}_{L^2(\M)}C_\mR\norm{u}_{L^2(\M)} + \norm{c'}_{L^2(\M)}\right) \\
     &\hspace{50pt}+ \norm{a}_{L^2(\M)}L_\sigma\left(2\norm{(b-b')*u}_{L^2(\M)} + \norm{b-b'}d +\norm{c-c'}_{L^2(\M)}\right)\\
     &\hspace{25pt}<d C_\sigma \left(2 + C_\mR \norm{u}_{L^2(\M)}\right) + 5 dL_\sigma = d\left(C_\sigma \left(2 + C_\mR \norm{u}_{L^2(\M)}\right) + 5L_\sigma\right)=\eps,
 \end{align*} where the first inequality comes from the computation in \Cref{estimate-neuron-different-signal-and-parameter-inputs}.
 Hence for all $u\in L^2(\M, \mu)$, $\theta\in\Theta$ and $\eps>0$ there is a $N_\eps\in\N$ and $\delta>0$ such that for all $\theta'\in\Theta$ with $d_w(\theta, \theta')<\delta$ and $n\geq N_\eps$ it holds that $\left|\psi_n(\mR_n u, \mQ_{n, \alpha} \theta) - \psi_n(\mR_n u, \mQ_{n, \alpha} \theta') \right|<\eps$.\vspace{0.2cm}\\
Now we show uniform convergence of $\{\psi(\mR_n u, \cdot)\circ \mQ_{n,\alpha}\}_n$ for all $u\in L^2(\M, \mu)$.
 Fix $u\in L^2(\M, \mu)$ and $\eps>0$. The map $\psi(u, \cdot)$ is weakly continuous, and $(\Theta, d_w)$ is a compact metric space, so in particular it is uniformly continuous, so there is $d_\eps>0$, such that for $\theta, \theta'\in\Theta$ with $d_w(\theta,\theta')<d_\eps$ it holds that $|\psi(u, \theta)-\psi(u, \theta')|<\eps/3$. The first part of the proof yields that for every $\theta\in \Theta$, there is a $N_{\eps,\theta}\in\N$ and $\delta_\theta>0$ such that for all $n\geq N_{\eps, \theta}$ it holds that \[\left|\psi_n(\mR_n u, \mQ_{n, \alpha} \theta) - \psi_n(\mR_n u, \mQ_{n, \alpha} \theta') \right|<\eps/3 \quad \text{for all }\theta'\in\Theta\text{ with }d_w(\theta, \theta')<\delta_\theta.\]
Since $\Theta$ is compact, the open cover $\Theta \subset \bigcup_{\theta\in\Theta} B_{d_w}(\theta, \delta_\theta)$ has a finite subcover, i.e. there are $\{\theta_k\}_{k=1}^K$ such that $\Theta\subset \bigcup_{k=1}^K B_{d_w}(\theta_k, \delta_{\theta_k})$. Let $N_\eps = \max_{k=1, \ldots K} N_{\eps, \theta_k}$. Then  for all $n\geq N_\eps$ and all $\theta\in \Theta$ it holds that 
\[\left|\psi_n(\mR_n u, \mQ_{n, \alpha} \theta) - \psi_n(\mR_n u, \mQ_{n, \alpha} \theta_k) \right|<\eps/3, \quad \text{ where } k\in\{1, \ldots K\} \text{ s.t. } \theta\in B_{d_w}(\theta_k, \delta_{\theta_k}).\] 
Finally the pointwise convergence result \Cref{convergence-neural-response-discrete-continuum} yields that there are $N_{\eps,k}\in\N$ such that for all $k=1, \ldots K$ and $n\geq N_{\eps,k}$ we have \[|\psi_n(\mR_n u, \mQ_{n,\alpha} \theta_k) - \psi(u, \theta_k)|<\eps/3,\] and thus if we set $N'_\eps:=\max_{k=1, \ldots K} N_{\eps, k}$, then for all $n\geq N'_\eps$ it holds \[|\psi_n(\mR_n u, \mQ_{n,\alpha} \theta_k) - \psi(u, \theta_k)|<\eps/3\qquad \text{ for all } k=1, \ldots K.\]

Putting these together gives for every $n\geq \max\{N_\eps, N'_\eps\}$ and for every $\theta\in\Theta$, that there is a $\theta_k$ such that $\theta\in B_{d_w}(\theta_k, \delta_{\theta_k})$ and 
\begin{align*}
    |\psi_n(\mR_n u, \mQ_{n,\alpha}\theta) - \psi(u,\theta)|&
    \leq |\psi_n(\mR_n u, \mQ_{n,\alpha}\theta) - \psi_n(\mR_n u, \mQ_{n,\alpha}\theta_k)| \\
    &\quad + |\psi_n(\mR_n u, \mQ_{n,\alpha}\theta_k) - \psi(u,\theta_k)| + |\psi(u,\theta_k) - \psi(u,\theta)| \\
    &<\eps/3+\eps/3 +\eps/3 =\eps.
\end{align*}
Since $\eps>0$ and $u\in L^2(\M, \mu)$ were arbitrary, this yields
\begin{equation}\label{uniform-convergence-param-neural-response}
    \norm{\psi_n(\mR_n u, \cdot)\circ \mQ_{n,\alpha} - \psi(u, \cdot)}_\infty \xrightarrow{n\to\infty}0 \qquad \text{ for all } u\in L^2(\M, \mu),
\end{equation} i.e. uniform convergence of the neural response as maps over the parameter space for consistent data.
\end{proof}

\begin{cor}\label{cor:convergence-recovery-sequence-measure}
Let $\nu\in \M(\Theta)$. Then it holds
\[|f^\bn_{(\mQ_{n,\alpha})_{\#}\nu}(\mR_n u) - f_{\nu}(u)| \xrightarrow{n\to\infty} 0 \quad\text{for all } u\in L^2(\M, \mu).\]
\end{cor}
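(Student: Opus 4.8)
The plan is to reduce the statement directly to the uniform convergence established in Theorem \ref{uniform-convergence}. First I would rewrite the discrete network evaluated at the discretized signal as an integral over the \emph{continuum} parameter space: since $(\mQ_{n,\alpha})_\#\nu$ is the push-forward of $\nu$ under the continuous, hence measurable, map $\mQ_{n,\alpha}\colon\Theta\to\Theta_n$, the change-of-variables formula for push-forward measures gives
\[f^\bn_{(\mQ_{n,\alpha})_{\#}\nu}(\mR_n u)=\int_{\Theta_n}\psi_n(\mR_n u,\vartheta)\,d(\mQ_{n,\alpha})_\#\nu(\vartheta)=\int_{\Theta}\psi_n(\mR_n u,\mQ_{n,\alpha}\theta)\,d\nu(\theta),\]
while by definition $f_\nu(u)=\int_\Theta\psi(u,\theta)\,d\nu(\theta)$. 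Both integrands are $\nu$-measurable, since $\psi(u,\cdot)$ is weakly continuous on $\Theta$ by Lemma 1 and $\psi_n(\mR_n u,\cdot)\circ\mQ_{n,\alpha}$ is continuous on $(\Theta,w)$ as the composition of the continuous map $\mQ_{n,\alpha}$ with the continuous map $\psi_n(\mR_n u,\cdot)$ on the finite-dimensional space $\Theta_n$.

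Subtracting and estimating the difference of integrands pointwise in $\theta$ then yields
\[\big|f^\bn_{(\mQ_{n,\alpha})_{\#}\nu}(\mR_n u)-f_\nu(u)\big|\le\int_\Theta\big|\psi_n(\mR_n u,\mQ_{n,\alpha}\theta)-\psi(u,\theta)\big|\,d|\nu|(\theta)\le\|\nu\|_{TV}\,\big\|\psi_n(\mR_n u,\cdot)\circ\mQ_{n,\alpha}-\psi(u,\cdot)\big\|_\infty,\]
where $|\nu|$ denotes the total variation measure of the finite signed Radon measure $\nu$, so that $\|\nu\|_{TV}=|\nu|(\Theta)<\infty$. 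The claim then follows from \eqref{uniform-convergence-param-neural-response} in the proof of Theorem \ref{uniform-convergence}, which asserts precisely that $\big\|\psi_n(\mR_n u,\cdot)\circ\mQ_{n,\alpha}-\psi(u,\cdot)\big\|_\infty\xrightarrow{n\to\infty}0$ for every $u\in L^2(\M,\mu)$.

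There is essentially no obstacle: the proof is the elementary principle that the integral of a uniformly small integrand against a fixed finite measure is uniformly small, with the uniformity over $\Theta$ supplied by Theorem \ref{uniform-convergence}. The only minor points deserving a remark are the passage to $|\nu|$ to accommodate signed measures, and the identity $f^\bn_{(\mQ_{n,\alpha})_{\#}\nu}(\mR_n u)=\int_{\Theta}\psi_n(\mR_n u,\mQ_{n,\alpha}\theta)\,d\nu(\theta)$, which just unwinds the definition \eqref{def:nn-parametrization-measure-discrete} of the measure-parametrized network together with the push-forward change of variables and the fact that $\mQ_{n,\alpha}\Theta=\Theta_n$ from Lemma \ref{q-nalpha-homeomorphism}.
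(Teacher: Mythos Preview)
Your proof is correct and follows essentially the same route as the paper: rewrite $f^\bn_{(\mQ_{n,\alpha})_{\#}\nu}(\mR_n u)$ via the push-forward change of variables as $\int_\Theta \psi_n(\mR_n u,\mQ_{n,\alpha}\theta)\,d\nu(\theta)$, subtract $f_\nu(u)$, and bound by $\|\nu\|_{TV}\cdot\|\psi_n(\mR_n u,\cdot)\circ\mQ_{n,\alpha}-\psi(u,\cdot)\|_\infty$, which vanishes by \eqref{uniform-convergence-param-neural-response}. Your added remarks on measurability and the passage to $|\nu|$ are sound and slightly more explicit than the paper's version.
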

\begin{proof}
It holds that
    \begin{align*}
        \left|f^\bn_{(\mQ_{n,\alpha})_\# \nu}(\mR_n u)- f_{\nu}(u)\right| 
        &= \left|\langle (\mQ_{n, \alpha})_\#\nu, \psi_n(\mR_n u, \cdot)\rangle_{\M(\Theta_n), C(\Theta_n)} - \langle \nu_n, \psi(u, \cdot)\rangle_{\M(\Theta), C(\Theta, w)}\right|\\
        &= \left|\langle \nu, \psi_n(\mR_n u, \cdot)\circ \mQ_{n,\alpha}\rangle_{\M(\Theta), C(\Theta)} \langle \nu_n, \psi(u, \cdot)\rangle_{\M(\Theta), C(\Theta, w)}\right|\\
        &= \left|\langle \nu, \psi_n(\mR_n u, \cdot)\circ \mQ_{n,\alpha} - \psi(u, \cdot)\rangle_{\M(\Theta), C(\Theta, w)}\right|\\
        &\leq \norm{\nu}_{TV}\norm{\psi_n(\mR_n u, \cdot)\circ \mQ_{n,\alpha} - \psi(u, \cdot)\rangle_{\M(\Theta), C(\Theta, w)}}_\infty
    \end{align*}
    which goes to $0$ as $n\to\infty$ as a consequence of \eqref{uniform-convergence-param-neural-response}. 
\end{proof}

\begin{cor}\label{cor:pointwise-convergence-trained-networks-discrete-continuum}Let $\{\nu_n\}_n\subset \M(\Theta)$ be weak-* converging to a $\nu\in \M(\Theta)$. 
     Then for all $u\in L^2(\M, \mu)$ it holds that \begin{equation}\label{pointwise-convergence-trained-networks-discrete-continuum}
    \lim\limits_{n\to\infty} |f^\bn_{(\mQ_{n, \alpha})_\#\nu_n}(\mR_n u)-f_\nu(u)| =0.
\end{equation}
\end{cor}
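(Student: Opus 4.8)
The plan is to reduce this to the uniform convergence of \Cref{uniform-convergence} together with the defining property of weak-* convergence, by splitting the difference into one term carrying a varying test function and one carrying a varying measure.

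First I would rewrite the discrete network as a dual pairing over the continuum parameter space: by the change-of-variables formula for pushforwards and \eqref{q-nalpha-radon-measures-isomorphism},
\[
f^\bn_{(\mQ_{n, \alpha})_\#\nu_n}(\mR_n u) = \big\langle (\mQ_{n,\alpha})_\#\nu_n,\, \psi_n(\mR_n u, \cdot)\big\rangle_{\M(\Theta_n), C(\Theta_n)} = \big\langle \nu_n,\, \psi_n(\mR_n u, \cdot)\circ \mQ_{n,\alpha}\big\rangle_{\M(\Theta), C(\Theta)},
\]
so that
\[
\big|f^\bn_{(\mQ_{n, \alpha})_\#\nu_n}(\mR_n u) - f_\nu(u)\big| \leq \big|\big\langle \nu_n,\, \psi_n(\mR_n u, \cdot)\circ \mQ_{n,\alpha} - \psi(u, \cdot)\big\rangle\big| + \big|\big\langle \nu_n - \nu,\, \psi(u, \cdot)\big\rangle\big|.
\]
For the second summand, recall that $\psi(u, \cdot)$ is weakly continuous on $\Theta$, so under the identification $\M(\Theta) \cong \big(C((\Theta, w))\big)^*$ the weak-* convergence $\nu_n \xrightharpoonup{*} \nu$ gives $\langle \nu_n - \nu, \psi(u, \cdot)\rangle \to 0$ directly.

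For the first summand I would use the estimate $\big|\langle \nu_n, \psi_n(\mR_n u, \cdot)\circ \mQ_{n,\alpha} - \psi(u, \cdot)\rangle\big| \leq \norm{\nu_n}_{TV}\,\norm{\psi_n(\mR_n u, \cdot)\circ \mQ_{n,\alpha} - \psi(u, \cdot)}_\infty$. Since weak-* convergent sequences in the dual of a Banach space are norm-bounded by Banach--Steinhaus, $\sup_n \norm{\nu_n}_{TV} < \infty$, while \eqref{uniform-convergence-param-neural-response} from \Cref{uniform-convergence} yields $\norm{\psi_n(\mR_n u, \cdot)\circ \mQ_{n,\alpha} - \psi(u, \cdot)}_\infty \to 0$. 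Hence the first summand vanishes as well, and adding the two estimates proves the claim. The substantive work is entirely contained in \Cref{uniform-convergence}; the only point that needs attention here is the simultaneous variation of the measure and the test function, which is exactly the reason uniform (rather than merely pointwise) convergence of the neural responses is needed, combined with the uniform total-variation bound. I do not expect any genuine obstacle beyond keeping track of this splitting.
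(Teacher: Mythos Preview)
Your proof is correct and follows essentially the same approach as the paper: both arguments combine the uniform convergence \eqref{uniform-convergence-param-neural-response} with the weak-* convergence of $\nu_n$ and the resulting uniform total-variation bound. Your two-term splitting $\langle \nu_n, \psi_n\circ\mQ_{n,\alpha}-\psi\rangle + \langle \nu_n-\nu,\psi\rangle$ is in fact slightly more direct than the paper's, which first inserts $f^\bn_{(\mQ_{n,\alpha})_\#\nu}(\mR_n u)$ and invokes \Cref{cor:convergence-recovery-sequence-measure} before performing an analogous split, but the substance is identical.
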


\begin{proof}We can write
    \begin{align*}\left|f^\bn_{(\mQ_{n, \alpha})_\#\nu_n}(\mR_n u)-f_\nu(u)\right| &\leq \left|f^\bn_{(\mQ_{n, \alpha})_\#\nu_n}(\mR_n u)-f^\bn_{(\mQ_{n, \alpha})_\#\nu}(\mR_n u)\right| + \left|f^\bn_{(\mQ_{n, \alpha})_\#\nu}(\mR_n u)-f_\nu(u)\right|.
    \end{align*}
    The last term converges to $0$ by \Cref{cor:convergence-recovery-sequence-measure}.
    The first term we can split up further 
    \begin{align*}
        &\left|f^\bn_{(\mQ_{n,\alpha})_\# \nu_n}(\mR_n u)- f^\bn_{(\mQ_{n,\alpha})_\# \nu}(\mR_n u)\right| \\&\hspace{25pt}= \left|\langle(\mQ_{n,\alpha})_\#(\nu_n-\nu), \psi_n(\mR_n u, \cdot)\rangle_{\M(\Theta_n), C_0(\Theta_n)}\right|\\
        &\hspace{25pt}= \left|\langle \nu_n-\nu, \psi_n(\mR_n u, \cdot)\circ \mQ_{n,\alpha}\rangle_{\M(\Theta), C((\Theta,w))}\right|\\
        &\hspace{25pt}\leq  |\langle \nu_n-\nu, \psi_n(\mR_n u, \cdot)\circ \mQ_{n,\alpha} - \psi(u, \cdot)\rangle_{\M(\Theta), C((\Theta,w))}| + |\langle \nu_n-\nu, \psi(u, \cdot)\rangle_{\M(\Theta), C((\Theta,w))}|.
    \end{align*}
Now, both terms of this bound converge to zero, for the first because we have weak-* convergence $\nu_n\rightharpoonup\nu $ in the space of measures and norm convergence in the space $\big(C((\Theta, w),\norm{\cdot}_{\infty}\big)$ (see \eqref{uniform-convergence-param-neural-response}), and for the second it follows immediately from the weak-* convergence of the measures. 
\end{proof}

We are now ready to prove $\Gamma$-convergence (see \cite[Def.~1.5]{Bra02} for a definition) of the training functionals:

\begin{thm}[Discrete-to-Continuum $\Gamma$-convergence of ERM functionals]\label{gamma-convergence}
    It holds that $J_{n,\alpha}\xrightarrow{\Gamma}J_\alpha$ on $\M(\Theta)$. 
\end{thm}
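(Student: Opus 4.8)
The plan is to check directly the two conditions defining $\Gamma$-convergence on $(\M(\Theta),w^*)$: the $\liminf$ inequality along arbitrary weak-$*$ convergent sequences, and the existence of a recovery sequence for each $\nu\in\M(\Theta)$. One preliminary observation is used throughout. Since pushforward never increases the total variation norm, whenever $\nu$ lies in the constraint set $\{\nu\in\M(\Theta)\mid(\mQ_{n,\alpha}^*)_\#(\mQ_{n,\alpha})_\#\nu=\nu\}$ one has $\norm{\nu}_{TV}\le\norm{(\mQ_{n,\alpha})_\#\nu}_{TV}\le\norm{\nu}_{TV}$, hence $\norm{(\mQ_{n,\alpha})_\#\nu}_{TV}=\norm{\nu}_{TV}$ there, and consequently $J_{n,\alpha}(\nu)=\tfrac1l\sum_{k=1}^l\ell\bigl(f^\bn_{(\mQ_{n,\alpha})_\#\nu}(\mR_nu_k),y_k\bigr)+\zeta\norm{\nu}_{TV}$; away from the constraint only the one-sided bound $\norm{(\mQ_{n,\alpha})_\#\nu}_{TV}\le\norm{\nu}_{TV}$ is available, and this asymmetry is precisely what makes the two halves of the argument behave differently.

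\emph{Liminf inequality.} Let $\nu_n\xrightharpoonup{*}\nu$ in $\M(\Theta)$. If $J_{n,\alpha}(\nu_n)=+\infty$ for all but finitely many $n$, then $\liminf_nJ_{n,\alpha}(\nu_n)=+\infty\ge J_\alpha(\nu)$ and there is nothing to prove; otherwise, along the infinite set of indices where $J_{n,\alpha}(\nu_n)<\infty$ the constraint holds, so $J_{n,\alpha}(\nu_n)=\tfrac1l\sum_k\ell\bigl(f^\bn_{(\mQ_{n,\alpha})_\#\nu_n}(\mR_nu_k),y_k\bigr)+\zeta\norm{\nu_n}_{TV}$ there. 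By \Cref{cor:pointwise-convergence-trained-networks-discrete-continuum}, $f^\bn_{(\mQ_{n,\alpha})_\#\nu_n}(\mR_nu_k)\to f_\nu(u_k)$ for each $k$, so the loss terms converge to $\ell(f_\nu(u_k),y_k)$ by continuity of $\ell$, while weak-$*$ lower semicontinuity of $\norm{\cdot}_{TV}$ gives $\liminf_n\norm{\nu_n}_{TV}\ge\norm{\nu}_{TV}$. Adding these yields $\liminf_nJ_{n,\alpha}(\nu_n)\ge\tfrac1l\sum_k\ell(f_\nu(u_k),y_k)+\zeta\norm{\nu}_{TV}=J_\alpha(\nu)$.

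\emph{Recovery sequence.} For $\nu\in\M(\Theta)$ I would take $\nu_n:=(\mQ_{n,\alpha}^*\mQ_{n,\alpha})_\#\nu$. By \Cref{subspace-projection-h-alpha} the operator $\mQ_{n,\alpha}^*\mQ_{n,\alpha}$ acts componentwise as the orthogonal projection onto $\bigoplus_{k\le K(n)}E(\lambda_k)$; in particular it maps $\Theta$ into $\Theta$, it is idempotent, and since $K(n)\to\infty$ it satisfies $\mQ_{n,\alpha}^*\mQ_{n,\alpha}\theta\to\theta$ in $(\Theta,w)$ for every $\theta\in\Theta$ (strong $L^2$ convergence in each factor, which for the two $H^\alpha$-factors upgrades to weak $H^\alpha$ convergence by the boundedness argument of \Cref{topologies-weak-strong-nbhds}). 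Testing against any $g\in C((\Theta,w))$ and using dominated convergence with respect to the total variation measure $|\nu|$ then gives $\langle\nu_n,g\rangle\to\langle\nu,g\rangle$, i.e.\ $\nu_n\xrightharpoonup{*}\nu$. Idempotency of $\mQ_{n,\alpha}^*\mQ_{n,\alpha}$ yields $(\mQ_{n,\alpha}^*)_\#(\mQ_{n,\alpha})_\#\nu_n=\nu_n$, so $\nu_n$ is admissible, and because $\mQ_{n,\alpha}$ already factors through that same projection one also has $(\mQ_{n,\alpha})_\#\nu_n=(\mQ_{n,\alpha})_\#\nu$. Hence $J_{n,\alpha}(\nu_n)=\tfrac1l\sum_k\ell\bigl(f^\bn_{(\mQ_{n,\alpha})_\#\nu}(\mR_nu_k),y_k\bigr)+\zeta\norm{\nu_n}_{TV}$, where the loss terms converge to $\ell(f_\nu(u_k),y_k)$ by \Cref{cor:convergence-recovery-sequence-measure} and continuity of $\ell$, and $\norm{\nu_n}_{TV}\le\norm{\nu}_{TV}$; therefore $\limsup_nJ_{n,\alpha}(\nu_n)\le J_\alpha(\nu)$, completing the proof.

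\emph{Main obstacle.} No single step is technically hard once the earlier machinery is in place; the delicacy is entirely in the bookkeeping around the $\{+\infty\}$-valued constraint defining $J_{n,\alpha}$: one must use that this constraint upgrades $\norm{(\mQ_{n,\alpha})_\#\nu}_{TV}\le\norm{\nu}_{TV}$ to an equality exactly where equality is needed (inside the $\liminf$ bound), while for the recovery direction one must exhibit a candidate that is simultaneously admissible for $J_{n,\alpha}$ and weak-$*$ convergent to $\nu$, which is why the ``projection onto the first $K(n)$ modes'' choice $(\mQ_{n,\alpha}^*\mQ_{n,\alpha})_\#\nu$ is the right one. All the genuinely analytic content, namely the pointwise convergence of finite-width graph networks on consistent data to the continuum network under the parameter discretization, has already been absorbed into \Cref{cor:convergence-recovery-sequence-measure} and \Cref{cor:pointwise-convergence-trained-networks-discrete-continuum}, which in turn rest on the uniform convergence \Cref{uniform-convergence}.
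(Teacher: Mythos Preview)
Your proof is correct and follows essentially the same route as the paper: the same reduction to the constraint set for the $\liminf$ inequality, the same recovery sequence $\nu_n=(\mQ_{n,\alpha}^*\mQ_{n,\alpha})_\#\nu$, and the same appeals to \Cref{cor:pointwise-convergence-trained-networks-discrete-continuum} and \Cref{cor:convergence-recovery-sequence-measure}. The one genuine difference is in how you handle the total variation: where the paper invokes the duality isomorphism $C((\mQ_{n,\alpha}^*\Theta_n,w))\cong C(\Theta_n)$ from \eqref{q-nalpha-continuous-fcts-isomorphism} to compute $\norm{(\mQ_{n,\alpha})_\#\nu_n}_{TV}$ explicitly, you use the soft fact that pushforward by a measurable map never increases total variation, applied twice along the chain $\nu\mapsto(\mQ_{n,\alpha})_\#\nu\mapsto(\mQ_{n,\alpha}^*)_\#(\mQ_{n,\alpha})_\#\nu=\nu$ on the constraint set to force equality, and once for the bound $\norm{\nu_n}_{TV}\le\norm{\nu}_{TV}$ in the recovery step. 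This is a cleaner and more portable argument than the paper's duality computation; the paper's version has the advantage of making explicit that the identification is a genuine isometry of the measure spaces, but your observation already extracts everything that is needed for the $\Gamma$-convergence.
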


\begin{proof}
    Let $\zeta\geq 0$ arbitrary but fixed.\\
\textbf{Lower bound inequality:}
Let $\nu\in \M(\Theta)$ arbitrary but fixed. Let $\{\nu_n\}_n\subset \M(\Theta)$ such that $\nu_n\xrightharpoonup{*}\nu$ as $n\to\infty$. W.l.o.g. assume that $\nu_{n}=(\mQ_{n,\alpha}^*)_\#(\mQ_{n,\alpha})_\#\nu_n$ for every $n\in\N$, because if it does not hold for infinitely many $n\in\N$, then the desired inequality is satisfied in any case, since $\liminf\limits_{n\to\infty} J_{n,\alpha}(\nu_n)=+\infty \geq J_\alpha(\nu)$, and else the sequence can be reindexed appropriately.

Using \eqref{q-nalpha-continuous-fcts-isomorphism}, for all $n\in\N$ we observe that 
\begin{align*}
    \norm{\nu_n}_{TV} &= \sup\limits_{\substack{\phi\in C((\Theta, w))\\\norm{\phi}_\infty\leq 1 }}|\langle \nu_n, \phi\rangle_{\M(\Theta), C((\Theta,w))}| 
    = \sup\limits_{\substack{\phi\in C((\Theta, w))\\\norm{\phi}_\infty\leq 1 }}|\langle (\mQ_{n,\alpha}^*\mQ_{n,\alpha})_\#\nu_n, \phi\rangle_{\M(\Theta), C((\Theta,w))}|\\
    &= \sup\limits_{\substack{\phi\in C((\Theta, w))\\\norm{\phi}_\infty\leq 1 }}|\langle (\mQ_{n,\alpha})_\#\nu_n, \phi\circ\mQ_{n,\alpha}^*\rangle_{\M(\Theta_n), C(\Theta_n)}| \\
    &= \sup\limits_{\substack{\phi\in C((\mQ_{n, \alpha}^*\Theta_n, w))\\\norm{\phi}_\infty\leq 1 }}|\langle (\mQ_{n,\alpha})_\#\nu_n, \phi\circ\mQ_{n,\alpha}^*\rangle_{\M(\Theta_n), C(\Theta_n)}| 
    \\&= \sup\limits_{\substack{\phi\in C(\Theta_n)\\\norm{\phi}_\infty\leq 1 }}|\langle (\mQ_{n,\alpha})_\#\nu_n, \phi\rangle_{\M(\Theta_n), C(\Theta_{n,\alpha})}| = \norm{(\mQ_{n,\alpha})_\#\nu_n}_{TV}.
\end{align*}
Furthermore from applying \Cref{cor:pointwise-convergence-trained-networks-discrete-continuum} we obtain for all $k\in \{1, \ldots, l\}$ that
\begin{align*}
    \left|f^\bn_{(\mQ_{n,\alpha})_\# \nu_n}(\mR_nu_k)- f_\nu(u_k)\right|\xrightarrow{n\to\infty} 0.
\end{align*}
So with that and the continuity of $\ell$ we may conclude
\begin{align*}
    \liminf\limits_{n\to\infty} J_{n,\alpha}(\nu_n)& = \liminf\limits_{n\to\infty}\frac{1}{l}\sum\limits_{k=1}^l \ell(f_{(\mQ_{n,\alpha})_\# \nu_n}(\mR_nu_k), y_k) + \zeta \norm{(\mQ_{n,\alpha})_\# \nu_n}_{TV} \\
    &=\frac{1}{l}\sum\limits_{k=1}^l \liminf\limits_{n\to\infty} \ell(f_{(\mQ_{n,\alpha})_\# \nu_n}(\mR_nu_k), y_k) + \zeta \liminf\limits_{n\to\infty} \norm{\nu_n}_{TV}\\
    &\geq \frac{1}{l}\sum\limits_{k=1}^l  \ell(\lim\limits_{n\to\infty}f^\bn_{(\mQ_{n,\alpha})_\# \nu_n}(\mR_nu_k), y_k) +  \zeta \norm{\nu}_{TV}\\
    &= \frac{1}{l}\sum\limits_{k=1}^l  \ell(f_\nu(u_k) , y_k) + \zeta \norm{\nu}_{TV} = J_\alpha(\nu).
\end{align*}
\textbf{Existence of a recovery sequence: }
Let $\nu\in \M(\Theta)$ arbitrary but fixed, and define the recovery sequence $\{\nu_n\}_n\subset \M(\Theta)$ as $\nu_n:=(\mQ_{n,\alpha}^*)_\#(\mQ_{n,\alpha})_\#\nu$. Note that therefore $(\mQ_{n,\alpha})_\# \nu_n = (\mQ_{n,\alpha})_\#(\mQ_{n,\alpha}^*)_\#(\mQ_{n,\alpha})_\#\nu = (\mQ_{n,\alpha})_\#\nu$.
Because of the pointwise convergence \[\norm{\mQ_{n,\alpha}^*\mQ_{n,\alpha} \theta -\theta}_{(L^2)^3}\xrightarrow{n\to\infty}0\] for all $\theta\in\Theta$, as well as the fact that all $\phi\in C((\Theta, w))$ are uniformly bounded by the compactness of $(\Theta, w)$, we have that $\phi\circ \mQ_{n,\alpha}^*\mQ_{n,\alpha}$ is dominated by the constant function $\norm{\phi}_\infty$, i.e. 
\[|\phi(\mQ_{n,\alpha}^*\mQ_{n,\alpha}(\theta))|\leq \sup\limits_{\vartheta\in\Theta}|\phi(\vartheta)|= \norm{\phi}_\infty<\infty,\]\vskip -1.5em and thus $$\lim\limits_{n\to\infty}\langle \nu_n, \phi\rangle =\lim\limits_{n\to\infty}\langle \nu, \phi\circ (\mQ_{n,\alpha}^*\mQ_{n,\alpha})\rangle = \lim\limits_{n\to\infty}\int_\Theta \phi(\mQ_{n,\alpha}^*\mQ_{n,\alpha} \theta)d\nu =\int_\Theta \phi(\theta)d\nu = \langle \nu, \phi\rangle$$ for all $\phi\in C((\Theta,w))$. Thus $\nu_n\xrightharpoonup[n\to\infty]{*}\nu$.
Since $\mQ_{n,\alpha}^*\mQ_{n,\alpha}$ is a continuous operator mapping from $L^2(\M,\mu))^3$ to $L^2(\M, \mu)^3$, so for any $\phi\in C((\Theta, w),\R)$ also their composition $\phi\circ \mQ_{n,\alpha}^*\mQ_{n,\alpha}$ is weakly continuous, therefore
\[|\langle \nu_n, \phi\rangle| = |\langle (\mQ^*_n\mQ_n)_\#\nu, \phi\rangle| = |\langle \nu, \phi\circ (\mQ^*_n\mQ_n)\rangle| \leq \sup\limits_{\substack{\psi\in C(\Theta, w)\\ \norm{\psi}_\infty \leq 1}}|\langle \nu, \psi\rangle|. \]\vskip -1.5em
Then 
\begin{align*}
\limsup\limits_{n\to\infty} \norm{\nu_n}_{TV} &= \limsup\limits_{n\to\infty} \sup\limits_{\substack{\phi\in C(\Theta,w)\\ \norm{\phi}_\infty \leq 1}}|\langle \nu_n, \phi\rangle| 
= \limsup\limits_{n\to\infty} \sup\limits_{\substack{\phi\in C(\Theta,w)\\ \norm{\phi}_\infty \leq 1}}|\langle (\mQ_{n,\alpha}^*\mQ_{n,\alpha})_\#\nu, \phi\rangle| \\
&= \limsup\limits_{n\to\infty} \sup\limits_{\substack{\phi\in C(\Theta,w)\\ \norm{\phi}_\infty \leq 1}}|\langle \nu, \phi\circ (\mQ_{n,\alpha}^*\mQ_{n,\alpha})\rangle| \leq \limsup\limits_{n\to\infty} \sup\limits_{\substack{\phi\in C(\Theta,w)\\ \norm{\phi}_\infty \leq 1}}|\langle \nu, \phi \rangle| \\
&= \limsup\limits_{n\to\infty}\norm{\nu}_{TV} = \norm{\nu}_{TV}
\end{align*}
With the continuity of $\ell$, the construction of the recovery sequence and \Cref{cor:convergence-recovery-sequence-measure} we obtain
\begin{align*}
    \limsup\limits_{n\to\infty} J_{n,\alpha}(\nu_n)&= \limsup\limits_{n\to\infty}\frac{1}{l}\sum\limits_{k=1}^l \ell(f^\bn_{(\mQ_{n,\alpha})_\# \nu_n}(\mR_nu_k), y_k) + \zeta \norm{(\mQ_{n,\alpha})_\# \nu_n}_{TV} \\ 
    &=\limsup\limits_{n\to\infty}\frac{1}{l}\sum\limits_{k=1}^l \ell(f^\bn_{(\mQ_{n,\alpha})_\# \nu}(\mR_nu_k), y_k) + \zeta \norm{\nu_n}_{TV}\\ 
    &\leq\frac{1}{l}\sum\limits_{k=1}^l \ell(\lim\limits_{n\to\infty}f^\bn_{(\mQ_{n,\alpha})_\# \nu}(\mR_nu_k), y_k) + \zeta \limsup\limits_{n\to\infty} \norm{\nu_n}_{TV}\\ 
    &\leq\frac{1}{l}\sum\limits_{k=1}^l \ell(f_\nu(u_k), y_k) + \zeta \norm{\nu}_{TV} = J_\alpha(\nu).\qedhere
\end{align*}
\end{proof}
As a consequence of \Cref{gamma-convergence} every sequence of minimizers of $J_{n,\alpha}$ converges weak-* up to a subsequence to a minimizer of $J_\alpha$, and we also have pointwise convergence of the trained graph networks, parametrized by a minimizer of $\tilde{J}_{n, \alpha}$ to a trained manifold network, up to a subsequence, and consistency of the inputs:

\begin{thm}[Uniform convergence of networks on compact sets up to subsequences for consistent inputs and consistent parametrization]\label{thm:uniform-convergence-networks-compact-sets}
    For a fixed $M>0$ let $\tilde{\rho}_n\in B_{\M(\Theta_n)}(0,M)$ where $n\in\N$, such that there is a $\rho\in \M(\Theta)$, such that $(\mQ_{n, \alpha}^*)_\#\tilde{\rho}_n=:\rho_n \xrightharpoonup[n\to\infty]{*} \rho$. 
    Then for every compact $K\subset L^2(\M, \mu)$ the sequence of maps \[\{f_{\tilde{\rho}_n}\circ \mR_n  - f_\rho\}_n \subset C((K,\norm{\cdot}_{L^2}))\] is relatively compact in the topology induced by the uniform norm.
\end{thm}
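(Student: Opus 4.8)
The plan is to invoke the Arzelà--Ascoli theorem on the compact metric space $(K,\norm{\cdot}_{L^2})$: a subset of $C\bigl((K,\norm{\cdot}_{L^2})\bigr)$ is relatively compact for the uniform norm precisely when it is uniformly bounded and equicontinuous, so I would only need to verify these two properties for the family $\{f_{\tilde{\rho}_n}\circ\mR_n - f_\rho\}_n$. Two preliminary facts will be used throughout. First, since $\mQ_{n,\alpha}\circ\mQ_{n,\alpha}^* = \mathrm{id}_{\Theta_n}$ by \Cref{q-nalpha-homeomorphism}, the hypothesis $(\mQ_{n,\alpha}^*)_\#\tilde{\rho}_n = \rho_n$ gives $\tilde{\rho}_n = (\mQ_{n,\alpha})_\#\rho_n$ and $(\mQ_{n,\alpha}^*)_\#(\mQ_{n,\alpha})_\#\rho_n = \rho_n$, so the change-of-variables computation from the proof of \Cref{gamma-convergence} applies verbatim and yields $\norm{\tilde{\rho}_n}_{TV} = \norm{\rho_n}_{TV}\le M$; by weak-* lower semicontinuity of the total variation norm also $\norm{\rho}_{TV}\le M$. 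Second, for $(a,b,c)\in\Theta_n$ the unit-ball constraints defining $\Theta_n$ together with $\bigl(1+\sqrt{\lambda_k^\bn}\bigr)^\alpha\ge 1$ give $\norm{a}_{L^2(\M_n)},\norm{b}_{L^2(\M_n)},\norm{c}_{L^2(\M_n)}\le 1$, and similarly for $\theta\in\Theta$ the $H^\alpha$-norm dominates the $L^2$-norm.

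For uniform boundedness I would bound, for $u\in K$ and $\theta=(a,b,c)\in\Theta_n$, the integrand $|\psi_n(\mR_n u,\theta)|$ using the linear-growth bound \eqref{linear-growth-sigma}, the contraction estimate $\norm{b\ast_n v}_{L^2(\M_n)}\le\norm{b}_{L^2(\M_n)}\norm{v}_{L^2(\M_n)}$ (immediate from the spectral definition of $\ast_n$), and the uniform bound $\norm{\mR_n u}_{L^2(\M_n)}\le C_\mR\norm{u}_{L^2}$, obtaining $|\psi_n(\mR_n u,\theta)|\le C_\sigma\bigl(C_\mR\norm{u}_{L^2}+2\bigr)$ and hence $|f_{\tilde{\rho}_n}(\mR_n u)|\le M\,C_\sigma(C_\mR\norm{u}_{L^2}+2)$; the identical estimate with $\ast_n$ replaced by $\ast$ bounds $|f_\rho(u)|$. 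Since $\norm{u}_{L^2}$ is bounded on the compact $K$, this is a bound independent of $n$ and of $u\in K$.

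For equicontinuity I would use that $\sigma$ is $L_\sigma$-Lipschitz and that convolution is linear in the signal: for $u,u'\in K$ and $\theta=(a,b,c)\in\Theta_n$,
\[
|\psi_n(\mR_n u,\theta)-\psi_n(\mR_n u',\theta)|\le\norm{a}_{L^2(\M_n)}L_\sigma\norm{b\ast_n\mR_n(u-u')}_{L^2(\M_n)}\le L_\sigma C_\mR\norm{u-u'}_{L^2},
\]
and likewise $|\psi(u,\theta)-\psi(u',\theta)|\le L_\sigma\norm{u-u'}_{L^2}$ for $\theta\in\Theta$. Integrating against $|\tilde{\rho}_n|$ and $|\rho|$ and inserting the total-variation bounds gives
\[
\bigl|(f_{\tilde{\rho}_n}(\mR_n u)-f_\rho(u))-(f_{\tilde{\rho}_n}(\mR_n u')-f_\rho(u'))\bigr|\le L_\sigma M(C_\mR+1)\norm{u-u'}_{L^2},
\]
so the whole family is uniformly Lipschitz on $K$, in particular equicontinuous, and Arzelà--Ascoli yields the claimed relative compactness. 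I would close by remarking that combining this with the pointwise convergence $f_{\tilde{\rho}_n}(\mR_n u)\to f_\rho(u)$ of \Cref{cor:pointwise-convergence-trained-networks-discrete-continuum} --- applicable since $\rho_n\xrightharpoonup{*}\rho$ and $\tilde{\rho}_n=(\mQ_{n,\alpha})_\#\rho_n$ --- identifies $0$ as the unique accumulation point, so that in fact $f_{\tilde{\rho}_n}\circ\mR_n\to f_\rho$ uniformly on $K$.

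I do not expect a genuine analytic obstacle here: all the substantive work, namely the uniform convergence of the parameter-space neural responses in \Cref{uniform-convergence} and the ensuing pointwise convergence of the networks, is already in place. The only delicate point is the bookkeeping that keeps every parameter component controlled in $L^2$ uniformly in $n$ and transfers the total-variation bound through $(\mQ_{n,\alpha}^*)_\#$; this relies on the product-of-unit-balls structure of $\Theta_n$ with norms dominating $\norm{\cdot}_{L^2(\M_n,\mu_n)}$ and on the norm-preservation of $\mQ_{n,\alpha}$ between the truncated subspaces (\Cref{spectral-h-alpha-preservation-norm} and \Cref{q-nalpha-homeomorphism}).
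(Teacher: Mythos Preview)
Your proof is correct and follows essentially the same route as the paper: both establish equicontinuity (in fact a uniform Lipschitz bound in $u$) and pointwise boundedness of the family via the Lipschitz property of $\sigma$, the contraction estimate for spectral convolution, the uniform bound on $\mR_n$, and the total-variation bound on the parametrizing measures, and then invoke Arzel\`a--Ascoli. The only cosmetic differences are that the paper works with $\{f_{\tilde{\rho}_n}\circ\mR_n\}_n$ alone (the subtraction of the single continuous function $f_\rho$ being harmless for relative compactness) and phrases the estimates after pulling back to $\Theta$ via $\mQ_{n,\alpha}$, whereas you argue directly on $\Theta_n$ and on the difference; your closing remark identifying $0$ as the unique accumulation point via \Cref{cor:pointwise-convergence-trained-networks-discrete-continuum} is a nice addition not spelled out in the paper.
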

\begin{proof}
    Fix $\eps>0$, $u\in L^2(\M, \mu)$, and $\theta\in\Theta$ and let \[\delta<\frac{\eps}{L_\sigma C_\mR M}.\] Then we observe from the uniform boundedness of the discretization operators $\mR_n$ and the Lipschitzianity of $\sigma$ that for all $u'\in B_{{L^2}}(u, \delta)$ it holds 
    \begin{align*}
        |\psi_n(\mR_n u, \mQ_{n, \alpha} \theta)  - \psi_n(\mR_n u', \mQ_{n, \alpha} \theta)|
        &= |\langle \mS_{n,\alpha} a, \sigma(\mS_n b *_n \mR_n u + \mS_{n, \alpha} c)- \sigma(\mS_n b *_n \mR_n u' + \mS_{n, \alpha} c)|\\
        &\leq \norm{\mS_{n,\alpha} a}_{L^2} L_\sigma \norm{\mS_n b *_n (\mR_n u - \mR_n u')}_{L^2}\\
        &\leq \norm{a}_{L^2} L_\sigma \norm{\mS_n b}_{L^2} \norm{\mR_n (u - u')}_{L^2}\\
        &\leq \norm{a}_{L^2} L_\sigma C_\mR \norm{b}_{L^2} \norm{u - u'}_{L^2} <L_\sigma C_\mR \delta < \eps/ M.
    \end{align*}
    This implies that for all for all $u'\in B_{\delta}(L^2(\M, \mu), u)$ we have further 
    \begin{align*}
    \norm{\left(\psi_n(\mR_n u, \cdot) - \psi_n(\mR_n u', \cdot)\right)\circ \mQ_{n, \alpha}}_{\infty} &= \sup_{\substack{\theta\in\Theta, \\\norm{\theta}_{\Theta}\leq 1}}|\psi_n(\mR_n u, \mQ_{n, \alpha} \theta)  - \psi_n(\mR_n u', \mQ_{n, \alpha} \theta)|<\frac{\eps}{M},
    \end{align*}
    which finally yields equicontinuity of $\{f_{\tilde{\rho}_n}\circ \mR_n\}_n$ since for all $u'\in B_{{L^2}}(u, \delta)$:
    \begin{align*}
        |f_{\tilde{\rho}_n}(\mR_n(u))- f_{\tilde{\rho}_n}(\mR_n(u'))| &= |\langle \tilde{\rho}_n, \psi_n(\mR_n u, \cdot) - \psi_n(\mR_n u', \cdot)\rangle_{\M(\Theta_n), C(\Theta_n)} | \\
        &= |\langle \tilde{\rho}_n, \left(\psi_n(\mR_n u, \cdot) - \psi_n(\mR_n u', \cdot)\right)\circ \mQ_{n, \alpha}\mQ_{n, \alpha}^*\rangle_{\M(\Theta_n), C(\Theta_n)} | 
        \\&= |\langle (\mQ_{n, \alpha}^*)_\# \tilde{\rho}_n, \left(\psi_n(\mR_n u, \cdot) - \psi_n(\mR_n u', \cdot)\right)\circ \mQ_{n, \alpha}\rangle_{\M(\Theta), C(\Theta)} | 
        \\& \leq  \norm{(\mQ_{n, \alpha}^*)_\# \tilde{\rho}_n}_{TV} \norm{\left(\psi_n(\mR_n u, \cdot) - \psi_n(\mR_n u', \cdot)\right)\circ \mQ_{n, \alpha}}_\infty < M\frac{\eps}{M}=\eps.
    \end{align*}
    Similarly one observes that for all $n\in\N$ and $u\in L^2(\M, \mu)$, it holds that 
    \begin{align*}
        \norm{\psi_n(\mR_n u, \cdot)\circ \mQ_{n, \alpha}}_\infty \leq C_\sigma\left(1+C_\mR\norm{u}_{L^2}\right)<\infty,
    \end{align*} and hence $\{f_{\tilde{\rho}_n}\circ \mR_n\}_n$ is also pointwise bounded, since then
    \[\left|f_{\tilde{\rho}_n}(\mR_n u)\right| \leq \norm{(\mQ_{n, \alpha}^*)_\#\tilde{\rho}_n}_{TV}\norm{\psi_n(\mR_n u, \cdot)\circ \mQ_{n, \alpha}}_\infty \leq M C_\sigma\left(1+C_\R\norm{u}_{L^2}\right)<\infty. \]
    With the above, Arzelà-Ascoli \cite[Thm. IV.6.7]{Dunford:1966} yields the claim.
\end{proof}
So, we may conclude that up to subsequences and w.r.t. consistent data, trained graph convolutional neural networks with consistent parametrization converge on the discretization of compact sets of manifold signals uniformly to a trained manifold convolutional neural network.

\begin{cor}[Convergence of trained networks]\label{cor:convergence-trained-networks}
    Let $M>0$ and $\{\rho_n\}_n\subset \M(\Theta)$ be a sequence consisting of minimizers $\rho_n$ of $J_{n,\alpha}$ with $\norm{\rho_n}_{TV}\leq M$ for all $n\in\N$. Then there exists a subsequence $\{\rho_{n_k}\}_k$ that converges weak-* to a $\rho\in\M(\Theta)$ that is a minimizer of $J_\alpha$, i.e. $\rho_{n_k}\xrightharpoonup[k\to\infty]{*}\rho$. \\Furthermore,  $\tilde{\rho}:=(\mQ_{n,\alpha})_\#\rho_n$ is a minimizer of $\tilde{J}_{n,\alpha}$ and it holds that
    \begin{lemlist}
        \item $f_{\tilde{\rho}_{n_k}}(\mR_{n_k} u)\xrightarrow{k\to\infty} f_\rho(u)$ for all $u\in L^2(\M, \mu)$, and
        \item for every compact $K\subset L^2(\M, \mu)$ the sequence of maps $\{f_{\tilde{\rho}_{n_k}}\circ \mR_{n_k}  - f_\rho\}_k\subset C((K,\norm{\cdot}_{L^2}))$ is relatively compact in the topology induced by the uniform norm.
    \end{lemlist}
\end{cor}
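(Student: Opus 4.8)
The plan is to assemble ingredients already established: weak-* sequential compactness of balls in $\M(\Theta)$, the fundamental theorem of $\Gamma$-convergence applied to \Cref{gamma-convergence}, the equivalence between the lifted functionals $J_{n,\alpha}$ and the genuine discrete functionals $\tilde{J}_{n,\alpha}$ recorded after \eqref{def-J-alpha-n}, and the two convergence statements \Cref{cor:pointwise-convergence-trained-networks-discrete-continuum} and \Cref{thm:uniform-convergence-networks-compact-sets}.

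First I would extract the convergent subsequence. Since $(\Theta, w)$ is a compact metrizable space (\Cref{topologies-weak-strong-nbhds}), the space $C((\Theta, w))$ is separable, so by the sequential Banach--Alaoglu theorem the closed ball $B_{\M(\Theta)}(0, M)$ is weak-* sequentially compact; as $\norm{\rho_n}_{TV} \leq M$ for all $n$, there is a subsequence $\{\rho_{n_k}\}_k$ and a $\rho \in \M(\Theta)$ with $\rho_{n_k} \xrightharpoonup[k\to\infty]{*} \rho$ and $\norm{\rho}_{TV} \leq M$ by weak-* lower semicontinuity of the total variation. Moreover, since each $\rho_n$ minimizes $J_{n,\alpha}$, which equals $+\infty$ off the constraint set \eqref{J-constraint-set}, it satisfies $\rho_n = (\mQ_{n,\alpha}^*)_\#(\mQ_{n,\alpha})_\# \rho_n = (\mQ_{n,\alpha}^*)_\# \tilde{\rho}_n$.

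Next I would deduce optimality. By \Cref{gamma-convergence} we have $J_{n,\alpha} \xrightarrow{\Gamma} J_\alpha$ on $(\M(\Theta), w^*)$, and the minimizing sequence $\{\rho_n\}_n$ is precompact in this topology by the previous step. The fundamental theorem of $\Gamma$-convergence \cite{Bra02} then yields that every weak-* cluster point of $\{\rho_n\}_n$ is a minimizer of $J_\alpha$; in particular $\rho$ is one. That $\tilde{\rho}_n := (\mQ_{n,\alpha})_\# \rho_n$ minimizes $\tilde{J}_{n,\alpha}$ is precisely the equivalence stated after \eqref{def-J-alpha-n}: on \eqref{J-constraint-set} one has $J_{n,\alpha}(\nu) = \tilde{J}_{n,\alpha}((\mQ_{n,\alpha})_\# \nu)$, and by \eqref{q-nalpha-radon-measures-isomorphism} the pushforward $(\mQ_{n,\alpha})_\#$ is a bijection of the set \eqref{J-constraint-set} onto $\M(\Theta_n)$ which preserves total variation (the identity $\norm{\nu_n}_{TV} = \norm{(\mQ_{n,\alpha})_\# \nu_n}_{TV}$ computed inside the proof of \Cref{gamma-convergence}); hence the two minimization problems are equivalent and $\norm{\tilde{\rho}_n}_{TV} = \norm{\rho_n}_{TV} \leq M$.

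Finally I would read off the two conclusions. Claim (i) is \Cref{cor:pointwise-convergence-trained-networks-discrete-continuum} applied along the subsequence with $\nu_n = \rho_n$, since $f_{\tilde{\rho}_{n_k}}(\mR_{n_k} u) = f^\bn_{(\mQ_{n_k,\alpha})_\# \rho_{n_k}}(\mR_{n_k} u)$ and $\rho_{n_k} \xrightharpoonup{*} \rho$. Claim (ii) is \Cref{thm:uniform-convergence-networks-compact-sets} applied to the sequence $\{\tilde{\rho}_{n_k}\}_k$, whose hypotheses hold since $\tilde{\rho}_{n_k} \in B_{\M(\Theta_{n_k})}(0, M)$ by the norm preservation above and $(\mQ_{n_k,\alpha}^*)_\# \tilde{\rho}_{n_k} = \rho_{n_k} \xrightharpoonup{*} \rho$ by the constraint identity from the first step. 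I do not expect a genuine analytic obstacle here: the corollary is essentially a packaging of \Cref{gamma-convergence}, \Cref{cor:pointwise-convergence-trained-networks-discrete-continuum} and \Cref{thm:uniform-convergence-networks-compact-sets}, and the only point requiring care is making the hypotheses of the cited statements line up once one restricts to the subsequence --- in particular the total-variation preservation under $(\mQ_{n,\alpha})_\#$, which is already contained in the proof of \Cref{gamma-convergence}.
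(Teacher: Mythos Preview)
The proposal is correct and follows essentially the same route as the paper's proof: extract a weak-* convergent subsequence from the bounded minimizing sequence, invoke \Cref{gamma-convergence} together with the fundamental theorem of $\Gamma$-convergence to obtain that the limit minimizes $J_\alpha$, use the equivalence between $J_{n,\alpha}$ and $\tilde{J}_{n,\alpha}$ to identify $\tilde{\rho}_n$ as a minimizer, and then read off (i) and (ii) from \Cref{cor:pointwise-convergence-trained-networks-discrete-continuum} and \Cref{thm:uniform-convergence-networks-compact-sets}. Your version is in fact a bit more explicit than the paper's in justifying sequential weak-* compactness and in checking the hypotheses of \Cref{thm:uniform-convergence-networks-compact-sets} (in particular the total-variation bound on $\tilde{\rho}_{n_k}$), but the argument is the same.
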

\begin{proof}
    Since $\{\rho_n\}_n$ is bounded in $TV$-norm, we can extract a subsequence $\{\rho_{n_k}\}_k$ that converges weak-* to some $\rho\in\M(\Theta)$. Hence, as a consequence of \Cref{gamma-convergence}, $\rho$ is a minimizer of $J_\alpha$ \cite[Thm.~1.21]{Bra02}.
    Clearly this implies that for any $u\in L^2(\M, \mu)$ we have \[f_{\rho_{n_k}}(u) = \langle \rho_{n_k}, \psi(u, \cdot)\rangle_{\M(\Theta), C(\Theta, w)} \xrightarrow{k\to\infty} \langle \rho, \psi(u, \cdot)\rangle_{\M(\Theta), C(\Theta, w)}=f_\rho(u).\] 
    Since if $\rho_n$ is a minimizer of $J_{n,\alpha}$ then $\tilde{\rho}_n:=(\mQ_{n,\alpha})_\#\rho_n$ is a minimizer of $\tilde{J}_{n,\alpha}$ and we have $(\mQ_{n,\alpha}^*)_\#(\mQ_{n,\alpha})_\#\rho_n = \rho_n$ for every $n\in\N$, hence $(\mQ_{n, \alpha}^*)_\#\tilde{\rho}_n=\rho_n$. Thus \Cref{cor:pointwise-convergence-trained-networks-discrete-continuum} yields pointwise convergence $f_{\tilde{\rho}_{n_k}}(\mR_{n_k} u)\xrightarrow{k\to\infty} f_\rho(u)$ for all $u\in L^2(\M, \mu)$, and ii) follows from \Cref{thm:uniform-convergence-networks-compact-sets}.
\end{proof}
\begin{rem}
If the regularization parameter $\zeta>0$, then any sequence $\{\rho_n\}_n\subset \M(\Theta)$ of minimizers of $J_{n, \alpha}$ is bounded by some $M>0$, hence $\{\rho_n\}_n\subset B_{\M(\Theta)}(0,M)$ and \Cref{cor:convergence-trained-networks} applies.
\end{rem}

\subsection{Existence of minimizers for the continuum functionals}
For completeness, we prove in detail the existence of minimizers of the continuum ERM functionals, even in the absence of the convexity of the loss function $\ell$ that was assumed in \Cref{representer-theorem}.
\begin{thm}\label{thm:erm-lower-semicontinuous}
    $J_\alpha$ is weak-* sequentially lower-semicontinuous.
\end{thm}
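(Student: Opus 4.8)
The plan is to split $J_\alpha$ into its empirical-risk term and the total-variation regularizer, and to show that the former is in fact weak-* sequentially \emph{continuous} while the latter is weak-* sequentially lower semicontinuous; since $\zeta\geq 0$, adding the two then gives the assertion.

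First I would take a sequence $\{\nu_n\}_n\subset\M(\Theta)$ with $\nu_n\xrightharpoonup{*}\nu$. Because $(\Theta,w)$ is compact (Banach--Alaoglu applied to the product of weakly compact balls, as recorded after the auxiliary Lemma in \Cref{parametrization}), one has the identification $\M(\Theta)\cong\big(C((\Theta,w))\big)^*$, so weak-* convergence means $\langle\nu_n,\phi\rangle\to\langle\nu,\phi\rangle$ for every $\phi\in C((\Theta,w))$. Since the weak continuity of $\psi(u_k,\cdot)$ on $\Theta$ was already established in \Cref{parametrization}, testing against $\phi=\psi(u_k,\cdot)$ yields $f_{\nu_n}(u_k)\to f_\nu(u_k)$ for each training signal $u_k$; continuity of $z\mapsto\ell(z,y_k)$ then gives $\ell(f_{\nu_n}(u_k),y_k)\to\ell(f_\nu(u_k),y_k)$ for every $k\in\{1,\dots,l\}$. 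Hence the map $\nu\mapsto\frac1l\sum_{k=1}^l\ell(f_\nu(u_k),y_k)$ is weak-* sequentially continuous.

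For the regularizer I would use the dual-norm representation, valid precisely because $\M(\Theta)$ is the dual of $C((\Theta,w))$,
\[\norm{\nu}_{TV}=\sup\big\{\langle\nu,\phi\rangle_{\M(\Theta),C((\Theta,w))}\ :\ \phi\in C((\Theta,w)),\ \norm{\phi}_\infty\leq 1\big\}.\]
Each functional $\nu\mapsto\langle\nu,\phi\rangle$ is weak-* continuous, so $\norm{\cdot}_{TV}$, as a pointwise supremum of such functionals, is weak-* lower semicontinuous; in particular $\liminf_{n}\norm{\nu_n}_{TV}\geq\norm{\nu}_{TV}$. Combining this with the convergence of the data term and the elementary inequality $\liminf(a_n+b_n)\geq\lim a_n+\liminf b_n$ (valid when $\lim a_n$ exists), together with $\zeta\geq 0$, yields $\liminf_{n\to\infty}J_\alpha(\nu_n)\geq J_\alpha(\nu)$, which is the claim.

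The argument is short, and the only delicate point is the use of weak continuity of $\psi(u_k,\cdot)$ on $(\Theta,w)$; but this is exactly the Lemma already proven in \Cref{parametrization} --- resting on the $H^\alpha$-regularity of the first and third parameters together with the compactness of the convolution operator $(\cdot)*u_k$ --- so I do not expect a genuine obstacle, and no further convergence machinery from \Cref{discrete-continuum-erm-convergence} or \Cref{gamma} is needed.
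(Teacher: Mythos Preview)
Your proposal is correct and follows essentially the same route as the paper: both arguments show that the data term is weak-* sequentially continuous via $\psi(u_k,\cdot)\in C((\Theta,w))$ and the Riesz--Markov pairing, and that the total-variation term is weak-* lower semicontinuous, then combine the two. Your presentation is slightly tidier in that you spell out the dual-norm supremum argument for the TV-term, whereas the paper simply invokes the lower semicontinuity of the norm; but the logical content is identical.
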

\begin{proof}
    We have to show $$\liminf\limits_{j\to\infty} J_{\alpha}(\nu_j)\geq J_{\alpha}(\nu)$$ for any weak-* convergent sequence $\nu_j \wsto \nu$, where $\nu_j, \nu\in \M(\Theta)$ for all $j\in \N$.
We observe by the non-negativity of all summands in $J_{\alpha}(\cdot)$ and the lower semi-continuity of the total variation norm, and the continuity of the loss function, that 
\begin{align*}
    \liminf\limits_{j\to\infty} J_{\alpha}(\nu_j)&=\liminf\limits_{j\to\infty}\frac{1}{l}\sum\limits_{k=1}^l \ell(f_{\nu_j}(u_k), y_k) + \zeta\norm{\nu_j}_{TV} \\& \geq \frac{1}{l}\sum\limits_{k=1}^l \liminf\limits_{j\to\infty} \ell(f_{\nu_j}(u_k), y_k) + \liminf\limits_{j\to\infty} \zeta\norm{\nu_j}_{TV} \\&\geq \frac{1}{l}\sum\limits_{k=1}^l  \ell(\liminf\limits_{j\to\infty}f_{\nu_j}(u_k), y_k) + \zeta\norm{\nu}_{TV}.
\end{align*}

Recall, that $f_{\rho}$ is of the form $f_{\rho}(u) = \int_{\Theta}\psi(u, \theta)d\rho(\theta)$, so since $\psi(u, \cdot)\in C(\Theta,w)$ for all $u\in L^2(\M, \mu)$ we can once again use Riesz-Markov and observe
$$\ell(\liminf\limits_{j\to\infty}f_{\nu_j}(u_k), y_k) = \ell(\lim\limits_{j\to\infty}f_{\nu_j}(u_k), y_k) = \ell(f_{\nu}(u_k), y_k),$$\vskip -1.2em
since then \vspace{-0.5em}\[f_{\nu_j}(u_k)= \int_{\Theta}\psi(u_k, \theta)d\nu_j(\theta) = \langle \nu_j, \psi(u_k, \cdot)\rangle \xrightarrow{j\to\infty} \langle \nu, \psi(u_k, \cdot)\rangle = f_\nu(u_k).\]
This yields sequential lower semi-continuity of $J_{\alpha}$ w.r.t. the weak-* topology.
\end{proof}

\begin{thm}\label{thm:existence-of-minimizers-restriction-to-balls}
    For all $\zeta\geq 0$ and any $M>0$ the restricted functional $J_{\alpha}|_{B_{\M(\Theta)}(0,M)}$ admits a minimizer. 
\end{thm}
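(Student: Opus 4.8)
The plan is to run the direct method of the calculus of variations, pairing the weak-* lower semicontinuity established in \Cref{thm:erm-lower-semicontinuous} with weak-* sequential compactness of the constraint ball $B_{\M(\Theta)}(0,M)$.

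First I would record the requisite compactness. As already noted in \Cref{topologies-weak-strong-nbhds}, the space $(\Theta, w)$ is compact and metrizable: each of its three factors is the weakly compact unit ball of a separable Hilbert space ($H^\alpha(\M,\mu)$ or $L^2(\M,\mu)$), on which the weak topology is metrizable \cite[Thm.~3.28]{Bre11}, and the finite product carries exactly the product of these weak topologies. Consequently $C((\Theta,w)) = C_0((\Theta,w))$ is a separable Banach space, and under the Riesz–Markov identification $(C((\Theta,w)))^* \cong \M(\Theta)$ recorded in \Cref{parametrization}, the norm-closed ball $B_{\M(\Theta)}(0,M)$ is weak-* sequentially compact by the sequential form of the Banach–Alaoglu theorem valid for duals of separable spaces. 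Moreover, since the total variation norm is weak-* lower semicontinuous, $B_{\M(\Theta)}(0,M) = \{\nu \in \M(\Theta) \mid \norm{\nu}_{TV} \leq M\}$ is in addition weak-* closed, so any weak-* limit of a sequence drawn from it lies again in the ball.

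Now fix $\zeta \geq 0$ and $M > 0$. The infimum of $J_\alpha$ over $B_{\M(\Theta)}(0,M)$ is finite: it is nonnegative since $\ell \geq 0$ and $\zeta\norm{\cdot}_{TV} \geq 0$, and it is bounded above by $J_\alpha(0) = \frac{1}{l}\sum_{k=1}^l \ell(0,y_k) < \infty$, with $0 \in B_{\M(\Theta)}(0,M)$. Take a minimizing sequence $\{\nu_j\}_j \subset B_{\M(\Theta)}(0,M)$; by the compactness just discussed there is a subsequence $\nu_{j_i} \xrightharpoonup{*} \nu$ with $\nu \in B_{\M(\Theta)}(0,M)$, and \Cref{thm:erm-lower-semicontinuous} then yields
\[
J_\alpha(\nu) \;\leq\; \liminf_{i\to\infty} J_\alpha(\nu_{j_i}) \;=\; \inf_{B_{\M(\Theta)}(0,M)} J_\alpha,
\]
so $\nu$ is a minimizer of the restricted functional. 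I do not anticipate a genuine obstacle here; the only slightly delicate point is the metrizability/separability that underpins the sequential Banach–Alaoglu step, but this is precisely the purpose of choosing $\Theta$ as a weakly compact product of unit balls and is already available from \Cref{topologies-weak-strong-nbhds}. (One could equally avoid the sequential formulation and argue with nets using the plain Banach–Alaoglu theorem, since on this metrizable-ball setting net and sequential lower semicontinuity of $J_\alpha$ coincide.)
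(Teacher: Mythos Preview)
Your proof is correct and follows essentially the same approach as the paper: the direct method, combining weak-* sequential compactness of $B_{\M(\Theta)}(0,M)$ with the lower semicontinuity from \Cref{thm:erm-lower-semicontinuous}. You are actually a bit more careful than the paper in justifying why Banach--Alaoglu gives \emph{sequential} compactness here (via separability of $C((\Theta,w))$), a point the paper glosses over.
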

\begin{proof}
By definition $J_{\alpha}$ is bounded from below, since both the fidelity term with the loss function and the regularizing term are nonnegative. Hence, so is its restriction to the ball $B_{\M(\Theta)}(0,M)$, so there exists a minimizing sequence $\{\rho_i\}_i\subset B_{\M(\Theta)}(0,M)$ for the (regularized) ERM functional $J_{\alpha}$, meaning \vskip -1.5em\[J_{\alpha}(\rho_i)\xrightarrow{i\to\infty} \inf\{J_{\alpha}(\rho)\mid \rho \in B_{\M(\Theta)}(0,M)\}.\]
We want to show that this minimizing sequence admits a convergent subsequence.
Banach-Alaoglu yields weak-* compactness of the ball $B_{\M(\Theta)}(0,M)$, and since $\{\rho_i\}_i\subset B_{\M(\Theta)}(0,M)$, we can extract a weak-* converging subsequence $\{\rho_{i_j}\}_j\subset \{\rho_{i}\}_i$ with limit $\rho_\infty\in B_M(\M(\Theta))$, i.e. $\rho_{i_j}\wsto\rho_\infty$ for  $j\to\infty.$
In other words\vskip -1em\[\langle \rho_{i_j}, \psi\rangle_{\M(\Theta), C(\Theta)}\xrightarrow{j\to\infty} \langle \rho_\infty, \psi\rangle_{\M(\Theta), C(\Theta)} \quad \textrm{for all } \psi\in C(\Theta).\] 
Note, that this is\vskip -1.5em \[\int_{\Theta}\psi(\theta)d\rho_{i_j}(\theta) \xrightarrow{j\to\infty}\int_{\Theta}\psi(\theta)d\rho_\infty(\theta) \quad\textrm{for all }\psi\in C(\Theta).\]
Furthermore $J_\alpha$ is lower-semicontinuous by the previous \Cref{thm:erm-lower-semicontinuous}, i.e. \[\inf\{J_{\alpha}(\rho)\mid \rho \in B_{\M(\Theta)}(0,M)\} = \lim_{j\to\infty}J(\rho_{i_j})\geq J(\rho_\infty).\]
Hence $\rho_\infty\in B_M(\M(\Theta))$ minimizes the ERM functional on the unit ball of the Radon measures on the parameter space. 
\end{proof}

\begin{thm}
    If $\zeta>0$, then $J_{\alpha}$ admits a minimizer. 
\end{thm}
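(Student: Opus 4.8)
The plan is to exploit the regularization term to obtain coercivity and thereby reduce to the already-established existence of minimizers over balls in \Cref{thm:existence-of-minimizers-restriction-to-balls}. Lower semicontinuity is not an issue here since it has been handled in \Cref{thm:erm-lower-semicontinuous} and is used through \Cref{thm:existence-of-minimizers-restriction-to-balls}; the only work is to confine a minimizing sequence to a fixed ball.

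First I would record that $J_\alpha$ is bounded below, since both the averaged loss term and $\zeta\norm{\nu}_{TV}$ are nonnegative, and that its infimum is finite: for instance $J_\alpha(0) = \frac{1}{l}\sum_{k=1}^l \ell(0,y_k) < +\infty$ because $f_0\equiv 0$ and $\ell$ is continuous, hence finite-valued. Write $m := \inf_{\nu\in\M(\Theta)} J_\alpha(\nu)\in[0,\infty)$ and choose a minimizing sequence $\{\rho_i\}_i\subset\M(\Theta)$ with $J_\alpha(\rho_i)\to m$.

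Next comes the coercivity step, which is where $\zeta>0$ enters. For all $i$ large enough we have $J_\alpha(\rho_i)\le m+1$, and since the loss term is nonnegative this forces $\zeta\norm{\rho_i}_{TV}\le J_\alpha(\rho_i)\le m+1$, hence $\norm{\rho_i}_{TV}\le M := (m+1)/\zeta$ for all large $i$. Discarding finitely many initial terms, we may assume the entire minimizing sequence lies in $B_{\M(\Theta)}(0,M)$, and therefore $\inf_{\nu\in B_{\M(\Theta)}(0,M)} J_\alpha(\nu) = m$.

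Finally, \Cref{thm:existence-of-minimizers-restriction-to-balls} applied with this $M$ yields a minimizer $\rho_\infty\in B_{\M(\Theta)}(0,M)$ of the restricted functional, so that $J_\alpha(\rho_\infty) = \inf_{B_{\M(\Theta)}(0,M)} J_\alpha = m = \inf_{\M(\Theta)} J_\alpha$, i.e. $\rho_\infty$ is a global minimizer of $J_\alpha$. I do not expect any genuine obstacle: this is the standard ``coercivity plus lower semicontinuity'' argument, and the single point needing a line of care is the identity between the infimum over all of $\M(\Theta)$ and the infimum over the ball $B_{\M(\Theta)}(0,M)$, which is precisely what the bound on the minimizing sequence provides.
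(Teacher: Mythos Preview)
Your proposal is correct and follows essentially the same approach as the paper: use the nonnegativity of the loss plus $\zeta>0$ to bound a minimizing sequence in total variation, and then conclude via weak-* compactness and lower semicontinuity. The only cosmetic difference is that you invoke \Cref{thm:existence-of-minimizers-restriction-to-balls} to package the compactness-plus-lower-semicontinuity step, whereas the paper repeats that step inline by extracting a weak-* convergent subsequence and applying \Cref{thm:erm-lower-semicontinuous} directly.
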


\begin{proof}
Since $J_\alpha$ is bounded from below it admits a minimizing sequence $\{\rho_i\}_i\subset \M(\Theta)$ for the (regularized) ERM functional $J_{\alpha}$, meaning \vskip -1em\[J_{\alpha}(\rho_i)\xrightarrow{i\to\infty} \inf\{J_{\alpha}(\rho)\mid \rho \in \M(\Theta)\}.\]
By construction, $J$ is bounded from below, hence $\{\rho_i\}_i$ being a minimizing sequence implies boundedness of the sequence $\{J(\rho_i\}_i$.
Since $J_{\alpha}$ is coercive due to the regularizing term, any minimizing sequence $\{\rho_i\}_i$ of $J_{\alpha}$ is bounded, so there is a $M\geq 0$ such that $\{\rho_i\}_i\subset B_{\M(\Theta)}(0,M)$, which is weak-* compact, so there is a subsequence $\{\rho_{i_j}\}_j$ and a $\rho_\infty\in B_{\M(\Theta)}(0,M)$, such that $\rho_{i_j}\xrightharpoonup[j\to\infty]{*}\rho_\infty$. Hence the lower-semicontinuity of $J_\alpha$ (\Cref{thm:erm-lower-semicontinuous} yields \[\inf\{J_{\alpha}(\rho)\mid \rho \in \M(\Theta)\} = \lim_{j\to\infty}J_\alpha(\rho_{i_j})\geq J(\rho_\infty),\] so $\rho_\infty$ is a minimizer of $J_\alpha$.
\end{proof}

\section{Discussion}\label{discussion}

\subsection{On the assumptions on the manifold}\label{manifold-ass}
In this paper we imposed an assumption on the asymptotics of the spectral gaps of the manifold, namely at most polynomial decay. Here, we want to illustrate the delicacy of spectral gap behavior with respect to properties with no intuitive geometric description, such as arithmetic considerations for resonances.

The manifold assumption in geometric machine learning methods builds on possibly leveraging two aspects, a lower intrinsic dimension of the data and also possibly symmetries, which can be correlated with higher multiplicities. As we have seen in \Cref{spectral-approximation}, the appearance of such multiplicities makes the approximation of eigenspaces more delicate than that of eigenvalues: 
Multiplicities of graph Laplacians need not match those of the Laplace–Beltrami operator, a manifold eigenvalue is approximated by clusters of nearby graph eigenvalues, and the associated combined discrete eigenspaces are expected to converge to the corresponding continuum eigenspace.
Furthermore, independent of multiplicities, distinct eigenvalues of the manifold can potentially get arbitrarily close, influencing the ability to distinguish clusters of eigenvalues from the rest of the spectrum, which is required to compare graph eigenfunctions corresponding to the direct sum of the eigenspaces associated to the graph eigenvalue cluster approximating a continuum eigenvalue to the continuum eigenspace. Hence, the error between the interpolation of graph eigenfunctions and  suitably aligned corresponding continuum eigenfunctions is controlled by the inverse of the gap between this manifold eigenvalue and its distinct neighbors. 

We would like to point out that besides the spectrum being discrete and unbounded, even unbounded rapidly growing multiplicities can occur together with arbitrarily small spectral gaps, due to resonances.
To this end we would like to give an example of how small changes in the Riemannian metric can have a large impact on the properties of the spectrum.

\par\textbf{Products of spheres.} Consider the manifold $\mathbb{S}^2\times a\mathbb{S}^2$, where $a\in\R$.
If $\{k_i\}_{i\in\N}$ enumerate the first eigenvalue of each multiplicity block of $\mathbb{S}^2$, then for $k\in \{k_i, \ldots k_{i+1}-1\}$ the $k$-th eigenvalue of $\mathbb{S}^2$ is given by $\lambda_k =\lambda_{k_i} = (i-1)i$ \cite[p.~35]{Chavel:1984}, with multiplicity \cite[p.~140]{Stein:1971} \[m(\lambda_k)= m(\lambda_{k_i})= \binom{i+1}{i-1} - \binom{i-1}{i-3} = \frac{(i+1)i - (i-1)(i-2)}{2} = 2i-1,\] for $i\geq 3$.
Furthermore, if we let $\{l_j\}_{j\in\N}$ enumerate the first eigenvalue of each multiplicity block of $a\mathbb{S}^2$, then for $l\in \{l_j, \ldots l_{j+1}-1\}$ the $l$-th eigenvalue of $a\mathbb{S}^2$ is given by $\lambda^{(a)}_l =\lambda^{(a)}_{l_j} = (j-1)ja^{-2}$ \cite[p.~35]{Chavel:1984}. 

For product manifolds $\M_1\times\M_2$ it holds that eigenvalues $\lambda$ of the Laplacian on $\M_1\times \M_2$ can be written in the form $\lambda=\lambda^{(1)}+\lambda^{(2)}$ where $\lambda^{(o)}$ is an eigenvalue of the Laplacian on $\M_o$, $o=1,2$ \cite[pp.~26-27]{Chavel:1984}, so in particular we see for the multiplicity of $\lambda$ that $m(\lambda)\geq m(\lambda^{(1)})m(\lambda^{(2)})\geq m(\lambda^{(o)})$ for $o=1,2$. 
So $\{\lambda_{k}+\lambda^{(a)}_{l}\}_{k,l\in\N}$ is the spectrum of $\mathbb{S}^2\times a\mathbb{S}^2$ where the multiplicity of $\lambda_{k}+\lambda^{(a)}_{l}$, $l\in\N$, $k\in\{k_i, \ldots, k_{i+1}-1\}$ is larger or equal to $2i-1$ for $i\geq 3$. 
On the other hand, we then see that the gap between the two eigenvalues $\lambda_{k_{i+1}}+ \lambda^{(a)}_{l_{j}}$ and $\lambda_{k_i}+\lambda^{(a)}_{l_{j+1
}}$, where $i,j\in\N$ can be calculated as \[|\lambda_{k_{i+1}}+ \lambda^{(a)}_{l_{j}}- \lambda_{k_{i}}-\lambda^{(a)}_{l_{j+1}}| = |i(i+1)-i(i-1)+ a^{-2}j(j-1)- a^{-2}j(j+1)| = |2i - 2ja^{-2}|.\]
If $a$ is such that $a^2\in\Q$, then there are $p,q\in\Z$ such that $a^2=p/q$, hence we obtain
\[|2i-2ja^{-2}|= 2\left|\frac{pi-qj}{p}\right|= 2\frac{|pi-qj|}{p},\] which is either $0$ or greater or equal than $2/p$, since $pi,qj\in\Z$.
Since furthermore \[|\lambda^{(a)}_{l_{j}}-\lambda^{(a)}_{l_{j+1
}}|= 2a^{-2}j\xrightarrow{j\to\infty}+\infty, \quad\text{ and }\quad  |\lambda_{k_{i}}-\lambda_{k_{i+1}}|=2i\xrightarrow{i\to\infty}+\infty,\] the spectral gaps for $\mathbb{S}^2\times a\mathbb{S}^2$ are uniformly bounded away from $0$.

However, in case $a^2\notin\Q$, Diophantine approximation \cite[pp.~1-3]{Cassels:1957} yields 
that there are infinitely many $i,j\in\N$ such that $0<|a^{-2} -\frac{i}{j}|< \frac{1}{j^2}$, hence $0<|2i- 2ja^{-2}|< \frac{2}{j}$, and therefore in particular for every $\eps>0$ one can find $i,j\in\N$ such that $0<|2i- 2ja^{-2}|< \eps$, i.e. so that the eigenvalues $\lambda_{k_{i+1}}+\lambda^{(a)}_{l_j}$ and $\lambda_{k_{i}}+ \lambda^{(a)}_{l_{j+1}}$ are distinct, but the gap between them is smaller than $\eps$. But this means, that for every $\eps>0$, we can find an eigenvalue $\lambda$ of $\mathbb{S}^2\times a\mathbb{S}^2$ whose gap to its distinct neighboring eigenvalues is smaller than $\eps$.

Hence, this provides examples of manifolds with rapidly growing unbounded multiplicities, where seemingly small perturbations of the metric can lead to resonances arising from arithmetic relations among the eigenvalues, that drastically change the spectral properties of the manifold, from spectral gaps being uniformly bounded away from $0$ to possibly getting arbitrarily small. The smallness of gaps in this case cannot be inferred from general geometric properties such as curvature bounds, or symmetry, but is instead related to arithmetic resonances of the spectrum.

\subsection{On the assumption of Sobolev regularity on the parameters}\label{regularity}
To establish a formal notion of training convergence for shallow GCNNs, we adopt an infinite-width formulation where the parametrization is represented by measures over the parameter space. To understand such measures as elements of the dual of the space of neural response maps, we require these maps to be continuous with respect to a topology that renders their domain, the parameter space, compact, as was explicated in \Cref{parametrization}. While this is easily satisfied in the finite-dimensional setting, the situation is more involved in the infinite-dimensional setting. However, we observed that in the infinite-dimensional setting the convolution term always behaves nicely in that sense and since the activation function is assumed to be Lipschitz-continuous, the neural response maps are always continuous in the parameter variables with respect to the norm topology on $L^2$. Hence the critical parameters are the first and last. A way to achieve the competing requirements of continuity and compactness in the continuum setting for the first and last parameter is to let them be in the unit ball of a space that is compactly embedded in $L^2(\M, \mu)$, and equip the parameter space with the weak topology. Using this strategy, a natural choice of $\Theta$ is therefore the product of the balls $B_{H^\alpha}\times B_{{L^2}}\times B_{H^\alpha}$, each equipped with its respective weak topology, and with $\alpha>0$.

Naturally, such a choice induces a restriction on the allowed parameters and learned functions already at the discrete level. To give some interpretation of such restrictions, one can consider the parameters $(a_n, b_n, c_n)\in (\R^n)^3$ for a single discrete neuron, and notice that in each of them the $i$-th component represents the contribution of node $x_i$ to the parameter’s associated signal operation. Since $b_n$ acts through a convolution, it's most naturally seen as a spectral filter on the graph signal $u$, which can heighten or dampen certain frequency components of the input signal $u$. Then $c_n$ can be seen as a pointwise shift of the values of the filtered signal, emphasizing or suppressing the contributions of the filtered input signal per node. After that, the activation function is applied node-wise and outputs how much every node of the graph is activated using the spectrally filtered and pointwise shifted signal. Finally $a_n$, corresponding to a linear outer layer, acts as a spatial readout that aggregates how much of every node should contribute to the final scalar output. With this interpretation, we see that the $H^\alpha$-regularity is imposed on the two parameters $a_n$ and $c_n$ directly involved in pointwise operations. In consequence, the regularity assumption forces these pointwise responses to not be too localized, which would make it more likely that that specific part of the learned response is strongly dependent on the resolution or realization of the random graph. Moreover, we point to the condition $\eps_n \ll h_n$ which is imposed even in a quantitative form in \eqref{sequence-conditions}, and also implies that sharp differences of values on very close vertices cannot be relevant for any operation built using the edge weights of \eqref{graph-weights-eta}.

\subsection{On the expressivity of continuum GCNNs}\label{expressivity}
Here we would like to point out that even in the continuum setting and without smoothness restrictions in the parameters, a single neuron cannot detect components of arbitrarily high frequency in its inputs. To see this, note that for orthonormal eigenfunctions $\phi_k\in L^2(\M, \mu)$ it holds for all $\theta=(a,b,c)\in L^2(\M, \mu)^3$ \begin{align*}
|\psi(\phi_k, \theta)-\psi(0,\theta)|&=|\psi(\phi_k, \theta)-\langle a, \sigma(c)\rangle| = |\langle a, \sigma(b*\phi_k+c) -\sigma(c)\rangle| \\&\leq \norm{a}_{L^2}L_\sigma \norm{b*\phi_k +c-c}_{L^2} = \norm{a}_{L^2}L_\sigma \norm{b*\phi_k}_{L^2}
\\&= \norm{a}_{L^2}L_\sigma |\langle b, \phi_k\rangle| \xrightarrow{k\to\infty} 0.\end{align*}
Let $g:L^2(\M, \mu)^3 \to\R$ be given by $(a,b,c)=\theta\mapsto g(\theta)= L_\sigma \norm{a}_{L^2}\norm{b}_{L^2}$. Then \[|\psi(\phi_k, \theta)-\langle a, \sigma(c)\rangle| \leq L_\sigma \norm{a}_{L^2}\norm{b*\phi_k}_{L^2}\leq g(\theta).\]
Thus by the dominated convergence theorem for any subset $\mathcal{V}\subseteq L^2(\M, \mu)^3$ and $\nu\in \M(\mathcal{V})$ s.t. $\int_\mathcal{V}g(\theta)d\nu(\theta)<\infty$, $\int_\mathcal{V} \psi(\phi_k, \theta)d\nu(\theta) <\infty$ for all eigenfunctions $\phi_k$, $k\in \N$, and $\int_\mathcal{V} \psi(0, \theta)d\nu(\theta) <\infty$ it is \begin{align*}\left|\int_\mathcal{V} \psi(\phi_k, \theta)d\nu(\theta) - \int_\mathcal{V}\langle a, \sigma(c)\rangle d\nu(\theta)\right| &= \left|\int_\mathcal{V} \psi(\phi_k, \theta) - \langle a, \sigma(c)\rangle d\nu(\theta)\right|\\&\leq \int_\mathcal{V} \left|\psi(\phi_k, \theta) - \langle a, \sigma(c)\rangle \right| d\nu(\theta)\xrightarrow{k\to\infty}0.\end{align*}
Thus $$f_\nu(\phi_k)\xrightarrow{k\to\infty}\int_{\mathcal{V}}\langle a, \sigma(c)\rangle d\nu(\theta) = C_\nu.$$

Since $\psi(u, \cdot)$ and $\psi(\cdot, \theta)$ are both Lipschitz for all $u\in L^2(\M, \mu)$, $\theta\in\Theta$ it holds that
\begin{align*}
    \left|\psi\left(\sum_{k=K}^\infty \langle u, \phi_k\rangle \phi_k, 
\theta\right) - \psi(0, \theta)\right| 
    &\leq L_\sigma \norm{a}_{L^2}\norm{\sum_{k=K}^\infty \langle u, \phi_k\rangle \langle b, \phi_k\rangle \phi_k}_{L^2} \\
    &\leq L_\sigma \norm{a}_{L^2}\sup_{k\geq K}|\langle b, \phi_k\rangle|\norm{u}_{L^2}\xrightarrow{K\to\infty} 0.
\end{align*}
So again dominated convergence yields $|f_\nu\left(\sum_{k=K}^\infty \langle u, \phi_k\rangle \phi_k\right) - C_\nu| \xrightarrow{K\to\infty} 0,$ so for input signals $u$ that consist only of high frequency components, the convolutional network $f_\nu$ for parametrization $\nu\in\M(\mathcal{V})$ will output approximately the same value $C_\nu$.

\subsection{Parametrization with probability measures instead of signed measures}\label{probability-measures}
We have chosen one particular way to parametrize the networks with signed measures in all parameters, which particularly in the case of activation functions such as ReLU introduces redundancies, which gives rise to many different normalization procedures \cite{Weinan:2022}, in particular some leading to working with probability measures. That is also the natural setting for working with Wasserstein gradient flows reflecting first-order optimization on the parameters, as detailed in the next subsection. To this end, define the unreguralized continuum ERM functional 
\[E(\nu) := \frac{1}{l}\sum\limits_{k=1}^l \ell(f_\nu(u_k), y_k) \quad\text{ for }\nu\in\M(\Theta),\]
and the unreguralized discrete ERM functional $E_n:\M(\Theta)\to\R^+\cup\{+\infty\}$ as
\begin{align*}
    E_n(\nu)=\begin{cases}
        \frac{1}{l}\sum\limits_{k=1}^l \ell(f_{(\mQ_{n,\alpha})_\# \nu}(S_nu_k), y_k),& \nu\in \M(\Theta)_n,\\
        +\infty,& \text{otherwise.}
    \end{cases}
\end{align*}

Note that the subset of Radon probability measures $\cP(\Theta)\subset \M(\Theta)$ is weakly-$*$ closed in the space of finite Radon measures. Thus the liminf inequality still holds if we restrict ourselves to Radon probability measures.

Moreover, for every $\nu\in \cP(\Theta)$, we can also find a recovery sequence $\{\nu_n\}_n\subset \cP(\Theta)$, i.e. s.t. $\nu_n\xrightharpoonup[n\to\infty]{*}\nu$ and $E(\nu)\geq \limsup\limits_{n\to\infty}E_n(\nu_n)$. Namely, for $\nu\in \cP(\Theta)$ and $n\in\N$, we let ${\nu}_n:=(\mQ_n^*\mQ_n)_\#\nu$. Using the same arguments as for showing the existence of a recovery sequence in the proof for \Cref{gamma-convergence}, we have ${\nu}_n\xrightharpoonup[n\to\infty]{*}\nu$ in $\M(\Theta)$ and by definition the pushforward preserves mass and positivity, hence $\nu_n\in \cP(\Theta)$ as well.

\subsection{Outlook}\label{outlook}

One direction for future work would be to consider function outputs instead of scalars, which in combination with the continuum limit would provide a rigorous formalization of an architecture for operator learning on manifolds. One such approach, utilizing the existing neural response map $\psi_n$ would be the approach with measure-valued measures introduced in \cite{DumHerIgl25}. In particular, it enables a hypernetwork setup of networks which map any given input into another network (parametrized in the same form), which is trained as a whole using observation operators with finite-dimensional range. Convergence of global minimizers of the training problems on measures, as treated here, would require making concrete the meaning of consistent data in that setting. In this context, it would be even more important to understand which kind of nonlinear operators can be approximated with such a procedure. To that end, the use of Sobolev spaces for parameters as done here makes interpretable characterizations that might be related to PDE solution operators more likely.

Another direction would be more realistic settings for training. While the convex formulation over measures is very convenient, it fails to reflect many additional aspects appearing when training with a procedure involving gradient descent on the parameters themselves. On the one hand, the global minimum may not be attained. On the other, overparametrized situations in which there might be many global minimizers are also not accurately reflected since the purely energy-based approach cannot distinguish which such minimizer is more likely to be reached. Similarly to the way in which working with measures allows for both finite and infinite-width instances simultaneously, formulating the training as a Wasserstein gradient flow on a space of parameter probability measures as in \cite{ChiBac18} makes optimization insights applicable also in our setting. A natural question to ask would be whether the discrete trajectories converge to the continuum ones under similar data consistency assumptions as the ones we have used here.

\bibliographystyle{plain}
\bibliography{gcnn-manifold}

\end{document}